\newtheorem{proposition}{Proposition}%
\begin{document}

\title[Article Title]{Generic Calibration: Pose Ambiguity/Linear Solution and Parametric-hybrid Pipeline}


\author[1]{\fnm{Yuqi} \sur{Han}}\email{hanyuqi@sjtu.edu.cn}
\equalcont{These authors contributed equally to this work.}

\author[1]{\fnm{Qi} \sur{Cai}}\email{qicaicn@gmail.com}
\equalcont{These authors contributed equally to this work.} 

\author*[1]{\fnm{Yuanxin} \sur{Wu}}\email{yuanx\_wu@hotmail.com}

\affil*[1]{\orgdiv{Shanghai Key Laboratory of Navigation and Location-based Services, School of Automation and Intelligent Sensing}, \orgname{Shanghai Jiao Tong University}, \orgaddress{\city{Shanghai}, \postcode{200240}, \country{China}}}








\abstract{Offline camera calibration techniques typically employ parametric or generic camera models. Selecting parametric models relies heavily on user experience, and an inappropriate camera model can significantly affect calibration accuracy. Meanwhile, generic calibration methods involve complex procedures and cannot provide traditional intrinsic parameters. This paper reveals a pose ambiguity in the pose solutions of generic calibration methods that irreversibly impacts subsequent pose estimation. A linear solver and a nonlinear optimization are proposed to address this ambiguity issue. Then a global optimization hybrid calibration method is introduced to integrate generic and parametric models together, which improves extrinsic parameter accuracy of generic calibration and mitigates overfitting and numerical instability in parametric calibration. Simulation and real-world experimental results demonstrate that the generic-parametric hybrid calibration method consistently excels across various lens types and noise contamination, hopefully serving as a reliable and accurate solution for camera calibration in complex scenarios.}

\keywords{Camera Calibration, Parametric Camera Model, Generic Camera Model, Pose Ambiguity}



\maketitle

\section{Introduction}\label{sec1}

As a foundational task in visual computing, camera calibration establishes the relationship between 3D points in camera coordinates and their corresponding 2D image points, and significantly influences all subsequent computational processes \cite{Schops2020}. Consequently, the accuracy and robustness of camera calibration are crucial for 3D vision computations.

Camera models are generally classified into parametric \cite{Zhang2000,Brown1971,Kannala2006,Scaramuzza2006,Urban2015,Mei2007,Usenko2018,Khomutenko2016} and non-parametric \cite{Pan2022,Schops2020,Dunne2007Efficient,Rosebrock2012Generic,Rosebrock2012Complete} types. Parametric models, characterized by concise mathematical representations, are less susceptible to variations in coverage, density, and detection accuracy of image corner points \cite{Hagemann2022} and thus have been implemented in popular platforms such as OpenCV and MATLAB. Parametric calibration methods sequentially consist of selecting a model, estimating initial extrinsic parameters, and minimizing reprojection error on the image plane by Bundle Adjustment (BA) for intrinsic and extrinsic parameters. The performance of parametric methods largely depends on the compatibility between the camera model and the actual lens \cite{Hagemann2022}. Although the incompatibility can be remedied by high-order polynomials or complex functions, they complicate BA and lead to overfitting and numerical instability \cite{Guoqing1994,Hartley2007,Lochman2021}. Expertise in the field is necessary to select suitable models for mitigating systematic errors and instability. Calibration results may vary significantly across different image sets even from a fixed-focus camera \cite{Hagemann2022}. A number of methods have been developed to evaluate the impact of different parametric models on calibration quality and to investigate model selection strategies \cite{Tang2017,Larsson2019,Polic2020}. For example,  the stability of parametric calibration methods was assessed in \cite{Davison1997,Hartley2003,Beck2018,Hagemann2022} and statistical guidance was used to assist users in capturing optimal images for enhanced calibration reliability \cite{Rojtberg2018,Peng2019,Ren2021}.

In recent years, non-parametric models, particularly generic camera models, have gained significant attention due to their greater parameter flexibility. Generic camera models can provide more accurate and unbiased estimation for a variety of cameras \cite{Schops2020}, particularly for highly distorted cameras \cite{Dunne2007comparison}. Generic calibration methods do not rely on any specific parametric model. Instead, they use lookup tables to establish the mapping between control point image coordinates and observed rays, and employ interpolation methods such as B-splines \cite{Beck2018} to estimate rays for non-control points. However, the calibration pipeline and representation of intrinsic parameters in generic calibration are relatively complex. In 2020, Schöps \cite{Schops2020} released the first open-source generic calibration toolbox, which involves complex polynomial calculations and relies on iterative incremental optimization.

Through this work, we attempt to combine advantages of both parametric and generic calibration methods. The main technical contributions include: 
\begin{itemize}
\item{Identifying a pose ambiguity problem in generic calibration, analyzing its impact on multi-view pose estimation, and proposing a linear solver and a nonlinear optimization to resolve it, thereby enabling the accurate application of generic calibration in subsequent applications such as Structure-from-Motion (SfM), Simultaneous Localization and Mapping (SLAM), and Perspective-n-Point (PnP)}.
\item{Contributing a pose graph-based global optimization approach for solving extrinsic parameters with reduced complexity. Proposing a hybrid calibration framework that integrates the flexibility of generic calibration into parametric calibration to prevent mutual compensation.}
\item{Evaluating, against a reference ground truth for the first time, the performance of calibration and reconstruction among various parametric models and generic camera models, providing guidance for model selection for different lenses and application scenarios.}
\end{itemize}

\section{Related Work}\label{sec2}
\subsection{Parametric Model Calibration}
Research on parametric camera calibration is extensive. In 1919, Conrady \cite{Conrady1919} introduced a distortion model incorporating both radial and tangential components. In 1966, Brown \cite{Brown1966} proposed the thin prism model and demonstrated its projective equivalence to Conrady's model. Numerous studies, based on Brown's model or assuming negligible camera distortions, improved calibration accuracy using methods such as plumb line techniques \cite{Brown1971}, vanishing point constraints \cite{Caprile1990}, and pure rotation constraints \cite{Hartley1994}. In 2000, Zhang \cite{Zhang2000} introduced a planar calibration pattern, which significantly improved calibration flexibility and accuracy, and was subsequently widely adopted by platforms such as OpenCV, MATLAB, and OpenMVG. Camera models with two or three radial distortion coefficients are commonly used for narrow and some wide-angle lenses, but are not suitable for fisheye lenses. Consequently, many researchers have focused on precise modeling of real fisheye lenses. Kannala and Brandt \cite{Kannala2006} proposed a set of models and corresponding calibration methods for fisheye and wide-angle cameras, which have been adopted by OpenCV and OpenMVG and later extended by subsequent studies \cite{Khomutenko2016}. Scaramuzza \cite{Scaramuzza2006} used the Taylor series expansion to describe the imaging function and performed nonlinear optimization based on the maximum likelihood criterion. Urban \cite{Urban2015} significantly improved the accuracy of Scaramuzza's calibration by replacing the residual function. The methods by Scaramuzza and Urban \cite{Scaramuzza2006,Urban2015} are now provided in the MATLAB platform. Mei \cite{Mei2007} introduced a calibration method for single-view omnidirectional cameras based on a  precise theoretical projection function.

With advancements in manufacturing processes, modern cameras are predominantly affected by radial distortion, while tangential distortion is often neglected or only considered during nonlinear optimization \cite{Larsson2019}. In 1987, Tsai \cite{Tsai1987} introduced a radial alignment constraint that only considered radial distortion. This constraint is typically used to estimate the camera's rotation and part of the translation components for self-calibration techniques and PnP. Larsson \cite{Larsson2019,Lochman2021} subsequently estimated the remaining translation parameters and intrinsic parameters based on various parametric models. Fitzgibbon \cite{Fitzgibbon2001} proposed a concise division model with a single radial distortion parameter in the denominator for self-calibration scenarios. Brito \cite{Brito2013,Brito2017} used the division model to implement a linear self-calibration method. Bujnak \cite{Bujnak2010} applied the division model to PnP problems with unknown intrinsics and proposed a minimal solver. Kukelova \cite{Kukelova2013,Kukelova2015} used the division model to propose minimal solvers for two-view imaging geometry with unknown camera intrinsics. Additionally, they \cite{Kukelova2013} employed a division model with three radial distortion parameters for PnP problems.

\subsection{Non-parametric Model Calibration}
Non-parametric camera calibration methods have garnered  significant attention in recent years. These models are primarily classified into one-dimensional (1D) and two-dimensional (2D) models. In particular, the 1D model initially determines all extrinsic parameters except for the third translation component using the radial alignment constraint. Subsequently, the remaining extrinsic parameters are identified without relying on any specific parameterized camera model. The 1D model typically establishes a one-dimensional mapping relationship between the radial distance of the image points and the normalized image coordinates. Hartley \cite{Hartley2007} assumed local linearity to estimate the radial distortion function, which makes it applicable to a wider range of cameras, from narrow-angle to fisheye lenses. Camposeco \cite{Camposeco2015} used an ordering constraint to find the translation of the camera center. Pan \cite{Pan2022} proposed an implicit distortion model and replaced the parametric model with a regularization term, which ensures smooth variation of the distortion mapping across the entire image.

To comprehensively capture the multi-dimensional characteristics of lens distortion, numerous studies have focused on the two-dimensional non-parametric model. Unlike the 1D model, the 2D model establishes a mapping relationship between two-dimensional image coordinates and two-dimensional observed rays, with control point information typically stored in lookup tables. The observed rays corresponding to image coordinates outside the control points are usually determined using B-spline surface models \cite{Miraldo2013,Beck2018,Rosebrock2012Generic}. This two-dimensional non-parametric model is often referred to as the generic camera model. In 2001, Grossberg and Nayar \cite{Grossberg2001} first introduced the generic camera model, which requires known extrinsic parameter information. Following this, Sturm and Ramalingam \cite{Sturm2004} proposed an extrinsic parameter initialization method and extended it to various types of lenses in \cite{Ramalingam2017}. Specifically, the work \cite{Ramalingam2017} requires the camera to be fixed while capturing the calibration pattern, ensuring that the observed rays corresponding to the same image point coordinates remain constant across different images. This method computes the camera's position and the relative pose between calibration patterns using the collinearity constraint of multiple 3D points along the same observed ray. The method by Sturm and Ramalingam \cite{Sturm2004} is often regarded as the standard generic calibration method. Schöps \cite{Schops2020} proposed a generic calibration pipeline using the standard generic calibration and open-sourced the first generic calibration toolbox. However, the initialization process of the standard generic calibration involves complex polynomial calculations \cite{Sturm2003,Ramalingam2006}. Dunne \cite{Dunne2007Efficient} introduced a linear extrinsic parameter estimation method by taking advantage of the geometric constraint derived from a single center of projection. This method reduces computational complexity and improves the accuracy and stability of initial values, although it calibrates only a portion of the image and requires the acquisition of active grids. Dunne's approach also requires the camera to be fixed during calibration. Rosebrock \cite{Rosebrock2012Generic,Rosebrock2012Complete} improved upon Dunne's method and expanded the calibration area to the entire image region. It is worth noting that both the standard generic calibration method and the linear extrinsic parameter estimation method utilize incremental optimization, necessitating multiple iterations that may introduce the risk of error accumulation. Brousseau \cite{Brousseau2019} treated each ray as an independent optimization variable and alternated optimization of the camera's intrinsic and extrinsic parameters. Nevertheless, the approach requires a greater number of pattern poses to overlap the same region in the image \cite{Bergamasco2017,Uhlig2020}.

\section{Generic Calibration Methods and Ambiguity Issues}\label{sec3}
\subsection{Generic Calibration Methods}\label{sec3.1}
Generic calibration methods include the standard generic calibration method \cite{Sturm2004, Ramalingam2017,Schops2020} and the linear extrinsic parameter estimation method \cite{Dunne2007Efficient,Rosebrock2012Generic,Rosebrock2012Complete}. Typically, these methods begin with initializing relative poses among several patterns, and then include additional patterns by incremental optimization. Subsequently, a lookup table that maps image coordinates to normalized image coordinates (or bearing vectors) is constructed using the B-spline surface, where the mapping represents the intrinsic parameters of the generic camera model. Finally, the intrinsic and extrinsic parameters are jointly optimized.

During the generic calibration, images of a calibration pattern placed in multiple distinct poses are captured with the camera fixed. For notational clarity and without causing confusion, we refer to the calibration pattern at the $i$-th pose as the ``$i$-th pattern'' throughout this paper. Let $R_i$ and $\mathbf{t}_i$ denote the transformation from the $i$-th pattern's coordinate system to the camera coordinate system. In this paper, we refer to these pattern-to-camera transformations as the extrinsic parameters, following the common convention in camera calibration literature, although they differ from the camera-to-world extrinsic parameters typically used in structure-from-motion contexts. The 3D point $\mathbf{X}_{il}=\left(X_{il}, Y_{il}, 0\right)^\top$, denoting the $l$-th point in the $i$-th pattern's coordinate system, can be expressed in the camera coordinate frame as:
\begin{equation}
\label{eq3}
\mathbf{X}_{il}^C=R_i \mathbf{X}_{il}+\mathbf{t}_i.
\end{equation}
Denote the corresponding normalized homogeneous coordinates as $\mathbf{x}_{il}=\left(x_{il}, y_{il}, 1\right)^\top$, and the distorted image coordinates as $\mathbf{u}_{il}=\left(u_{il}, v_{il}\right)^\top$.

Consider two different points $\mathbf{X}_{il}$ and $\mathbf{X}_{jp}$ that both project to the same $(u,v)$, as depicted in Fig. \ref{fig_1}. They are collinear with the optical center, i.e., $\mathbf{X}_{jp}^C=\lambda \mathbf{X}_{il}^C$, where $\lambda$ is a non-zero scalar. Using the matrix representation of Eq. \eqref{eq3}, we obtain
\begin{equation}
\label{eq16}
\begin{aligned}
\mathbf{X}_{il}^C &= \begin{pmatrix}\mathbf{r}_{i,1} & \mathbf{r}_{i,2} & \mathbf{t}_i\end{pmatrix}
\begin{pmatrix}X_{il}\\Y_{il}\\1\end{pmatrix}, \\
\mathbf{X}_{jp}^C &= \begin{pmatrix}\mathbf{r}_{j,1} & \mathbf{r}_{j,2} & \mathbf{t}_j\end{pmatrix}
\begin{pmatrix}X_{jp}\\Y_{jp}\\1\end{pmatrix},
\end{aligned}
\end{equation}
where $\mathbf{r}_{i,w}$ denotes the $w$-th column of $R_i$. The above collinearity implies a homography relationship between the points in their respective pattern coordinate systems:
\begin{equation}
\label{eq17}
\begin{pmatrix}
X_{jp} \\
Y_{jp} \\
1
\end{pmatrix} = \lambda\begin{pmatrix}\mathbf{r}_{j,1} & \mathbf{r}_{j,2} & \mathbf{t}_{j}\end{pmatrix}^{-1}
\begin{pmatrix}\mathbf{r}_{i,1} & \mathbf{r}_{i,2} & \mathbf{t}_{i}\end{pmatrix}
\begin{pmatrix}
X_{il} \\
Y_{il} \\
1
\end{pmatrix}.
\end{equation}
Defining $H_{ij}=\lambda\begin{pmatrix}\mathbf{r}_{j,1} & \mathbf{r}_{j,2} & \mathbf{t}_{j}\end{pmatrix}^{-1}
\begin{pmatrix}\mathbf{r}_{i,1} & \mathbf{r}_{i,2} & \mathbf{t}_{i}\end{pmatrix}$, Eq. \eqref{eq17} simplifies to
\begin{equation}
\label{eq18}
\begin{pmatrix}
X_{jp} \\
Y_{jp} \\
1
\end{pmatrix}=H_{ij}\begin{pmatrix}
X_{il} \\
Y_{il} \\
1
\end{pmatrix}.
\end{equation}
This homography $H_{ij}$ encodes the relative pose between the $i$-th pattern and the $j$-th pattern, exhibiting both invertibility and transitivity. As long as there is an overlap between the corresponding image regions of any two calibration patterns, a relationship as defined by Eq. \eqref{eq18} can be established between the two calibration patterns.

The linear extrinsic parameter estimation method \cite{Dunne2007Efficient,Rosebrock2012Generic,Rosebrock2012Complete} utilizes the homography relationship but introduces a specific reference frame. For example, let the $0$-th pattern be designated as the reference pattern, and correspondingly, the image captured when the calibration pattern is at this reference pose is termed the reference image. Denote the camera center in the reference pattern's coordinate system (i.e., $-R_0^\top \mathbf{t}_0$) as $\mathbf{O}$, and the transformation from the $i$-th pattern to the reference pattern as $(R_{i,0}, \mathbf{t}_{i,0})$. With this notation, the transformation from the $i$-th pattern to the camera is $R_i = R_0 R_{i,0}$ and $\mathbf{t}_i = R_0(\mathbf{t}_{i,0} - \mathbf{O})$. Substituting these specific forms, we obtain the specialized representation of Eq. \eqref{eq16}:
\begin{equation}
\label{eq4}
\begin{aligned}\mathbf{X}_{il}^C &=  R_0  \begin{pmatrix}    \mathbf{r}_{i,0,1} & \mathbf{r}_{i,0,2} & \mathbf{t}_{i,0}-\mathbf{O}  \end{pmatrix}  \begin{pmatrix}    X_{il}\\[2pt] Y_{il}\\[2pt] 1  \end{pmatrix},\\[4pt]
\mathbf{X}_{0p}^C &=  R_0  \begin{pmatrix}    \mathbf{e}_1 & \mathbf{e}_2 & -\mathbf{O}  \end{pmatrix}  \begin{pmatrix}    X_{0p}\\[2pt] Y_{0p}\\[2pt] 1  \end{pmatrix},
\end{aligned}
\end{equation}
where $\mathbf{r}_{i,0,w}$ is the $w$-th column of $R_{i,0}$ and $\mathbf{e}_q$ is the $q$-th column vector of the $3 \times 3$ identity matrix. Since $\mathbf{X}_{0p}^C=\lambda \mathbf{X}_{il}^C$, it results in a specific homography relationship by canceling out $R_{0}$,
\begin{equation}
\label{eq5}
\resizebox{.95\linewidth}{!}{$
\begin{aligned}
\begin{pmatrix}
X_{0p}\\[1pt] Y_{0p}\\[1pt] 1
\end{pmatrix}
&=\lambda\,\begin{pmatrix}\mathbf{e}_1 & \mathbf{e}_2 & -\mathbf{O} \end{pmatrix}^{-1}
\begin{pmatrix}\mathbf{r}_{i,0,1} & \mathbf{r}_{i,0,2} & \mathbf{t}_{i,0}-\mathbf{O} \end{pmatrix}
\begin{pmatrix}X_{il}\\[1pt] Y_{il}\\[1pt] 1 \end{pmatrix}  \stackrel{\triangle}{=}H_{i0}\begin{pmatrix} X_{il}\\[1pt] Y_{il}\\[1pt] 1 \end{pmatrix}.
\end{aligned}$}
\end{equation}
The resulting homography $H_{i0}$ relates points on the $i$-th pattern to corresponding points on the reference pattern, mediated by their shared projections in the image plane. This $H_{i0}$ is a special case of the general $H_{ij}$ in Eq. \eqref{eq18}, where the target pattern $j$ is the reference pattern.

\begin{figure}[!t]
\centering
\includegraphics[width=0.45\textwidth]{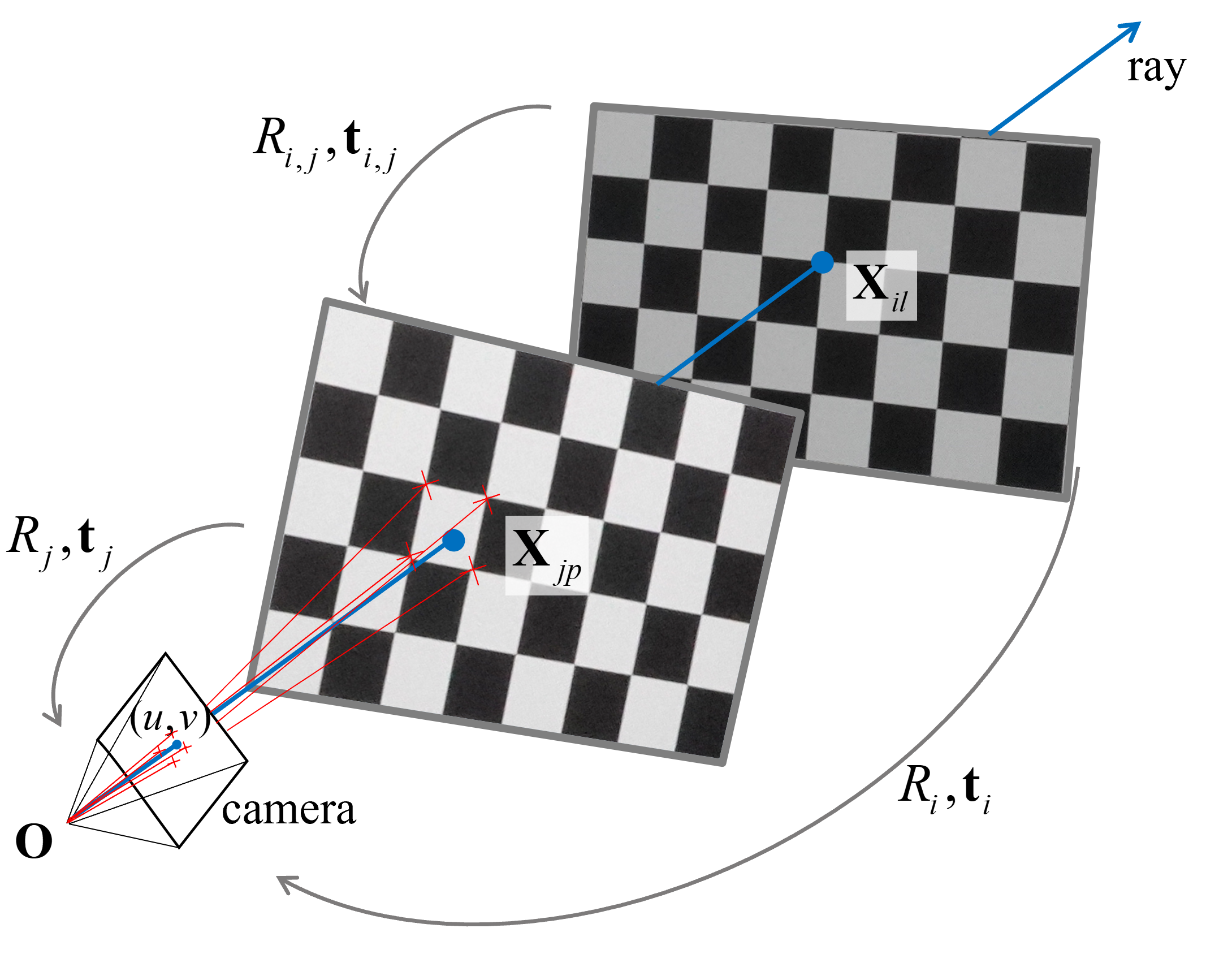}
\caption{Example of collinear 3D points $(\mathbf{X}_{il}, \mathbf{X}_{jp})$ on a calibration pattern in two distinct poses with camera center. $\mathbf{X}_{jp}$ is obtained by interpolation using four adjacent corner points (red cross) of image point $\left(u,v\right)$, the projection of $\mathbf{X}_{il}$ \cite{Ramalingam2017}.}
\label{fig_1}
\vspace{-5pt}
\end{figure}


The linear extrinsic parameter estimation method uses at least four point correspondences $\left\langle\mathbf{X}_{0p}, \mathbf{X}_{il}\right\rangle$ to compute the homography matrix $H_{i0}$. Then, the studies \cite{Dunne2007Efficient,Rosebrock2012Generic,Rosebrock2012Complete} linearly estimate $R_{i,0}, \mathbf{t}_{i,0}$ and $\mathbf{O}$ by utilizing at least two images that overlap with the reference image and combining with the properties of $\left\|\mathbf{r}_{i,0,1}\right\|=\left\|\mathbf{r}_{i,0,2}\right\|=1$ and $\mathbf{r}_{i,0,1}^\top \mathbf{r}_{i,0,2}=0$. It should be noted that $R_0$ remains unknown in the above development, resulting in a pose ambiguity in extrinsic parameter estimation.

Based on the collinearity constraint (see Fig. \ref{fig_1}), the standard generic calibration method \cite{Ramalingam2017,Schops2020} constructs a system of equations \cite{Ramalingam2006} to initialize $R_{i,0}, \mathbf{t}_{i,0}$ and $\mathbf{O}$. Similarly to the linear extrinsic parameter estimation method, the standard generic calibration also suffers from the pose ambiguity characterized by the unknown $R_0$.

Regarding the optimization procedure, generic calibration methods typically use the B-spline surface to construct a lookup table that maps image points to the corresponding observed rays \cite{Schops2020,Rosebrock2012Generic,Rosebrock2012Complete}. Taking forward mapping as an example, let $\boldsymbol{\beta}_{il} = \mathbf{X}_{il}^C / \left\| \mathbf{X}_{il}^C \right\|$ denote the bearing vector of an observed ray, where $\mathbf{X}_{il}^C$ is given in Eq. \eqref{eq4}. Then a sufficiently large set of $\left\langle \boldsymbol{\beta}_{il}, \mathbf{u}_{il} \right\rangle$ can define a lookup table $F_{\mathbf{\boldsymbol{\theta}}}$ that satisfies $\boldsymbol{\beta}_{il} = F_{\boldsymbol{\theta}}(\mathbf{u}_{il})$, where $\boldsymbol{\theta}$ is the set of B-spline surface control points \cite{Rosebrock2012Generic}. It follows from Eq. \eqref{eq4} that $\boldsymbol{\beta}_{il}$ depends upon the unknown rotation $R_0$. For a given $\hat{R}_0$, generic calibration typically initializes $\boldsymbol{\theta}$ via
\begin{equation}
\label{eq7}
    \underset{\boldsymbol{\theta}}{\arg \min} \sum_i \sum_l \left\| F_{\boldsymbol{\theta}}(\mathbf{u}_{il}) - \boldsymbol{\beta}_{il}(\hat{R}_0) \right\|.
\end{equation}
Both control points and extrinsic parameters are then simultaneously optimized by minimizing either reprojection error or ray-point distance, for instance,
\begin{equation}
\label{eq8}
\underset{\{R_i,\mathbf{t}_i\},\,\boldsymbol{\theta}}{\operatorname*{arg\,min}}
\sum_i \sum_l \bigl\|F_{\boldsymbol{\theta}}^{-1}(\boldsymbol{\beta}_{il})-\mathbf{u}_{il}\bigr\|.
\end{equation}

\subsection{Pose Ambiguity and Its Attributes}
As shown above, the extrinsic parameters estimated by the generic calibration approaches, $\tilde{R}_{i}$ and $\tilde{\mathbf{t}}_{i}$ for the $i$-th pattern, are inherently subject to a pose ambiguity. This subsection aims to formally describe the attributes of the ambiguity when recovering poses from such homography-based solutions.

\vspace{0.5\baselineskip}
\begin{proposition}
\label{prop1}
For any rotation matrix $\Lambda$, $\tilde{R}_{i}=\Lambda R_{i}$ and $\tilde{\mathbf{t}}_{i}=\Lambda \mathbf{t}_{i}$ (or $\Lambda R_{i} \operatorname{diag}(1,1,-1) R_{i}^{T} \mathbf{t}_{i}$) satisfy the homography relationship in Eq. \eqref{eq17}.
\end{proposition}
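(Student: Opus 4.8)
The plan is to reduce the homography relationship \eqref{eq17} to a statement about the compact $3\times3$ matrices $M_i:=\begin{pmatrix}\mathbf{r}_{i,1}&\mathbf{r}_{i,2}&\mathbf{t}_i\end{pmatrix}$, since \eqref{eq17} reads $H_{ij}=\lambda M_j^{-1}M_i$ and therefore depends on the poses only through these matrices, and in fact only through the first two columns of each $R_i$ together with $\mathbf{t}_i$. The whole argument rests on one elementary cancellation: for any invertible $A$, replacing every $M_i$ by $AM_i$ leaves the homography unchanged, because $(AM_j)^{-1}(AM_i)=M_j^{-1}A^{-1}AM_i=M_j^{-1}M_i$. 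Hence I only need to show that each candidate transformation amounts to premultiplying $M_i$ by a common orthogonal factor. The constraints $\|\mathbf{r}_{i,1}\|=\|\mathbf{r}_{i,2}\|=1$ and $\mathbf{r}_{i,1}^\top\mathbf{r}_{i,2}=0$ are what force $A$ to be orthogonal, and its determinant then splits the ambiguity into the two stated cases.

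For the first candidate, $\tilde{R}_i=\Lambda R_i$ and $\tilde{\mathbf{t}}_i=\Lambda\mathbf{t}_i$, the claim is immediate: $\Lambda$ acts column-wise, so the first two columns of $\tilde{R}_i$ are $\Lambda\mathbf{r}_{i,1},\Lambda\mathbf{r}_{i,2}$ and thus $\tilde{M}_i=\Lambda M_i$. The cancellation gives $\tilde{H}_{ij}=H_{ij}$, so \eqref{eq17} is satisfied verbatim.

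The second candidate is the main obstacle and requires the structure of the reflection. The key algebraic fact I would isolate first is that $\operatorname{diag}(1,1,-1)$ fixes $\mathbf{e}_1$ and $\mathbf{e}_2$, so its conjugate $S_i:=R_i\operatorname{diag}(1,1,-1)R_i^\top$ fixes the first two columns of $R_i$: indeed $S_i\mathbf{r}_{i,w}=R_i\operatorname{diag}(1,1,-1)\mathbf{e}_w=\mathbf{r}_{i,w}$ for $w\in\{1,2\}$, while $S_i\mathbf{r}_{i,3}=-\mathbf{r}_{i,3}$. Using this, with $\tilde{R}_i=\Lambda R_i$ and $\tilde{\mathbf{t}}_i=\Lambda S_i\mathbf{t}_i$, I can insert $S_i$ in front of the first two columns at no cost and write $\tilde{M}_i=\begin{pmatrix}\Lambda\mathbf{r}_{i,1}&\Lambda\mathbf{r}_{i,2}&\Lambda S_i\mathbf{t}_i\end{pmatrix}=\Lambda S_i M_i$. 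Thus $\tilde{M}_i=G M_i$ with $G:=\Lambda S_i$ orthogonal (improper, $\det G=-1$), and the same cancellation applies.

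The delicate points I would track are orientation bookkeeping and the shared-factor requirement. Because $S_i$ is a reflection, $GR_i$ is improper, so re-extracting a genuine proper rotation from its first two columns via $\mathbf{r}_{i,1}\times\mathbf{r}_{i,2}$ introduces a sign in the third column; following this determinant flip is exactly what produces the $\operatorname{diag}(1,1,-1)$ factor and certifies that $\tilde{R}_i$ is a legitimate rotation. The cleanest way to organize the verification is to regard the reflected solution as a single global improper factor $G$ applied as $\tilde{M}_i=GM_i$, which reproduces the homography just as in the first case; re-expressing $GM_i$ pattern-by-pattern as a proper rotation plus translation is then precisely what yields the stated $i$-dependent form $\tilde{\mathbf{t}}_i=\Lambda R_i\operatorname{diag}(1,1,-1)R_i^\top\mathbf{t}_i$. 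Throughout, the scalar $\lambda$ plays no role beyond absorbing the projective scale and can be ignored.
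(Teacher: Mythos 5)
Your overall strategy---reducing Eq.~\eqref{eq17} to the matrices $M_i=\begin{pmatrix}\mathbf{r}_{i,1}&\mathbf{r}_{i,2}&\mathbf{t}_i\end{pmatrix}$, cancelling a common orthogonal left factor, and splitting the two ambiguity types by the determinant of that factor---is exactly the paper's proof, and your rotation case and orientation bookkeeping (the cross product flipping sign and producing $\operatorname{diag}(1,1,-1)$) are correct. However, the central step of your reflection case is invalid as written. You set $G:=\Lambda S_i$ with $S_i=R_i\operatorname{diag}(1,1,-1)R_i^{T}$ and assert that ``the same cancellation applies,'' but $G$ is not a shared factor: it depends on $i$ through $S_i$. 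With a fixed, pattern-independent $\Lambda$, the reflection-form poses give $\tilde{M}_i=\Lambda S_iM_i$ and $\tilde{M}_j=\Lambda S_jM_j$, hence
\begin{equation*}
\tilde{M}_j^{-1}\tilde{M}_i=M_j^{-1}S_j^{-1}\Lambda^{T}\Lambda S_iM_i=M_j^{-1}S_jS_iM_i,
\end{equation*}
and $S_jS_i$ is a nontrivial rotation (about the intersection line of the two pattern planes) whenever the patterns are not parallel. So this product is not proportional to $M_j^{-1}M_i$, and Eq.~\eqref{eq17} is \emph{not} satisfied: the fixed-$\Lambda$ reading of the reflection claim is actually false for non-parallel patterns, so no argument could have closed that step.

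Your closing reorganization---start from a single global improper factor $G$, set $\tilde{M}_i=GM_i$ (which does preserve the homography), and only then re-express pattern-by-pattern to get $\tilde{R}_i=\Lambda_iR_i$ and $\tilde{\mathbf{t}}_i=\Lambda_iR_i\operatorname{diag}(1,1,-1)R_i^{T}\mathbf{t}_i$ with $\Lambda_i=GS_i$---is the only valid direction, and it is precisely the paper's argument (the paper's $A$ plays the role of your $G$). But you present it as a ``cleaner organization'' of the same verification, when in fact the two are not interchangeable: in the reflection case the rotation factor $\Lambda_i$ necessarily varies with the pattern (all $\Lambda_i$ being induced by the one common reflection $G$), so the proposition's single ``$\Lambda$'' can only be read per-pattern there. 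You correctly flagged the ``shared-factor requirement'' as the delicate point, but the middle paragraph asserts it rather than resolves it; the fix is to delete the fixed-$\Lambda$ claim and make the global-$G$ route the proof itself.
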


\begin{proof}
Given the homography relationship as Eq. \eqref{eq17} and \eqref{eq18}, for any invertible matrix $A$, we have
\begin{equation}
\label{eq19}
\begin{aligned}
H_{ij} &= \lambda\begin{pmatrix}\mathbf{r}_{j,1} & \mathbf{r}_{j,2} & \mathbf{t}_{j}\end{pmatrix}^{-1} A^{-1}A
\begin{pmatrix}\mathbf{r}_{i,1} & \mathbf{r}_{i,2} & \mathbf{t}_{i}\end{pmatrix}\\
&\stackrel{\triangle}{=} \lambda\begin{pmatrix}\tilde{\mathbf{r}}_{j,1} & \tilde{\mathbf{r}}_{j,2} & \tilde{\mathbf{t}}_{j}\end{pmatrix}^{-1}
\begin{pmatrix}\tilde{\mathbf{r}}_{i,1} & \tilde{\mathbf{r}}_{i,2} & \tilde{\mathbf{t}}_{i}\end{pmatrix},
\end{aligned}
\end{equation}
with the pose components $\tilde{\mathbf{r}}_{i,w} = A\mathbf{r}_{i,w}$ and $\tilde{\mathbf{t}}_{i} = A\mathbf{t}_{i}$ for $w=1,2$. Considering the $i$-th pattern as an example, due to the inherent constraints of extrinsic parameters solved by the homography matrix, the transformed vectors $\tilde{\mathbf{r}}_{i,1}$ and $\tilde{\mathbf{r}}_{i,2}$ must satisfy $\|\tilde{\mathbf{r}}_{i,1}\|=\|\tilde{\mathbf{r}}_{i,2}\|=1$ and $\tilde{\mathbf{r}}_{i,1}^{T}\tilde{\mathbf{r}}_{i,2}=0$ (the same constraint applies to the $j$-th pattern). Therefore, $A$ must be an orthogonal matrix. Then, the recovered $\tilde{R}_{i} \in SO(3)$ from $H_{ij}$ can be expressed as $\tilde{R}_i = \left(\tilde{\mathbf{r}}_{i,1} \quad \tilde{\mathbf{r}}_{i,2} \quad \tilde{\mathbf{r}}_{i,3}\right) = \left(A\mathbf{r}_{i,1} \quad A\mathbf{r}_{i,2} \quad (A\mathbf{r}_{i,1})\times(A\mathbf{r}_{i,2})\right)$. If $A$ is a rotation matrix, then $\tilde{R}_{i}=A R_{i}$, $\tilde{\mathbf{t}}_{i}=A \mathbf{t}_{i}$. Conversely, if $A$ is a reflection matrix, then $\tilde{R}_{i}=A(\mathbf{r}_{i,1} ~ \mathbf{r}_{i,2} ~ -\mathbf{r}_{i,3})=(A R_{i} \operatorname{diag}(1,1,-1) R_{i}^{T}) R_{i}$, $\tilde{\mathbf{t}}_{i}=A \mathbf{t}_{i}$.
\end{proof}

Proposition \ref{prop1} reveals two types of pose ambiguity: a rotation ambiguity, where the transformation is purely rotational ($\tilde{R}_{i}=\Lambda R_{i}, \tilde{\mathbf{t}}_{i}=\Lambda \mathbf{t}_{i}$ with $\Lambda \in SO(3)$), and a reflection ambiguity, where it involves a reflection ($\tilde{R}_{i}=\Lambda R_{i}, \tilde{\mathbf{t}}_{i}=\Lambda R_{i} \operatorname{diag}(1,1,-1) R_{i}^{T} \mathbf{t}_{i}$). 

\begin{proposition}
\label{prop2}
If the pose ambiguity in Proposition \ref{prop1} is restricted to the form $\tilde{R}_{i}=\Lambda_{i} R_{i}$ and $\tilde{\mathbf{t}}_{i}=\Lambda_{i} \mathbf{t}_{i}$, with $\Lambda_i \in SO(3)$, then the ambiguity matrix $\Lambda_i$ is unique and consistent across pattern at all poses, i.e., $\Lambda_i = \Lambda_j = \Lambda$ for any $i$-th and $j$-th patterns.
\end{proposition}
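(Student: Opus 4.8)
The plan is to exploit the one principle that pins the problem down: the only quantities actually observed by generic calibration are the inter-pattern homographies $H_{ij}$ of Eq.~\eqref{eq18}, so any admissible extrinsic solution must reproduce \emph{exactly the same} $H_{ij}$ as the true parameters. Writing $M_i = \begin{pmatrix}\mathbf{r}_{i,1} & \mathbf{r}_{i,2} & \mathbf{t}_i\end{pmatrix}$ (which is invertible whenever the homography in Eq.~\eqref{eq17} is well defined), the true homography is $H_{ij}\cong M_j^{-1}M_i$, where $\cong$ denotes equality up to the projective scalar $\lambda$. For the estimated poses, with $\tilde M_i=\begin{pmatrix}\tilde{\mathbf r}_{i,1} & \tilde{\mathbf r}_{i,2} & \tilde{\mathbf t}_i\end{pmatrix}$, the same observability requirement forces $\tilde M_j^{-1}\tilde M_i\cong H_{ij}$.

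First I would substitute the restricted ambiguity $\tilde R_i=\Lambda_i R_i$, $\tilde{\mathbf t}_i=\Lambda_i\mathbf t_i$, which (because $\Lambda_i$ acts identically on all three columns) gives $\tilde M_i=\Lambda_i M_i$, into the estimated homography. This yields
\begin{equation*}
\tilde M_j^{-1}\tilde M_i = M_j^{-1}\Lambda_j^{-1}\Lambda_i M_i .
\end{equation*}
Imposing $\tilde M_j^{-1}\tilde M_i\cong M_j^{-1}M_i$ and cancelling the invertible factors $M_j^{-1}$ on the left and $M_i$ on the right leaves $\Lambda_j^{-1}\Lambda_i=cI$ for some nonzero scalar $c$.

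Next I would pin down $c$. Since $\Lambda_i,\Lambda_j\in SO(3)$, the product $\Lambda_j^{-1}\Lambda_i$ is again a rotation, and a real multiple $cI$ lies in $SO(3)$ only for $c=1$ (orthogonality forces $c^2=1$, and the positive-determinant/orientation condition selects $c=+1$). Hence $\Lambda_i=\Lambda_j$ for every pair $(i,j)$ whose image regions overlap, i.e.\ along every edge of the pattern overlap graph. The closing step is a connectivity argument: under the standing assumption that this graph is connected (the same assumption that lets the homographies be chained transitively, as noted after Eq.~\eqref{eq18}), pairwise equality propagates along paths to give a single common $\Lambda_i=\Lambda$ for all patterns.

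The step I expect to be most delicate is the bookkeeping of the projective scale: because the homographies are defined only up to $\lambda$, the cancellation produces $cI$ rather than $I$, and the crux is the short orientation-plus-orthogonality argument ruling out $c\neq1$. A secondary point worth stating explicitly is the connectivity hypothesis; without it one could only conclude that $\Lambda_i$ is constant on each connected component of the overlap graph rather than globally unique.
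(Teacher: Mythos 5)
Your proof is correct and follows essentially the same route as the paper's: both use the uniqueness (up to projective scale) of the inter-pattern homography to cancel the pose matrices and reduce the problem to $\Lambda_j^{-1}\Lambda_i = cI$, then rule out $c \neq 1$ via the determinant/orthogonality of rotations (the paper takes determinants to get $\lambda_{ij}^3 = 1$, hence $\lambda_{ij}=1$ over the reals). Your explicit handling of the overlap-graph connectivity, propagating pairwise equality along paths, is a worthwhile refinement that the paper leaves implicit, but it does not change the substance of the argument.
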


\begin{proof}
Two arbitrary poses from calibration pattern coordinate systems to the camera coordinate system satisfy $\tilde{R}_{i}=\Lambda_{i} R_{i}$, $\tilde{\mathbf{t}}_{i}=\Lambda_{i} \mathbf{t}_{i}$, $\tilde{R}_{j}=\Lambda_{j} R_{j}$, $\tilde{\mathbf{t}}_{j}=\Lambda_{j} \mathbf{t}_{j}$, where $\Lambda_{i}, \Lambda_{j} \in SO(3)$. The homography between any two patterns is geometrically unique (up to scale). Thus, $\tilde{H}_{ij}$ constructed from the above poses and the true $H_{ij}$ must satisfy
\begin{equation}
\label{eq23}
\tilde{H}_{ij}=\lambda_{ij} H_{ij},
\end{equation}
which can be rewritten as
\begin{equation}
\label{eq24}
\resizebox{.9\linewidth}{!}{$
\begin{aligned}
\begin{pmatrix}\tilde{\mathbf{r}}_{j,1} & \tilde{\mathbf{r}}_{j,2} & \tilde{\mathbf{t}}_{j}\end{pmatrix}^{-1}\begin{pmatrix}\tilde{\mathbf{r}}_{i,1} & \tilde{\mathbf{r}}_{i,2} & \tilde{\mathbf{t}}_{i}\end{pmatrix} = \lambda_{ij} \begin{pmatrix}\mathbf{r}_{j,1} & \mathbf{r}_{j,2} & \mathbf{t}_{j}\end{pmatrix}^{-1}\begin{pmatrix}\mathbf{r}_{i,1} & \mathbf{r}_{i,2} & \mathbf{t}_{i}\end{pmatrix},
\end{aligned}$}
\end{equation}
where $\lambda_{ij}$ is a nonzero scalar, leading to $\Lambda_j^{-1}\Lambda_i = \lambda_{ij}I_{3\times3}$. Taking determinants on both sides yields $\lambda_{ij}=1$. Thus, the two ambiguity matrices $\Lambda_{i}$ and $\Lambda_{j}$ for calibration pattern at any two poses satisfy
\begin{equation}
\label{eq25}
\Lambda_{i}=\Lambda_{j}.
\end{equation}
Therefore, the ambiguity matrix relating all pattern poses is consistent and unique.
\end{proof}

\begin{proposition}
\label{prop3}
For $\Lambda=\operatorname{diag}(1,1,\pm1)$, $\tilde{R}_{i,0}=\Lambda R_{i,0}$ and $\tilde{\mathbf{t}}_{i,0}=\Lambda \mathbf{t}_{i,0}$ satisfy the homography relationship in Eq. \eqref{eq5}. 
\end{proposition}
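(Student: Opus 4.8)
The plan is to verify the claim by direct substitution into the factored form of $H_{i0}$ in Eq. \eqref{eq5}, while tracking how the shared camera center $\mathbf{O}$ must transform alongside $R_{i,0}$ and $\mathbf{t}_{i,0}$. First I would abbreviate the two factors as $M_0 = \begin{pmatrix}\mathbf{e}_1 & \mathbf{e}_2 & -\mathbf{O}\end{pmatrix}$ and $M_i = \begin{pmatrix}\mathbf{r}_{i,0,1} & \mathbf{r}_{i,0,2} & \mathbf{t}_{i,0}-\mathbf{O}\end{pmatrix}$, so that $H_{i0}=\lambda M_0^{-1}M_i$. The target is to show that substituting $\tilde{R}_{i,0}=\Lambda R_{i,0}$ and $\tilde{\mathbf{t}}_{i,0}=\Lambda\mathbf{t}_{i,0}$, together with the accompanying frame change $\tilde{\mathbf{O}}=\Lambda\mathbf{O}$ of the camera center, leaves $H_{i0}$ unchanged up to an overall scale.

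The key algebraic fact I would exploit is that $\Lambda=\operatorname{diag}(1,1,\pm1)$ is exactly the subgroup of orthogonal matrices that fixes the reference-pattern plane, so $\Lambda\mathbf{e}_1=\mathbf{e}_1$ and $\Lambda\mathbf{e}_2=\mathbf{e}_2$. With this, both factors acquire a common left factor $\Lambda$: the perturbed $\tilde{M}_i=\begin{pmatrix}\Lambda\mathbf{r}_{i,0,1} & \Lambda\mathbf{r}_{i,0,2} & \Lambda\mathbf{t}_{i,0}-\Lambda\mathbf{O}\end{pmatrix}=\Lambda M_i$, and, because $\Lambda$ fixes $\mathbf{e}_1,\mathbf{e}_2$, the reference factor obeys $\tilde{M}_0=\begin{pmatrix}\mathbf{e}_1 & \mathbf{e}_2 & -\Lambda\mathbf{O}\end{pmatrix}=\begin{pmatrix}\Lambda\mathbf{e}_1 & \Lambda\mathbf{e}_2 & -\Lambda\mathbf{O}\end{pmatrix}=\Lambda M_0$. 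Hence $\tilde{H}_{i0}=\tilde{\lambda}\,\tilde{M}_0^{-1}\tilde{M}_i=\tilde{\lambda}\,(\Lambda M_0)^{-1}(\Lambda M_i)=\tilde{\lambda}\,M_0^{-1}\Lambda^{-1}\Lambda M_i=\tilde{\lambda}\,M_0^{-1}M_i$, a scalar multiple of $H_{i0}$, so the same correspondences $\langle\mathbf{X}_{0p},\mathbf{X}_{il}\rangle$ are reproduced and Eq. \eqref{eq5} is satisfied.

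I expect the main point to be conceptual rather than computational: explaining why the admissible ambiguity collapses from the full orthogonal group of Proposition \ref{prop1} down to the two matrices $\operatorname{diag}(1,1,\pm1)$. The reason lies in the reference-frame normalization itself — the matrix $M_0$ pins its first two columns to $\mathbf{e}_1,\mathbf{e}_2$, encoding the convention that the reference pattern carries the identity in-plane frame. A general orthogonal ambiguity $A$ from Proposition \ref{prop1} would send these columns to $A\mathbf{e}_1,A\mathbf{e}_2$ and violate this normalization unless $A\mathbf{e}_1=\mathbf{e}_1$ and $A\mathbf{e}_2=\mathbf{e}_2$, which for orthogonal $A$ forces precisely $A=\operatorname{diag}(1,1,\pm1)$. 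Finally I would remark that the choice $\Lambda=\operatorname{diag}(1,1,-1)$ renders $\tilde{R}_{i,0}$ a reflection rather than a proper rotation, yet the claim still holds because Eq. \eqref{eq5} uses only the first two (still orthonormal) columns of the rotation; geometrically this flips the camera to the opposite side of the reference plane, the reference-frame manifestation of the reflection branch already seen in Proposition \ref{prop1}.
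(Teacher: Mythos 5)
Your proof is correct and follows essentially the same route as the paper's: both arguments hinge on the fact that $\Lambda=\operatorname{diag}(1,1,\pm1)$ fixes $\mathbf{e}_1$ and $\mathbf{e}_2$, so that a common left factor $\Lambda$ can be absorbed into (or cancelled from) both factors of $H_{i0}=\lambda M_0^{-1}M_i$. The paper runs this same algebra in the necessity direction --- inserting $A^{-1}A$ and demanding that the regrouped factors retain the structural form of Eq.~\eqref{eq5}, which forces $A\mathbf{e}_1=\mathbf{e}_1$ and $A\mathbf{e}_2=\mathbf{e}_2$ --- and your closing paragraph on why the ambiguity collapses to $\operatorname{diag}(1,1,\pm1)$ reproduces exactly that argument, so the two proofs differ only in which direction (sufficiency versus characterization) is foregrounded.
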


\begin{proof}
Given the homography relationship as Eq. \eqref{eq16} and \eqref{eq17}, for any invertible matrix $A$, we have
\begin{equation}
\label{eq19}
\begin{aligned}
H_{i0} &= \lambda\begin{pmatrix}\mathbf{e}_1 & \mathbf{e}_2 & -\mathbf{O} \end{pmatrix}^{-1} A^{-1}A
\begin{pmatrix}\mathbf{r}_{i,0,1} & \mathbf{r}_{i,0,2} & \mathbf{t}_{i,0}-\mathbf{O} \end{pmatrix}\\
&\stackrel{\triangle}{=} \lambda\begin{pmatrix}\mathbf{e}_1 & \mathbf{e}_2 & -\tilde{\mathbf{O}} \end{pmatrix}^{-1}
\begin{pmatrix}\tilde{\mathbf{r}}_{i,0,1} & \tilde{\mathbf{r}}_{i,0,2} & \tilde{\mathbf{t}}_{i,0}-\tilde{\mathbf{O}} \end{pmatrix},
\end{aligned}
\end{equation}
with the pose components $\tilde{\mathbf{r}}_{i,0, w} = A\mathbf{r}_{i,0,w}$ ($w=1,2$) and $\tilde{\mathbf{t}}_{i,0} = A\mathbf{t}_{i,0}$. Therefore $A\mathbf{e}_1=\mathbf{e}_1$, $A\mathbf{e}_2=\mathbf{e}_2$. $A$ must be the identity matrix or $\operatorname{diag}(1,1,-1)$. 
\end{proof}



From Section \ref{sec3.1} and Propositions \ref{prop1}$\sim\ref{prop3}$, the linear extrinsic parameter estimation method inherently suffers from pose ambiguity. This ambiguity can be reduced to rotation ambiguity by applying a chiral constraint \cite{Rosebrock2012Generic}. In summary, both the standard generic calibration method and the linear extrinsic parameter estimation method inherently suffer from a rotation ambiguity.


The consequent joint optimization process in Eq. \eqref{eq8} simultaneously refines the B-spline surface control points $\boldsymbol{\theta}$ and the extrinsic parameters $\{R_i, \mathbf{t}_i\}$. A critical question then arises: Can this comprehensive optimization effectively compensate for or eliminate the pose ambiguity's influence, leading to a unique set of intrinsic parameters $\boldsymbol{\theta}$ regardless of the initial $\Lambda$? The answer is NO.

\begin{proposition}
\label{prop4}
Given $\Lambda \neq \Lambda'$, the optimal control points $\boldsymbol{\theta}$ of the B-spline surface satisfy $\boldsymbol{\theta}(\Lambda) \neq \boldsymbol{\theta}(\Lambda')$.
\end{proposition}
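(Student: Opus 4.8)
The plan is to show that the ambiguity $\Lambda$ acts as a genuine symmetry of the fitting objective whose effect is carried \emph{entirely} by the control points, so that distinct $\Lambda$ force distinct $\boldsymbol{\theta}$. First I would make explicit how $\Lambda$ enters the bearing targets. By Eq.~\eqref{eq4}, replacing the reference rotation $R_0$ with $\hat{R}_0=\Lambda R_0$ left-multiplies every $\mathbf{X}_{il}^C$ by $\Lambda$, while the homography-recovered block $(\mathbf{r}_{i,0,1}\ \mathbf{r}_{i,0,2}\ \mathbf{t}_{i,0}-\mathbf{O})$ is unaffected; since $\Lambda\in SO(3)$ preserves norms, the induced bearings obey $\boldsymbol{\beta}_{il}(\Lambda)=\Lambda\,\boldsymbol{\beta}_{il}(I)$. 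The crucial asymmetry is that the measured image points $\mathbf{u}_{il}$ are data and do not depend on $\Lambda$; hence the surface $F_{\boldsymbol{\theta}}$, which must send the fixed $\mathbf{u}_{il}$ to the $\Lambda$-rotated targets, is forced to absorb $\Lambda$ into its control points.

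Next I would exploit the linearity of the B-spline surface in its control points, $F_{\boldsymbol{\theta}}(\mathbf{u})=\sum_{m,n} B_m(u)B_n(v)\,\mathbf{c}_{mn}$, and define the group action $\Lambda\cdot\boldsymbol{\theta}$ by rotating every control point $\mathbf{c}_{mn}\mapsto\Lambda\mathbf{c}_{mn}$. This linearity yields the equivariance $F_{\Lambda\cdot\boldsymbol{\theta}}=\Lambda\,F_{\boldsymbol{\theta}}$, and hence $F_{\Lambda\cdot\boldsymbol{\theta}}^{-1}=F_{\boldsymbol{\theta}}^{-1}\Lambda^{-1}$. Combining equivariance with the isometry of $\Lambda$, the objective of Eq.~\eqref{eq7} evaluated on the targets $\Lambda\boldsymbol{\beta}_{il}(I)$ at argument $\boldsymbol{\theta}$ equals that objective on the targets $\boldsymbol{\beta}_{il}(I)$ at argument $\Lambda^{-1}\cdot\boldsymbol{\theta}$; the same cancellation makes the joint image-space residuals of Eq.~\eqref{eq8} invariant as well. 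Because $\boldsymbol{\theta}\mapsto\Lambda^{-1}\cdot\boldsymbol{\theta}$ is a bijection of parameter space, the two minimization problems are in exact correspondence, so the minimizer satisfies $\boldsymbol{\theta}(\Lambda)=\Lambda\cdot\boldsymbol{\theta}(I)$. In other words, the whole family of optimal control-point sets is the $SO(3)$-orbit of a single representative $\boldsymbol{\theta}_0:=\boldsymbol{\theta}(I)$, which already shows the joint optimization cannot wash $\Lambda$ out.

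Finally I would establish that this orbit is nondegenerate. Suppose, for contradiction, $\boldsymbol{\theta}(\Lambda)=\boldsymbol{\theta}(\Lambda')$; then $M:=(\Lambda')^{-1}\Lambda$ fixes every control point of $\boldsymbol{\theta}_0$. With $\Lambda\neq\Lambda'$ we have $M\neq I$, a nontrivial rotation whose fixed-point set is only its one-dimensional axis. However, the control points encode bearing directions covering a two-dimensional patch of the unit sphere for any real camera, so they span at least a two-dimensional subspace of $\mathbb{R}^3$ and cannot all lie on a single line. Hence $M$ cannot fix all of them, a contradiction, giving $\boldsymbol{\theta}(\Lambda)\neq\boldsymbol{\theta}(\Lambda')$.

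I expect the main obstacle to be precisely this last nondegeneracy claim: I must rule out the pathological configuration in which every control point lies on the axis of $M$. I would discharge it by appealing to the geometry of the calibration data — the observed rays fan out over a genuine solid angle, so the fitted surface takes values spanning more than a line — and, so that $\boldsymbol{\theta}(\Lambda)$ is a single well-defined point, by assuming the fit is well posed (the B-spline design matrix has full column rank, making the minimizer unique); otherwise the statement should be read at the level of minimizer sets, whose orbits under distinct $\Lambda$ remain distinct by the same axis argument.
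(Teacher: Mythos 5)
Your proof is correct in substance but takes a genuinely different route from the paper's. The paper also exploits linearity of the B-spline in its control points and the fact that the design matrix is $\Lambda$-independent, but it then simply writes the closed-form normal-equation solution $\boldsymbol{\theta}(\Lambda)=\bigl(\mathbf{G}^T\mathbf{G}\bigr)^{-1}\mathbf{G}^T\mathbf{B}(\Lambda)$, observes $\mathbf{B}(\Lambda)\neq\mathbf{B}(\Lambda')$, and concludes $\boldsymbol{\theta}(\Lambda)\neq\boldsymbol{\theta}(\Lambda')$ directly. You instead prove the equivariance $F_{\Lambda\cdot\boldsymbol{\theta}}=\Lambda F_{\boldsymbol{\theta}}$, deduce the orbit relation $\boldsymbol{\theta}(\Lambda)=\Lambda\cdot\boldsymbol{\theta}(I)$, and reduce the claim to a fixed-point statement: equality of $\boldsymbol{\theta}(\Lambda)$ and $\boldsymbol{\theta}(\Lambda')$ would force every control point onto the axis of the nontrivial rotation $(\Lambda')^{-1}\Lambda$. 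This is more work, but it buys something real: the paper's final implication is not automatic, because the map $\mathbf{B}\mapsto\bigl(\mathbf{G}^T\mathbf{G}\bigr)^{-1}\mathbf{G}^T\mathbf{B}$ has a nontrivial kernel whenever the number of observations exceeds the number of control points, so distinct target matrices can in principle project to the same fit; your axis argument is precisely the non-degeneracy condition needed to exclude that collapse, and you correctly flag it as the crux. Your route also yields a stronger structural result (the optimal lookup tables form an $SO(3)$-orbit of a single representative), which the paper's computation does not exhibit. Two caveats: your equivariance step requires the control points to live in $\mathbb{R}^3$ as bearing vectors, consistent with Eq.~\eqref{eq7}, whereas the paper's proof works in a 2D representation of $\boldsymbol{\beta}_{il}$ on which $\Lambda$ does not act linearly, so your orbit relation would not transfer verbatim to that parameterization; and your final span argument rests on a (reasonable, explicitly stated) geometric assumption about the data — that the fitted control points are not collinear — which is the same kind of genericity the paper implicitly assumes without stating.
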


\begin{proof}
The 2D representation of $\boldsymbol{\beta}_{il}$ is utilized to simplify this proof. A B-spline surface function $F_{\boldsymbol{\theta}}(\mathbf{u}_{il})$ can be represented as
\begin{equation}
\label{eq9}
F_{\boldsymbol{\theta}}(\mathbf{u}_{il})=
\sum_{a=0}^{n}\sum_{b=0}^{m}
N_{a,p}(\mu)\,N_{b,p}(\nu)\,\overline{\boldsymbol{\beta}}_{a,b},
\end{equation}
where $N_{a,p}(\mu)$ and $N_{b,p}(\nu)$ are B-spline basis functions of degree $p$. The parameters $\mu$ and $\nu$ are computed from the image coordinates $\mathbf{u}_{il}$ \cite{Rosebrock2012Generic}. The terms $\overline{\boldsymbol{\beta}}_{a,b}$, indexed by $a\!\in\![0,n]$ and $b\!\in\![0,m]$, are the control points that define the B-spline surface. Define $\boldsymbol{\theta}$ as the $L \times 2$ matrix containing all control points vectors $\overline{\boldsymbol{\beta}}_{a,b}$ arranged in sequence, i.e., $\boldsymbol{\theta} = \left[\overline{\boldsymbol{\beta}}_{0,0},\ldots,\overline{\boldsymbol{\beta}}_{0,m},\ldots,\overline{\boldsymbol{\beta}}_{n,m}\right]^T$, where $\quad L=(n+1)(m+1)$. By arranging the basis functions into a $1 \times L$ row vector
\begin{equation}
    \label{eq:arranged basis function}
    \resizebox{.95\linewidth}{!}{$
    \mathbf{N}_{il}(\mu,\nu)=\bigl[N_{0,p}(\mu)N_{0,p}(\nu),\ldots,N_{0,p}(\mu)N_{m,p}(\nu),\ldots,N_{n,p}(\mu)N_{m,p}(\nu)\bigr],$}
\end{equation}
 we derive $F_{\boldsymbol{\theta}}(\mathbf{u}_{il})=(\mathbf{N}_{il}(\mu,\nu)\,\boldsymbol{\theta})^T$. Substituting into Eq. \eqref{eq7} yields
\begin{equation}
\label{eq10}
\underset{\boldsymbol{\theta}}{\operatorname*{arg\,min}}
\sum_i\sum_l \bigl\|\bigl(\mathbf{N}_{il}(\mu,\nu)\,\boldsymbol{\theta}\bigr)^T-\boldsymbol{\beta}_{il}\bigr\|^2.
\end{equation}
An $M\!\times\!L$ matrix $\mathbf{G}(\mu,\nu)$ is defined by stacking $\mathbf{N}_{il}(\mu,\nu)$, where $M$ is the total number of points $\mathbf{u}_{il}$, and each row of $\mathbf{G}(\mu,\nu)$ corresponds to a B-spline basis function vector $\mathbf{N}_{il}(\mu,\nu)$ from $\mathbf{u}_{il}$. Similarly, stacking the corresponding $\boldsymbol{\beta}_{il}^T$ forms an $M \times 2$ matrix $\mathbf{B}(\Lambda)$. The optimization problem in Eq. \eqref{eq10} is thus equivalent to seeking the optimal $\boldsymbol{\theta}$ in
\begin{equation}
\mathbf{G}(\mu,\nu)\boldsymbol{\theta}  + \mathbf{E} = \mathbf{B}(\Lambda)
\end{equation}
by minimizing the squared Frobenius norm of residual matrix $\mathbf{E}$. If the matrix $\mathbf{G}(\mu,\nu)$ has full column rank (or via regularization), the solution to this linear least-squares problem is given by the normal equation 
\begin{equation}
\label{eq12}
\boldsymbol{\theta}(\Lambda)=\bigl(\mathbf{G}^T\!\mathbf{G}\bigr)^{-1}\mathbf{G}^T\mathbf{B}(\Lambda).
\end{equation}
Similarly, when given $\Lambda'$, we obtain 
\begin{equation}
\label{eq13}
\boldsymbol{\theta}(\Lambda')=\bigl(\mathbf{G}^T\!\mathbf{G}\bigr)^{-1}\mathbf{G}^T\mathbf{B}(\Lambda').
\end{equation}
Since the image coordinate $\mathbf{u}_{il}$ in Eq. \eqref{eq7} is consistent, $\mathbf{N}_{il}$ remains identical, hence $\mathbf{G}(\mu,\nu)$ in Eq. \eqref{eq12} and Eq. \eqref{eq13} is also identical. Note that different matrices $\Lambda\neq\Lambda'$ yield distinct observed rays $\boldsymbol{\beta}_{il}(\Lambda)\neq\boldsymbol{\beta}_{il}(\Lambda')$. Thus,
\begin{equation}
\label{eq14}
\mathbf{B}(\Lambda)\neq\mathbf{B}(\Lambda').
\end{equation}
Consequently, the optimal control points satisfy
\begin{equation}
\label{eq15}
\boldsymbol{\theta}(\Lambda)\neq\boldsymbol{\theta}(\Lambda').
\end{equation}
\end{proof}

Once the intrinsic parameter set $\boldsymbol{\theta}(\Lambda)$ has been determined through calibration, it establishes the bidirectional projection functions for subsequent applications. For example, the back projection maps a distorted image coordinate to its corresponding bearing vector via $\hat{\boldsymbol{\beta}} = F_{\boldsymbol{\theta}}(\mathbf{u})$. Conversely, the forward projection maps a bearing vector to the distorted image coordinate $\hat{\mathbf{u}}$ by solving the nonlinear minimization problem: $\arg \min_{\hat{\mathbf{u}}} \|F_{\boldsymbol{\theta}}(\hat{\mathbf{u}})-\boldsymbol{\beta}\|$. Different lookup table parameter sets $\boldsymbol{\theta}(\Lambda) \neq \boldsymbol{\theta}(\Lambda')$ result in varying back-projection results $F_{\boldsymbol{\theta}(\Lambda)}(\mathbf{u}_{il}) \neq F_{\boldsymbol{\theta}(\Lambda')}(\mathbf{u}_{il})$ for the same distorted image points $\mathbf{u}_{il}$. Similarly, the forward projection results vary as well, i.e., $F_{\boldsymbol{\theta}(\Lambda)}^{-1}(\boldsymbol{\beta}_{il}) \neq F_{\boldsymbol{\theta}(\Lambda')}^{-1}(\boldsymbol{\beta}_{il})$. 

Several previous studies \cite{Ramalingam2006,Dunne2007Efficient,Schops2020} acknowledged the unknown rotation matrix and adopted ad-hoc strategies to circumvent this issue. For example, Ramalingam noted in \cite{Ramalingam2006} that "Ideally, rays should be perpendicular to the plane." More precisely, studies \cite{Dunne2007Efficient,Ramalingam2006} determined $R_0$ by
\begin{equation}
\label{eq: R0 in open-source toolbox}
\arg\min_{R_0} \sum_{i} \|\boldsymbol{\beta}_i \times \boldsymbol{\xi}\|_2,
\end{equation}
where $\boldsymbol{\xi}$ denotes the unit normal vector of the reference pattern. However, these studies presupposed that the average direction of all observed rays and the camera's optical axis achieve directional consistency and left degrees of freedom unresolved. Schöps \cite{Schops2020} adjusted the camera's orientation to standardize the mean directions of observed rays across different calibration methods, thereby facilitating the comparative analysis of distortion correction and reconstruction results. However, this effort did not resolve the ambiguity issue. Generally speaking, generic calibration approaches failed to adequately recognize the ambiguity issue.

\subsection{Impact of Pose Ambiguity on Pose Estimation and 3D Reconstruction}
The pose ambiguity inherent in generic calibration methods introduces a rotation uncertainty to the representation of a 3D point within the camera frame:
\begin{equation}
\label{eq26}
\tilde{\mathbf{X}}_{il}^{C}=\tilde{R}_{i} \mathbf{X}_{il}+\tilde{\mathbf{t}}_{i} 
= \Lambda(R_{i} \mathbf{X}_{il}+\mathbf{t}_{i}) 
= \Lambda \mathbf{X}_{il}^{C}.
\end{equation}
The relationship between the ambiguous and true normalized coordinates is given by: $\tilde{\mathbf{x}}_{il} = z \Lambda \mathbf{x}_{il} / \tilde{z}$, where $z$ and $\tilde{z}$ represent the depths of $\mathbf{X}_{il}^{C}$ and $\tilde{\mathbf{X}}_{il}^{C}$, respectively. Any images captured by the same camera share the same ambiguity matrix $\Lambda$. Consequently, when undistorting an image point using the ambiguous intrinsic parameters, the resulting normalized image coordinates differ from the true coordinates:
\begin{equation}
\label{eq_normalized_coordinates_relationship}
\tilde{\mathbf{x}} = z \Lambda \mathbf{x} / \tilde{z}.
\end{equation}

The pose ambiguity would propagate to subsequent vision tasks. Consider a two-view scenario where the left camera defines the world frame and the true relative pose between the two views is $(R, \mathbf{t})$. The ambiguous normalized image coordinates of a 3D point $\mathbf{X}$ are $\tilde{\mathbf{x}}=z \Lambda \mathbf{x} / \tilde{z}$ (left view) and $\tilde{\mathbf{x}}^{\prime}=z^{\prime} \Lambda \mathbf{x}^{\prime} / \tilde{z}^{\prime}$ (right view), which differ from the true coordinates $\mathbf{x}$ and $\mathbf{x}^{\prime}$. Denote the reconstructed 3D points in the left and right camera frames as $\tilde{\mathbf{X}}$ and $\tilde{\mathbf{X}}^{\prime}$, respectively. From Eq. \eqref{eq_normalized_coordinates_relationship}, the reconstructed 3D points can be expressed as:
\begin{equation}
\begin{aligned}
\tilde{\mathbf{X}} &= \tilde{z} \tilde{\mathbf{x}} = z \Lambda \mathbf{x} = \Lambda \mathbf{X}, \\
\tilde{\mathbf{X}}^{\prime} &= \tilde{z}^{\prime} \tilde{\mathbf{x}}^{\prime} = z^{\prime} \Lambda \mathbf{x}^{\prime} = \Lambda (R \mathbf{X} + \mathbf{t}).
\end{aligned}
\label{eq:reconstruction ambiguity}
\end{equation}
The reconstructed 3D points are rotated versions of the true points by the ambiguity matrix $\Lambda$, leading to a  warped two-view imaging equation,
\begin{equation}
\label{eq29:view_under_ambiguity}
\tilde{z}^{\prime} \tilde{\mathbf{x}}^{\prime} = \tilde{z} \Lambda R \Lambda^{T} \tilde{\mathbf{x}} + \Lambda \mathbf{t},
\end{equation}
which reveals that the relative pose is warped from $(R, \mathbf{t})$ to $(\Lambda R \Lambda^T, \Lambda \mathbf{t})$.

The above problem caused by the pose ambiguity also applies to the multi-view scenarios. According to Eq. \eqref{eq:reconstruction ambiguity} and \eqref{eq29:view_under_ambiguity}, the estimated global poses $(\tilde{R}, \tilde{\mathbf{t}})$ are related to the ground truth by
\begin{equation}
\tilde{R} = \Lambda R \Lambda^{T}, \quad \tilde{\mathbf{t}} = \Lambda \mathbf{t}, \quad  \tilde{\mathbf{X}} = \Lambda \mathbf{X}.
\end{equation}
Unfortunately, the impacts of the pose ambiguity have largely been overlooked in previous literature, despite its potentially critical impact on practical applications, including but not limited to SfM, SLAM and PnP. 

The pose ambiguity brought about by the generic calibration is fundamentally distinct from the global-scale ambiguity \cite{Hartley2003}. The latter is a well-known property of multi-view geometry, independent of the calibration method. For example, the projection of a 3D point $\mathbf{X}$ from a world frame onto the camera's normalized image plane is given by ${\mathbf{x}} = (R^C{\mathbf{X}} + {\mathbf{t}}^C)/z$, where $(R^C, \mathbf{t}^C)$ denotes the world-to-camera pose. This projection relationship remains invariant under a global similarity transformation $\left\{R_s, \mathbf{t}_s, \alpha\right\}$ applied to the entire scene and all cameras: 
\begin{equation}
\label{eq2}
\begin{aligned}
R^C &\rightarrow R^C R_s^\top, \\
\mathbf{t}^C &\rightarrow \alpha\left(\mathbf{t}^C-R^C \mathbf{t}_s\right), \\
\mathbf{X} &\rightarrow \alpha R_s\left(\mathbf{X}+\mathbf{t}_s\right).
\end{aligned}
\end{equation}


\section{Camera Pose Estimation to Solve Pose Ambiguity}\label{sec4}
The above-identified pose ambiguity is inherent in current generic calibration methods. Fortunately, it can be solved by exploiting the radial alignment constraint \cite{Hartley2007,Kukelova2013, Camposeco2015, Lin2020,Lochman2021,Pan2022}, namely, the image point should lie on the radial line emanating from the principal point. This section proposes a linear method for estimating the rotation ambiguity matrix using the radial alignment constraint within a RANSAC framework. 

\subsection{Linear Solution for Pose Ambiguity}
Specifically, to resolve the pose ambiguity above, we stack 3D points from all pattern poses into a unified set and re-index them globally using a single index $k$. Let $\tilde{\mathbf{X}}_{k}^{C}=(\tilde{X}_{k}^{C}, \tilde{Y}_{k}^{C}, \tilde{Z}_{k}^{C})^{T}$. Let $f_{k}$ denote the point-wise focal length \cite{Pan2022}, which incorporates both focal length and radial distortion information, and $\mathbf{c} = (c_x, c_y)$ denote the principal point of the image. According to Eq. \eqref{eq26}, the projection of 3D points $\tilde{\mathbf{X}}_{k}^{C}$ can be expressed as
\begin{equation}
\label{eq31:projection_expression}
z_{k}\begin{pmatrix}
u_{k}-c_{x} \\
v_{k}-c_{y} \\
1
\end{pmatrix}=\begin{pmatrix}
f_{k} & 0 & 0 \\
0 & f_{k} & 0 \\
0 & 0 & 1
\end{pmatrix} \Lambda^{T} \tilde{\mathbf{X}}_{k}^{C}.
\end{equation}
By eliminating $z_{k}$ and $f_{k}$ using the first two equations, we obtain
\begin{equation}
\label{eq32}
\frac{u_{k}-c_{x}}{v_{k}-c_{y}}=\frac{\gamma_{11} \tilde{X}_{k}^{C}+\gamma_{12} \tilde{Y}_{k}^{C}+\gamma_{13} \tilde{Z}_{k}^{C}}{\gamma_{21} \tilde{X}_{k}^{C}+\gamma_{22} \tilde{Y}_{k}^{C}+\gamma_{23} \tilde{Z}_{k}^{C}},
\end{equation}
where $\gamma_{ij}$ represents the $i$-th row and the $j$-th column of $\Lambda^{T}$. Rearranging Eq. \eqref{eq32} for each point correspondence yields one constraint. Stacking these constraints for six correspondences results in the following 6x9 homogeneous linear system:
\begin{equation}
\label{eq33:linear_system_expanded}
\resizebox{.95\linewidth}{!}{$
\begin{aligned}
&\begin{pmatrix}
u_1\tilde{X}_1^C & u_1\tilde{Y}_1^C & u_1\tilde{Z}_1^C & v_1\tilde{X}_1^C & v_1\tilde{Y}_1^C & v_1\tilde{Z}_1^C & \tilde{X}_1^C & \tilde{Y}_1^C & \tilde{Z}_1^C \\
u_2\tilde{X}_2^C & u_2\tilde{Y}_2^C & u_2\tilde{Z}_2^C & v_2\tilde{X}_2^C & v_2\tilde{Y}_2^C & v_2\tilde{Z}_2^C & \tilde{X}_2^C & \tilde{Y}_2^C & \tilde{Z}_2^C \\
\vdots & \vdots & \vdots & \vdots & \vdots & \vdots & \vdots & \vdots & \vdots \\
u_6\tilde{X}_6^C & u_6\tilde{Y}_6^C & u_6\tilde{Z}_6^C & v_6\tilde{X}_6^C & v_6\tilde{Y}_6^C & v_6\tilde{Z}_6^C & \tilde{X}_6^C & \tilde{Y}_6^C & \tilde{Z}_6^C
\end{pmatrix} 
\begin{pmatrix}
\gamma_{21} \\
\gamma_{22} \\
\gamma_{23} \\
-\gamma_{11} \\
-\gamma_{12} \\
-\gamma_{13} \\
\delta_1 \\
\delta_2 \\
\delta_3
\end{pmatrix} = \mathbf{0},
\end{aligned}$}
\end{equation}
with $\delta_{1}$, $\delta_{2}$ and $\delta_{3}$ defined as
\begin{equation}
\label{eq34:delta_definition}
\begin{pmatrix}
\delta_{1} \\
\delta_{2} \\
\delta_{3}
\end{pmatrix}=\begin{pmatrix}
-\gamma_{21} & \gamma_{11} \\
-\gamma_{22} & \gamma_{12} \\
-\gamma_{23} & \gamma_{13}
\end{pmatrix}\begin{pmatrix}
c_{x} \\
c_{y}
\end{pmatrix}.
\end{equation}
The solution $\mathbf{q}$ to Eq. \eqref{eq33:linear_system_expanded} is derived from the singular vectors corresponding to the three smallest singular values, denoted as $\mathbf{q}_{1}$, $\mathbf{q}_{2}$, $\mathbf{q}_{3}$. Thus, $\mathbf{q}$ can be written as
\begin{equation}
\label{eq35:q_solution}
\mathbf{q}=a \mathbf{q}_{1}+b \mathbf{q}_{2}+c \mathbf{q}_{3},
\end{equation}
where $a$, $b$ and $c$ are unknown coefficients. To simplify the calculation, we temporarily fix the scale by setting $c=1$. Since the first six elements of $\mathbf{q}$ correspond to the first two rows of the rotation matrix $\Lambda^{T}$, substituting Eq. \eqref{eq35:q_solution} into the constraints of the rotation matrix results in a multivariate polynomial in terms of the unknown parameters $a$ and $b$.

We next solve this polynomial equation using the Gröbner basis method as detailed by Stewénius \cite{Stewenius2006} and Kneip \cite{Kneip2012}. Specifically, we first derive a Gröbner basis consisting of ten polynomial constraints from $\Lambda^{T} \Lambda=I_{3 \times 3}$ and $|\Lambda|=1$, which yields an equivalent set of 20 quadratic constraints:

\begin{equation}
\label{eq36:polynomial_constraints}
\left\{\begin{array}{l}
\gamma_{31}-\gamma_{12} \gamma_{23}+\gamma_{13} \gamma_{22}=0 \\
\gamma_{32}-\gamma_{13} \gamma_{21}+\gamma_{11} \gamma_{23}=0 \\
\gamma_{33}-\gamma_{11} \gamma_{22}+\gamma_{12} \gamma_{21}=0 \\
\gamma_{11}^{2}-\gamma_{22}^{2}-\gamma_{23}^{2}-\gamma_{32}^{2}-\gamma_{33}^{2}+1=0 \\
\gamma_{11} \gamma_{12}+\gamma_{21} \gamma_{22}+\gamma_{31} \gamma_{32}=0 \\
\gamma_{12}^{2}+\gamma_{22}^{2}+\gamma_{32}^{2}-1=0 \\
\gamma_{11} \gamma_{13}+\gamma_{21} \gamma_{23}+\gamma_{31} \gamma_{33}=0 \\
\gamma_{12} \gamma_{13}+\gamma_{22} \gamma_{23}+\gamma_{32} \gamma_{33}=0 \\
\gamma_{13}^{2}+\gamma_{23}^{2}+\gamma_{33}^{2}-1=0 \\
\gamma_{11} \gamma_{21}+\gamma_{12} \gamma_{22}+\gamma_{13} \gamma_{23}=0 \\
\gamma_{21}^{2}+\gamma_{22}^{2}+\gamma_{23}^{2}-1=0 \\
\gamma_{11} \gamma_{31}+\gamma_{12} \gamma_{32}+\gamma_{13} \gamma_{33}=0 \\
\gamma_{12} \gamma_{31}-\gamma_{11} \gamma_{32}-\gamma_{23}=0 \\
\gamma_{22}-\gamma_{11} \gamma_{33}+\gamma_{13} \gamma_{31}=0 \\
\gamma_{21} \gamma_{31}+\gamma_{22} \gamma_{32}+\gamma_{23} \gamma_{33}=0 \\
\gamma_{13}-\gamma_{21} \gamma_{32}+\gamma_{22} \gamma_{31}=0 \\
\gamma_{23} \gamma_{31}-\gamma_{21} \gamma_{33}-\gamma_{12}=0 \\
\gamma_{31}^{2}+\gamma_{32}^{2}+\gamma_{33}^{2}-1=0 \\
\gamma_{13} \gamma_{32}-\gamma_{12} \gamma_{33}-\gamma_{21}=0 \\
\gamma_{11}-\gamma_{22} \gamma_{33}+\gamma_{23} \gamma_{32}=0

\end{array}\right..
\end{equation}
By replacing $\{\gamma_{31}, \gamma_{32}, \gamma_{33}\}$ with $\{\gamma_{11}, \gamma_{12}, \gamma_{13}, \gamma_{21}, \gamma_{22}, \gamma_{23}\}$, and expanding the equations, we obtain 15 quadratic constraints in terms of $\{\gamma_{11}, \gamma_{12}, \gamma_{13}, \gamma_{21}, \gamma_{22}, \gamma_{23}\}$ that correspond precisely to the first six elements of the vector $\mathbf{q}$. Substituting Eq. \eqref{eq35:q_solution} into the above constraints yields
\begin{equation}
\label{eq37:matrix_equation}
D\mathbf{y} = \mathbf{0},
\end{equation}
where $D$ is a $15 \times 15$ coefficient matrix constructed from the elements of $\mathbf{q}_1$, $\mathbf{q}_2$ and $\mathbf{q}_3$. $\mathbf{y}=(a^4, b^4, ab^3, a^2b^2, a^3b, b^3, ab^2, a^2b, a^3, b^2, ab, a^2, b, a, 1)^T$. By applying Gaussian elimination to $D$, we transform its first five columns into an identity matrix and select the first five polynomial equations,
\begin{equation}
\label{eq38:reduced_system}
(I_{5 \times 5} \mid M_{5 \times 10})\mathbf{y} = \mathbf{0}.
\end{equation}
The structure of Eq. \eqref{eq38:reduced_system} is summarized in Table \ref{tab1:polynomial_equations}.

\begin{table}[!t]
\caption{System of polynomial equations}
\label{tab1:polynomial_equations}
\centering
\renewcommand{\arraystretch}{1.3}
\scriptsize
\setlength{\tabcolsep}{2pt}
\begin{tabular}{@{}c*{15}{c}@{}}
\toprule
 $a^{4}$ & $b^{4}$ & $ab^{3}$ & $a^{2}b^{2}$ & $a^{3}b$ & $b^{3}$ & $ab^{2}$ & $a^{2}b$ & $a^{3}$ & $b^{2}$ & $ab$ & $a^{2}$ & $b$ & $a$ & 1 \\ 
\midrule
1 &  &  &  &  & -- & -- & -- & -- & -- & -- & -- & -- & -- & -- \\
  & 1 &  &  &  & -- & -- & -- & -- & -- & -- & -- & -- & -- & -- \\
  &  & 1 &  &  & -- & -- & -- & -- & -- & -- & -- & -- & -- & -- \\
  &  &  & 1 &  & -- & -- & -- & -- & -- & -- & -- & -- & -- & -- \\
  &  &  &  & 1 & -- & -- & -- & -- & -- & -- & -- & -- & -- & -- \\
\bottomrule
\end{tabular}

\scriptsize
\noindent `--' denotes non-zero coefficients; empty spaces denote zero coefficients.
\vspace{-2em}  
\end{table}
\vspace{0.5em}

Let the lower-order monomials form the vector $\mathbf{g}=(b^3, ab^2, a^2b, a^3, b^2, ab, a^2, b, a, 1)^T$. As illustrated in Table \ref{tab1:polynomial_equations}, the first five elements of $\mathbf{y}$ can be linearly expressed in terms of the elements of $\mathbf{g}$. Therefore, we formulate the equation
\begin{equation}
\label{eq39:eigenvalue_problem}
Q\mathbf{g} = b\mathbf{g},
\end{equation}
where $Q=\left(-M' \quad \mathbf{e}_1 \quad \mathbf{e}_2 \quad \mathbf{e}_3 \quad \mathbf{e}_5 \quad \mathbf{e}_6 \quad \mathbf{e}_8\right)^\top$; $M'$ comprises the transpose of the 2nd $\sim$ 5th rows of $M_{5 \times 10}$; and $\mathbf{e}_i$ denotes the $i$-th column of a 10-dimensional identity matrix. As shown in Eq. \eqref{eq39:eigenvalue_problem}, $\mathbf{g}$ is an eigenvector of $Q$. Let $\mathbf{g}_{i,j}$ denote the $i$-th coefficient of $Q$'s $j$-th eigenvector. Then the solution for $a$ and $b$ in Eq. \eqref{eq35:q_solution} can be expressed as
\begin{equation}
\label{eq40:ab_solution}
\{(a, b) \mid a = \mathbf{g}_{9,j} / \mathbf{g}_{10,j}, b = \mathbf{g}_{8,j} / \mathbf{g}_{10,j}\}.
\end{equation}
The ten eigenvectors of $Q$ correspond to ten sets of candidate solutions for $(a, b)$. Subsequently, by leveraging the unit-length property of the 1st--3rd and 4th--6th elements of $\mathbf{q}$, the elements in $\mathbf{q}$ are proportionally scaled. Since the scaling ratio can be either positive or negative, there are a total of 20 candidate solutions to pose ambiguity. The corresponding principal point can be subsequently determined using Eq. \eqref{eq34:delta_definition}.

\subsection{Camera Pose Selection and Nonlinear Optimization with Regularization}
Eq. \eqref{eq40:ab_solution} provides 20 candidate solutions to pose ambiguity. In this subsection, we select the optimal solution among these candidates and further refine it through nonlinear optimization. Specifically, we apply the following criteria for rapid screening: 
\begin{itemize}
\item The estimated principal point must lie within the valid region of the image. 
\item Visible 3D points must have positive depth. 
\item The radial distances of visible points in the normalized image plane must lie within a reasonable range. 
\end{itemize}
Among solutions satisfying these conditions, we evaluate their corresponding reprojected radial distances and select the one with the largest number of inliers as the initial estimate for the pose ambiguity matrix and the principal point.

Previous studies on non-parametric camera calibration have employed various strategies: Lin \cite{Lin2020} minimized the reprojection distance between points and corresponding radial lines; Hartley \cite{Hartley2007} and Camposeco \cite{Camposeco2015} employed a sliding median filter to mitigate the impact of noise; and Pan \cite{Pan2022} replaced the parametric model with regularization to achieve smoothly varying distortion mappings. Similarly, our optimization approach for the pose ambiguity and principal point does not depend on any specific parametric camera model. Our method minimizes the reprojection error perpendicular to the radial lines and simultaneously introduces additional constraints that incorporate residual information along the radial direction. Specifically, given sufficient point correspondences $\Lambda^T\tilde{\mathbf{X}}_k^C$ and $\mathbf{u}_k$, we arrange these points according to their radial distances on the image plane $\|\mathbf{d}_k\|$, where $\mathbf{d}_k = \mathbf{u}_k - \mathbf{c}$. Denote $\Lambda^T\tilde{\mathbf{X}}_k^C = (X_k^C, Y_k^C, Z_k^C)^T$, and let $\boldsymbol{\chi}_k = (X_k^C/Z_k^C, \alpha Y_k^C/Z_k^C)^T$, where $\alpha$ denotes the aspect ratio. The optimization problem is then formulated as
\begin{equation}
\label{eq41:optimization_problem}
\resizebox{.85\linewidth}{!}{$
\begin{aligned}
\min_{\Lambda,\mathbf{c},\alpha} \left\{
\sum_{k} \rho\left(
\right.\right.&\|\pi_r(\boldsymbol{\chi}_k, \mathbf{d}_k)-\mathbf{d}_k\|^2 \\
&+ \bigl(\|\pi_r(\boldsymbol{\chi}_k,\mathbf{d}_k)\| - \|\hat{\mathbf{d}}_k\|\bigr)^2
\left.\left.
\right)
\right\},
\end{aligned}$}
\end{equation}
where $\rho$ is a robust loss function and $\pi_r$ is the projection function $\pi_r(a,b)=\frac{a^Tb}{a^Ta}a$, $\pi_r:\mathbb{R}^2 \times \mathbb{R}^2 \rightarrow \mathbb{R}^2$. The first term in Eq. \eqref{eq41:optimization_problem} corresponds to the reprojection error orthogonal to the radial line. The second term measures the error along the radial line, where $\|\hat{\mathbf{d}}_k\|$ is a reference radial distance introduced to enforce local smoothness. This reference distance is computed for each point $k$ via a local polynomial fitting procedure. Specifically, a polynomial $\mathcal{P}_k$ is fitted to the point and its $2n$ neighbors, mapping their normalized radial distances $\|\boldsymbol{\chi}_j\|$ to their observed image-plane radial distances $\|\mathbf{d}_j\|$ for $j \in [k-n, k+n]$. Then, the $k$-th reference distance is the polynomial's prediction, i.e., $\|\hat{\mathbf{d}}_k\| = \mathcal{P}_k(\|\boldsymbol{\chi}_k\|)$. This fitting procedure adopts a size-decreasing window strategy, initially using larger sliding windows (e.g., 100 points) for stable estimates and progressively reducing to smaller windows (e.g., 5 points) to improve local adaptability for various complex lens models.

\section{Pipeline of Proposed Generic-Parametric Hybrid Calibration}\label{sec5}
\subsection{Initial Estimation of Extrinsic Parameters}
Suppose $N$ images are captured of a calibration pattern containing $m$ grid points. Neglecting camera distortion, we initialize the extrinsic parameters, i.e., the transformation from each calibration pattern to the camera coordinate frame, using Zhang's approach \cite{Zhang2000}.

Ignoring lens distortion, the projection of a 3D point onto the image plane can be expressed as   
\begin{equation}
\label{eq20}
\begin{aligned}
\begin{pmatrix}
u_{il} \\
v_{il} \\
1
\end{pmatrix} &= \frac{1}{z} K\begin{pmatrix}\mathbf{r}_{i,1} & \mathbf{r}_{i,2} & \mathbf{t}_{i}\end{pmatrix}
\begin{pmatrix}
X_{il} \\
Y_{il} \\
1
\end{pmatrix}
\overset{\triangle}{=} H_{I}^{W}\begin{pmatrix}
X_{il} \\
Y_{il} \\
1
\end{pmatrix}.
\end{aligned}
\end{equation}
The homography matrix $H_I^W$ in Eq. \eqref{eq20} is derived under the assumption of negligible lens distortion when mapping between the calibration pattern and the image plane. This simplification is made for the initial estimation. In contrast, the homography $H_{ij}$ presented in Eq. \eqref{eq18} provides an exact mapping between pairs of calibration pattern planes, immune to any lens distortion.

Notably, the extrinsic parameters initialized by Zhang's method are unambiguous.

\begin{proposition}
\label{prop5}
The traditional parametric-model calibration approach \cite{Zhang2000} yields a unique solution for the transformation from the pattern coordinate system to the camera coordinate system without ambiguity.
\end{proposition}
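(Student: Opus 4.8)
The plan is to contrast the mapping that Zhang's method recovers with the pattern-to-pattern homography $H_{ij}$ of Eq. \eqref{eq18}, and to show that the former pins down an absolute frame that the latter cannot. First I would write out what Zhang's calibration actually estimates: from Eq. \eqref{eq20}, each image yields a homography $H_I^W = \frac{1}{z} K\begin{pmatrix}\mathbf{r}_{i,1} & \mathbf{r}_{i,2} & \mathbf{t}_i\end{pmatrix}$ mapping pattern coordinates directly to \emph{image pixel} coordinates, with the same intrinsic matrix $K$ shared across all views. The crucial observation is that this homography involves the fixed, known image plane through $K$, not merely the relation between two patterns. Once $K$ is fixed, $K^{-1}H_I^W = \frac{1}{z}\begin{pmatrix}\mathbf{r}_{i,1} & \mathbf{r}_{i,2} & \mathbf{t}_i\end{pmatrix}$, and the overall scale $z$ is determined by the constraint $\|\mathbf{r}_{i,1}\| = \|\mathbf{r}_{i,2}\| = 1$.

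Second, I would make explicit why this removes the ambiguity that afflicts the generic case. In the generic setting (Proposition \ref{prop1}), the unknown left factor $A$ survived because it cancelled between the two pattern factors in $H_{ij} = \lambda(\cdots_j)^{-1}(\cdots_i)$; only the orthogonality and unit-norm constraints on the recovered columns were available, and these are invariant under left-multiplication by any $\Lambda \in SO(3)$. Here, by contrast, the camera's intrinsic matrix $K$ fixes the image coordinate system as an external, absolute reference. Any would-be ambiguity transformation $\tilde{R}_i = \Lambda R_i$, $\tilde{\mathbf{t}}_i = \Lambda \mathbf{t}_i$ would change the projected pixels to $K\Lambda\begin{pmatrix}\mathbf{r}_{i,1} & \mathbf{r}_{i,2} & \mathbf{t}_i\end{pmatrix}$, which equals the observed $H_I^W$ (up to scale) only if $K\Lambda = cK$ for a scalar $c$; since $K$ is invertible this forces $\Lambda = cI$, and $\Lambda \in SO(3)$ then gives $\Lambda = I$. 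The reflection branch is excluded by requiring the recovered $R_i \in SO(3)$ (positive determinant) together with the chirality condition that reconstructed points have positive depth.

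Third, I would assemble these pieces into the uniqueness claim: given the observed pixel correspondences, $K$ and each $(R_i,\mathbf{t}_i)$ are determined up to the sign/scale choices already resolved by $\|\mathbf{r}_{i,1}\|=1$ and $\det R_i = +1$, leaving no residual free rotation $\Lambda$. In effect, because the pixel measurements encode the calibration pattern's relationship to the \emph{physical image plane} rather than to another pattern, the absolute orientation of the camera frame is observable. The main obstacle I anticipate is stating precisely the distinction from the genuine structure-from-motion scale ambiguity of Eq. \eqref{eq2}: I must argue that the global similarity freedom there arises because the world frame is itself unobserved, whereas in Zhang's single-camera calibration the camera frame is the reference frame and the patterns are measured relative to it, so no analogous gauge freedom remains. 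I would therefore be careful to phrase the conclusion as uniqueness of the pattern-to-camera transformations specifically, and to note that this is exactly the absolute reference the generic homography factorization lacked.
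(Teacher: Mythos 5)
Your intuition---that the image plane enters as an absolute reference absent from the pattern-to-pattern homography $H_{ij}$---is the right one, but your proof has a circularity that the paper's own proof is specifically constructed to avoid. You treat $K$ as ``fixed, known'' and then argue that $K\Lambda = cK$ forces $\Lambda = I$. In Zhang's calibration, however, $K$ is not known a priori; it is estimated jointly with the extrinsics from the same homographies. The ambiguity you must rule out is therefore a joint one: for any invertible $B$, the pair $K' = KB^{-1}$ and pose $B\begin{pmatrix}\mathbf{r}_{i,1} & \mathbf{r}_{i,2} & \mathbf{t}_i\end{pmatrix}$ reproduces exactly the same $H_I^W$ in Eq. \eqref{eq20}, so the observed pixels alone cannot distinguish them. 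Your argument, by holding $K$ fixed on both sides of the equation, never meets this family of alternatives; the assertion in your third step that ``$K$ and each $(R_i,\mathbf{t}_i)$ are determined'' is exactly the claim to be proved, not a premise you may invoke.

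The paper closes this loophole with structural constraints on the intrinsic matrix rather than by fixing it. There, the unit-norm and orthogonality constraints on $B\mathbf{r}_{i,1}$ and $B\mathbf{r}_{i,2}$ force $B$ to be orthogonal (the same step as in Proposition \ref{prop1}), while the requirement that both $K$ and $K' = KB^{-1}$ be upper-triangular forces $B = K'^{-1}K$ to be upper-triangular as well; an orthogonal upper-triangular matrix must be diagonal with entries $\pm 1$, and the positivity requirements on a valid intrinsic matrix then pin $B$ to the identity. If you want to keep your proof structure, you would first have to establish a separate lemma that $K$ itself is uniquely recoverable (e.g., via the absolute-conic constraints in Zhang's method with at least three views in general position) before writing $K^{-1}H_I^W$; without that, the step ``once $K$ is fixed'' is unjustified. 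Note also that the paper excludes the residual sign/reflection branch through positivity of the intrinsic entries, not through the chirality of reconstructed points that you propose; your mechanism is plausible but imports an assumption beyond what the homography factorization itself provides.
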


\begin{proof}
We demonstrate through contradiction that the poses recovered by Eq. \eqref{eq20} are unique under the assumption of no distortion. Suppose there exist multiple pose solutions. Then, for any invertible matrix $B$, we have $H_{I}^{W}=\frac{1}{z} K B^{-1} B\begin{pmatrix}\mathbf{r}_{i,1} & \mathbf{r}_{i,2} & \mathbf{t}_{i}\end{pmatrix}$, that is, any combination of intrinsic matrix $K^{\prime}=K B^{-1}$ and pose solution $B\begin{pmatrix}\mathbf{r}_{i,1} & \mathbf{r}_{i,2} & \mathbf{t}_{i}\end{pmatrix}$ satisfies the above constraint. Additionally, the transformed vectors $B \mathbf{r}_{i,1}$ and $B \mathbf{r}_{i,2}$ for any pose solution must satisfy $\|B \mathbf{r}_{i,1}\|=\|B \mathbf{r}_{i,2}\|=1$ and $(B \mathbf{r}_{i,1})^{T} B \mathbf{r}_{i,2}=0$, therefore $B$ must be an orthogonal matrix. Both intrinsic matrices $K^{\prime}$ and $K$ are upper-triangular, so $B$ is orthogonal and upper-triangular simultaneously. Using $b_{i,j}$ to denote the $i$-th row and the $j$-th column of $B$, it follows that
\begin{equation}
\label{eq21}
\resizebox{.97\linewidth}{!}{$
\begin{array}{lll}
b_{1,1}^{2}+b_{2,1}^{2}+b_{3,1}^{2}=1 & \Rightarrow b_{1,1}^{2}=1 & \Rightarrow b_{1,1}= \pm 1 \\
b_{1,2}^{2}+b_{2,2}^{2}+b_{3,2}^{2}=1 & \Rightarrow b_{1,2}^{2}+b_{2,2}^{2}=1 & \Rightarrow b_{2,2}= \pm 1 \\
b_{1,3}^{2}+b_{2,3}^{2}+b_{3,3}^{2}=1 & \Rightarrow b_{1,3}^{2}+b_{2,3}^{2}+b_{3,3}^{2}=1 & \Rightarrow b_{3,3}= \pm 1 \\
b_{1,1} b_{1,2}+b_{2,1} b_{2,2}+b_{3,1} b_{3,2}=0 & \Rightarrow b_{1,1} b_{1,2}=0 & \Rightarrow b_{1,2}=0 \\
b_{1,1} b_{1,3}+b_{2,1} b_{2,3}+b_{3,1} b_{3,3}=0 & \Rightarrow b_{1,1} b_{1,3}=0 & \Rightarrow b_{1,3}=0 \\
b_{1,2} b_{1,3}+b_{2,2} b_{2,3}+b_{3,2} b_{3,3}=0 & \Rightarrow b_{1,2} b_{1,3}+b_{2,2} b_{2,3}=0 & \Rightarrow b_{2,3}=0 \\
b_{2,1}=0 & b_{2,1}=0 & b_{2,1}=0 \\
b_{3,1}=0 & b_{3,1}=0 & b_{3,1}=0 \\
b_{3,2}=0 & b_{3,2}=0 & b_{3,2}=0
\end{array}.$}
\end{equation}
Then we obtain $B$ as follows:
\begin{equation}
\label{eq22}
B=\begin{pmatrix} 
\pm 1 & 0 & 0 \\
0 & \pm 1 & 0 \\
0 & 0 & \pm 1
\end{pmatrix}.
\end{equation}
Moreover, a valid intrinsic matrix $K'$ must have positive principal point components, which constrains the matrix $B$ to be the identity matrix. Consequently, the pose solution derived from Eq. \eqref{eq20} is unambiguous.
\end{proof}

Proposition \ref{prop5} indicates that the unambiguous extrinsic parameters provided by \cite{Zhang2000} correspond precisely to the case $\tilde{R}_i=\Lambda R_i$ and $\tilde{\mathbf{t}}_i=\Lambda \mathbf{t}_i$, where $\Lambda$ is the identity matrix. In other words, these initial estimates correspond to the rotation ambiguity instance in Proposition \ref{prop1}. This initialization provides a well-defined starting point that improves the stability and convergence of subsequent global optimization.

\subsection{Global Optimization of Extrinsic Parameters}
We construct a connectivity graph to facilitate global optimization. Each pattern pose is represented as a node, with edges connecting nodes whose corresponding patterns exhibit overlapping regions on the image plane. Edge weights are determined by overlap quality, and the node with the highest cumulative edge weight serves as the root node, defining the world coordinate system. When multiple disconnected subgraphs exist, only the largest is utilized. Users may integrate the disconnected components by capturing additional calibration images.

\begin{figure}[!t]
\centering
\includegraphics[width=0.46\textwidth]{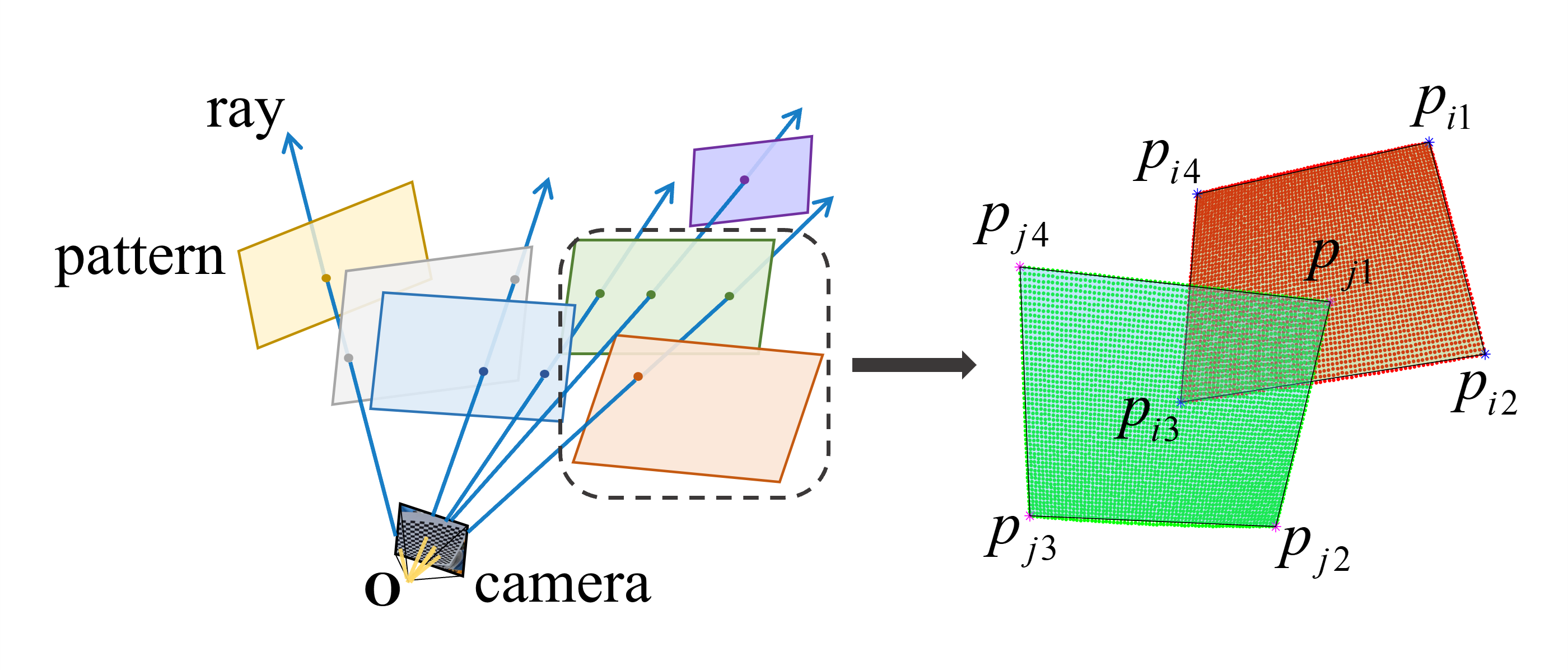}
\caption{An example of the simplified quadrilateral structure. \textit{Left:} Multiple poses of a single calibration pattern are captured by a fixed camera. \textit{Right:} To efficiently build the connectivity graph, each pattern's projection in the image plane is simplified to a bounding quadrilateral defined by its four corner points.}
\label{fig2:quadrilateral}
\end{figure}

A key step in building the connectivity graph is to identify which pattern poses have overlapping regions on the image plane, as these regions allow geometric constraints (e.g., collinearity as in Fig. \ref{fig_1}) to be exploited. However, determining pattern overlaps for $N$ images traditionally involves $C_N^2$ pairwise comparisons. Point-by-point comparison further escalates computational complexity to $O(C_N^2 m^2)$. While Schöps \cite{Schops2020} addressed this through image downsampling and direct pixel comparison, such an approach risks precision loss. To maintain accuracy while improving efficiency, we propose a strategy illustrated in Fig. \ref{fig2:quadrilateral}, where each calibration pattern region in the image plane is simplified to a quadrilateral. Let $\{\mathbf{p}_{i1}, \mathbf{p}_{i2}, \mathbf{p}_{i3}, \mathbf{p}_{i4}\}$ denote the four vertices of the $i$-th quadrilateral, and let linear inequalities $\mathbf{l}_{ik}: a_{ik}^1 u+a_{ik}^2 v\leq b_{ik}$ define the $k$-th edge of the $i$-th quadrilateral, where coefficients $a_{ik}^1, a_{ik}^2, b_{ik}$ ensure that image points $\mathbf{u}$ within the $i$-th quadrilateral satisfy $A_i \mathbf{u} \leq \mathbf{b}_i$, namely,
\begin{equation}
\label{eq43:quadrilateral_constraint}
\begin{pmatrix}
a_{i1}^1 & a_{i1}^2 \\
a_{i2}^1 & a_{i2}^2 \\
a_{i3}^1 & a_{i3}^2 \\
a_{i4}^1 & a_{i4}^2
\end{pmatrix}\begin{pmatrix}u\\v\end{pmatrix} \leq\begin{pmatrix}
b_{i1} \\
b_{i2} \\
b_{i3} \\
b_{i4}
\end{pmatrix}.
\end{equation}

The existence of an edge between nodes $i$ and $j$ can be efficiently verified by checking whether at least one vertex from either quadrilateral satisfies
\begin{equation}
\label{eq44:edge_verification}
\begin{pmatrix}A_i\\A_j\end{pmatrix}\begin{pmatrix}u\\v\end{pmatrix} \leq\begin{pmatrix}b_i\\b_j\end{pmatrix}.
\end{equation}
This formulation also enables rapid identification of image points within overlapping regions.

Using the constructed connectivity graph, we perform global optimization to refine the initial extrinsic parameters. According to Eq. \eqref{eq17} and \eqref{eq18}, when nodes $i$ and $j$ are connected by an edge in the connectivity graph, their homography relationship is $\mathbf{X}_{jp} = H_{ij}\mathbf{X}_{il}$. The loss function for minimizing reprojection error is formulated as
\begin{equation}
\label{eq46:loss_function}
L = \sum_{i,j} \sum_{l,p} \|H_{ij} \mathbf{X}_{il} - \mathbf{X}_{jp}\|.
\end{equation}
Notably, the optimization problem $\min_{\tilde{R}_i, \tilde{\mathbf{t}}_i} L$ is independent of any specific parametric camera model, which avoids the systematic errors often introduced by model-lens mismatches. This yields robust estimates for $\tilde{R}_i$ and $\tilde{\mathbf{t}}_i$, particularly across cameras with diverse lens distortion characteristics.

While the globally optimized poses are subject to the pose ambiguity characterized in Proposition \ref{prop1}, our proposed pipeline effectively resolves it. The optimization is initialized using the unambiguous extrinsic parameters from Zhang's method (see Proposition \ref{prop5}). This strong initialization reliably facilitates convergence towards the specific rotation ambiguity case ($\tilde{R}_i=\Lambda R_i$ and $\tilde{\mathbf{t}}_i=\Lambda \mathbf{t}_i$), rather than reflection ambiguity. Consequently, this constrained ambiguity can then be explicitly and reliably resolved by the method detailed in Section \ref{sec4}.

\subsection{Intrinsic Parameters Computation}
Our calibration framework supports both parametric and generic camera models. The procedure for determining the camera's intrinsic parameters using the generic camera model is not further elaborated in this work for the sake of brevity. For parametric models, the mapping between the normalized image coordinate $\mathbf{x}_k$ and the image coordinate $\mathbf{u}_k$ is established through distortion coefficients and distortion functions. The global index $k$ enumerates the 3D points aggregated from all pattern poses. For example, Zhang \cite{Zhang2000} utilized a set of intrinsic parameters (focal length, principal point, aspect ratio, and distortion coefficients) combined with radial and tangential distortion functions to uniquely map the normalized image coordinates onto the image plane.

\begin{figure}[!t]
\centering
\includegraphics[width=0.49\textwidth]{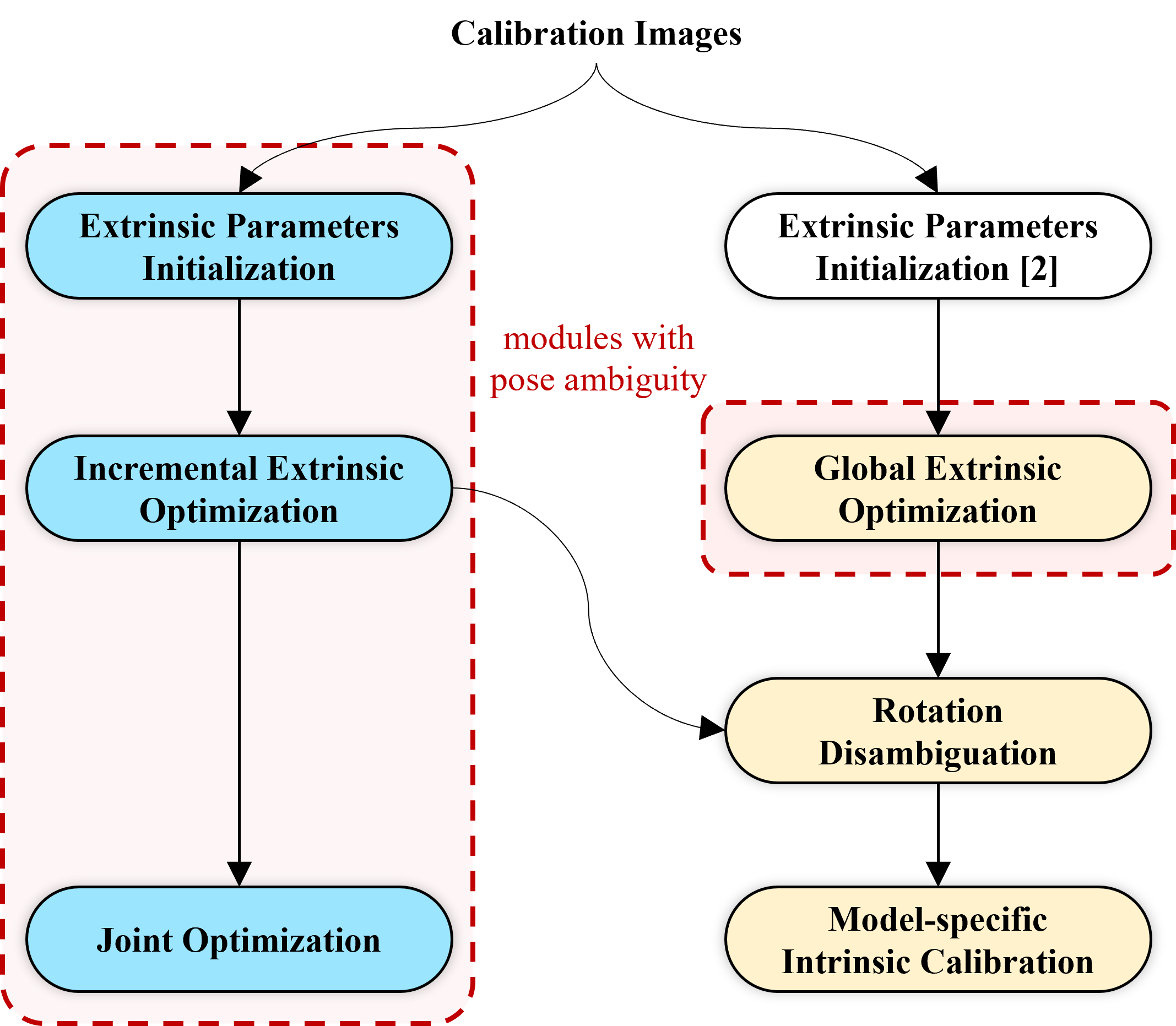}
\caption{Generic-parametric hybrid calibration framework. Blue blocks: existing generic calibration approaches; Yellow blocks: the proposed approach; Dashed boxes: modules with pose ambiguity.}
\label{fig3:calibration_framework}
\end{figure}

In this work, we employ the Levenberg-Marquardt optimization algorithm \cite{Levenberg1944,Marquardt1963} to compute the intrinsic parameters of the selected camera model by minimizing the reprojection error,
\begin{equation}
\min_{\boldsymbol{\theta}} \sum_{k}\|\Psi_{\boldsymbol{\theta}}(\mathbf{X}_{k}^{C})-\mathbf{u}_{k}\|^{2},
\label{eq:intrinsic_optimization}
\end{equation}
where $\Psi_{\boldsymbol{\theta}}(\mathbf{X}_{k}^{C})$ denotes the camera-specific projection function that maps 3D points $\mathbf{X}_{k}^{C}$ to corresponding distorted image coordinates $\mathbf{u}_{k}$, parameterized by the intrinsic parameters $\boldsymbol{\theta}$ of the selected camera model. Algorithm \ref{alg:hybrid_calibration} and Fig. \ref{fig3:calibration_framework} summarize the proposed complete calibration pipeline.

\begin{algorithm}[t]
\caption{Pipeline of Generic-Parametric Hybrid Calibration}
\label{alg:hybrid_calibration}
\begin{algorithmic}[1]
\Require Images of a calibration pattern at multiple poses.
\Ensure Intrinsic and extrinsic parameters of the camera.
\State Compute initial extrinsic parameters using Zhang's approach \cite{Zhang2000}.
\State Construct the connectivity graph and globally optimize the extrinsic parameters, yielding ambiguous estimates $(\tilde{R}_i, \tilde{\mathbf{t}}_i)$ using Eq. \eqref{eq46:loss_function}.
\State Linearly determine candidate solutions for $\Lambda$, as expressed in Eq. \eqref{eq40:ab_solution}, and select the optimal one based on geometric constraints.
\State Refine the selected solution via nonlinear optimization (Eq. \ref{eq41:optimization_problem}), and then rectify the extrinsic parameters: $R_i \leftarrow \Lambda^T \tilde{R}_i$, $\mathbf{t}_i \leftarrow \Lambda^T \tilde{\mathbf{t}}_i$.
\State Fix the rectified extrinsic parameters and determine the intrinsic parameters according to the selected camera model.
\end{algorithmic}
\end{algorithm}

\section{Experiments}\label{sec6}
We evaluate the accuracy and robustness of the proposed approach using four representative camera models:
\begin{itemize}
\item \textit{Model 1 (M1)}: Zhang \cite{Zhang2000} with two radial distortion terms.
\item \textit{Model 2 (M2)}: Zhang \cite{Zhang2000} with three radial distortion terms.
\item \textit{Model 3 (M3)}: Scaramuzza \cite{Scaramuzza2006}.
\item \textit{Model 4 (M4)}: The generic camera model \cite{Schops2020}, employing the B-spline surface for lens distortion.
\end{itemize}

Models $1\sim3$ are parametric models integrated into the MATLAB camera calibration toolbox, which are suitable for lenses ranging from pinhole-like cameras to fisheye cameras. Model 4 is taken from a state-of-the-art open-source generic calibration toolbox \cite{Schops2020}. Our approach uses the same B-spline control point density with \cite{Schops2020} to ensure a fair comparison. Two-view relative pose estimation and triangulation are solved using OpenGV \cite{Kneip2014}.

\begin{figure}[!t]
\centering
\includegraphics[width=0.45\textwidth]{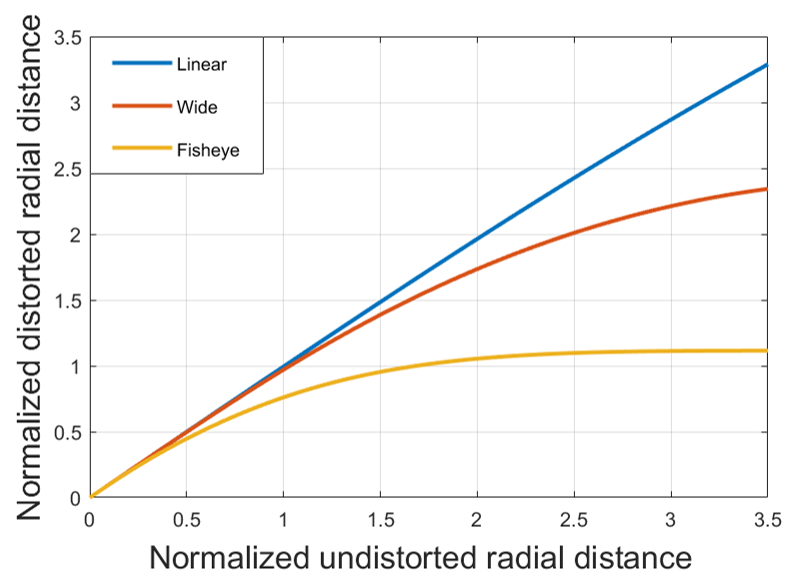}
\caption{Distortion characteristics of simulated linear, wide-angle, and fisheye lens models.}
\label{fig4:distortion_characteristics}
\end{figure}

Simulation experiments are conducted using a rational model $(1+k_1r^2+k_2r^4+k_3r^6)/(1+k_4r^2+k_5r^4+k_6r^6)$ to represent the radial distortion, where $k_1 \sim k_6$ are coefficients and $r$ denotes the radial distance on the normalized image plane. In this way, we ensure a fair comparison across the evaluated camera models by preventing overfitting. For example, the representation of Model 2 \cite{Zhang2000}, $1+k_1r^2+k_2r^4+k_3r^6$, does not exactly match the simulated distortion unless $k_4 \sim k_6$ are set to be zeros. Three lens types (linear, wide-angle, and fisheye) are simulated through the rational model above to represent different distortion levels, as illustrated in Fig. \ref{fig4:distortion_characteristics}. The aspect ratios for linear, wide-angle, and fisheye lenses are 1.0013, 1.0020, and 1.0051, respectively.

Real-world experiments involve four datasets we captured with cameras of increasing distortion: a Sony A6400 (224 images), a GoPro in linear mode (215 images), a GoPro in wide mode (217 images), and an Insta360 (136 images).

\subsection{Simulation Test}
\subsubsection{Validation of Pose Ambiguity}
This experiment is designed to validate the two distinct types of pose ambiguity outlined in Proposition \ref{prop1}, and to verify that the initialization from Zhang's method \cite{Zhang2000} reliably facilitates convergence to the rotation ambiguity, as stated in Proposition \ref{prop5}. We conduct 50 Monte Carlo simulations for each image set size (from 10 to 30), employing two different initializations for extrinsic parameter optimization in Eq. \eqref{eq46:loss_function}. For each view, we obtain an initial estimate from \cite{Zhang2000}, $(\hat{R}_i, \hat{\mathbf{t}}_i)$, which serves as a common starting point for two initialization strategies:

\begin{itemize}
\item \textit{Rotation-based initialization:} use the initial estimate $(\hat{R}_i, \hat{\mathbf{t}}_i)$ directly. 
\item \textit{Reflection-based initialization:} select a rotation matrix $\hat{\Lambda}$, then modify the initial estimate by $\hat{R}_i \leftarrow \hat{\Lambda} \hat{R}_i$ and $\hat{\mathbf{t}}_i \leftarrow \hat{\Lambda} \hat{R}_i \operatorname{diag}(1,1,-1) \hat{R}_i^{T} \hat{\mathbf{t}}_i$.
\end{itemize}

After obtaining the optimized poses $(\tilde{R}_i, \tilde{\mathbf{t}}_i)$ using Eq. \eqref{eq46:loss_function}, we evaluate the property of the resulting ambiguity. First, we compute the actual pose ambiguity matrix $\Lambda = \tilde{R}_i R_i^T$, the discrepancy between the optimized rotation and its corresponding ground-truth. Then, two rectification strategies are used to recover the translation: a rotation-based strategy yielding $\Lambda^{-1}\tilde{\mathbf{t}}_i$, and a reflection-based strategy yielding $R_i \operatorname{diag}(1,1,-1) R_i^T \Lambda^{-1}\tilde{\mathbf{t}}_i$. Finally, the translation errors of the rectified outputs are assessed against the ground-truth.

As shown in Fig. \ref{fig5:ambiguity validation}(a), when the optimization is initialized with a reflection strategy, the translation error is minimized under the reflection-based rectification strategy, confirming that the optimized pose $(\tilde{R}_i, \tilde{\mathbf{t}}_i)$ corresponds to the reflection ambiguity. Conversely, Fig. \ref{fig5:ambiguity validation}(b) shows that when using Zhang's initialization, the translation error is minimized under the rotation-based rectification strategy, confirming the rotation ambiguity case. This experiment not only confirms the existence of rotation ambiguity and reflection ambiguity, but also validates that the initialization from \cite{Zhang2000} can act as a deterministic factor for converging to the rotation ambiguity.

\begin{figure}
\centering
\setlength{\tabcolsep}{0pt}
\begin{tabular}{c}
\footnotesize
\begin{tabular}{@{}l@{\hspace{1em}}l@{}}
\hspace*{-0.2in}
Reflection-based initialization: & \hspace*{-0.15in}
\raisebox{-0.2ex}{\includegraphics[height=1.8ex]{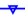}} Linear
\raisebox{-0.2ex}{\includegraphics[height=1.8ex]{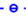}} Wide-angle
\raisebox{-0.2ex}{\includegraphics[height=1.8ex]{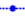}} Fisheye\\ [0.5em]
\vspace{1em}
\hspace*{-0.2in}
Rotation-based initialization:  & \hspace*{-0.15in}
\raisebox{-0.2ex}{\includegraphics[height=1.8ex]{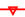}} Linear
\raisebox{-0.2ex}{\includegraphics[height=1.8ex]{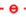}} Wide-angle
\raisebox{-0.2ex}{\includegraphics[height=1.8ex]{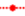}} Fisheye\\
\end{tabular} \\

\hspace*{-0.2in}
\begin{tabular}{@{}c@{\hspace{-3.5pt}}c@{}}
\includegraphics[width=0.25\textwidth]{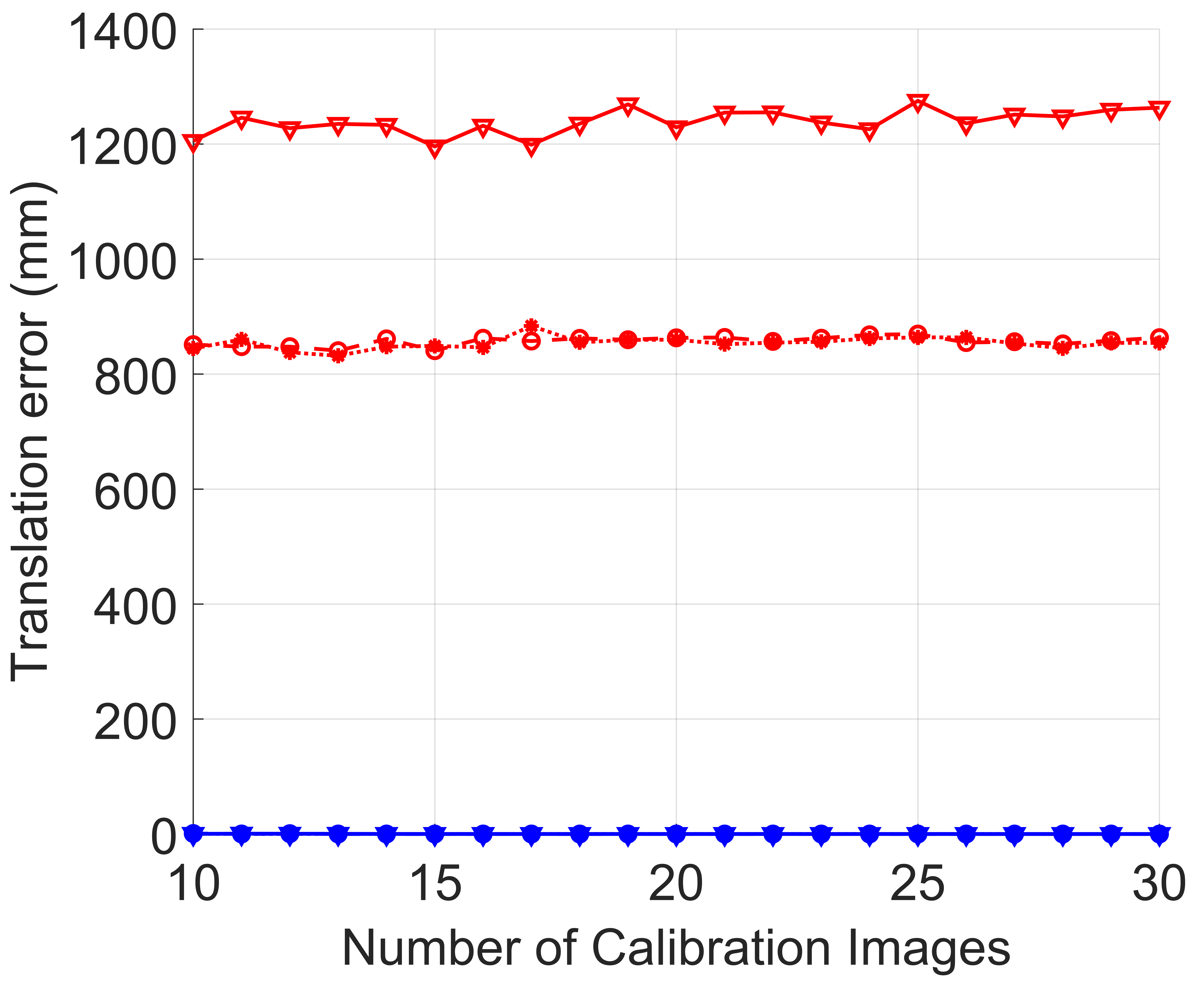}\label{fig5a:reflection} &
\includegraphics[width=0.25\textwidth]{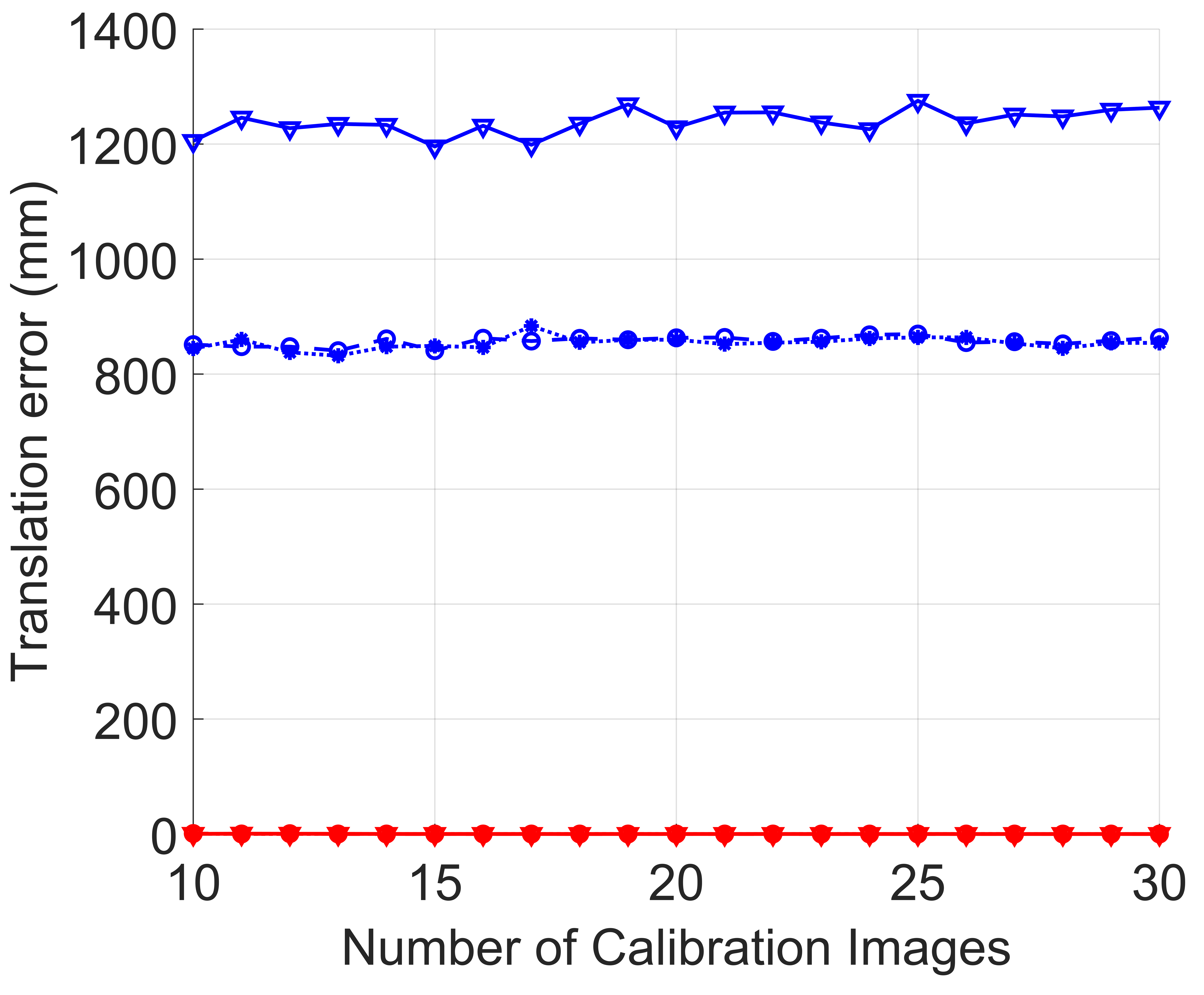}\label{fig5b:rotation} \\
\footnotesize (a) Reflection-based rectification & \footnotesize \hspace*{0.1in}(b) Rotation-based rectification \\
\end{tabular}
\end{tabular}
\caption{Behavior of extrinsic parameter estimation. Translation error under (a) the reflection-based rectification strategy: $\|R_i \operatorname{diag}(1,1,-1) R_i^T\Lambda^{-1}\tilde{\mathbf{t}}_i-\mathbf{t}_i\|$. (b) the rotation-based rectification strategy: $\|\Lambda^{-1}\tilde{\mathbf{t}}_i-\mathbf{t}_i\|$.}
\label{fig5:ambiguity validation}
\end{figure}

\begin{figure*}[!t]
\centering
\setlength{\tabcolsep}{0pt}
\begin{tabular}{c}
\scriptsize
\begin{tabular}{@{}l@{\hspace{0.3in}}l@{\hspace{0.3in}}l@{}}
\hspace*{-0.15in}\raisebox{0ex}{\color{blue}$\bullet$} Model4: Schöps (Unaligned) &
\raisebox{0ex}{\color{red}$\bullet$} Model4: Proposed (Unaligned) &
\raisebox{0ex}{\color{green}$\bullet$} Truth \\ \vspace{0.2em}
\end{tabular} \\

\begin{tabular}{@{}c@{\hspace{-0.1in}}c@{\hspace{-0.1in}}c@{}}
\scriptsize Linear lens & \scriptsize Wide-angle lens & \scriptsize Fisheye lens \\
\includegraphics[width=0.33\textwidth]{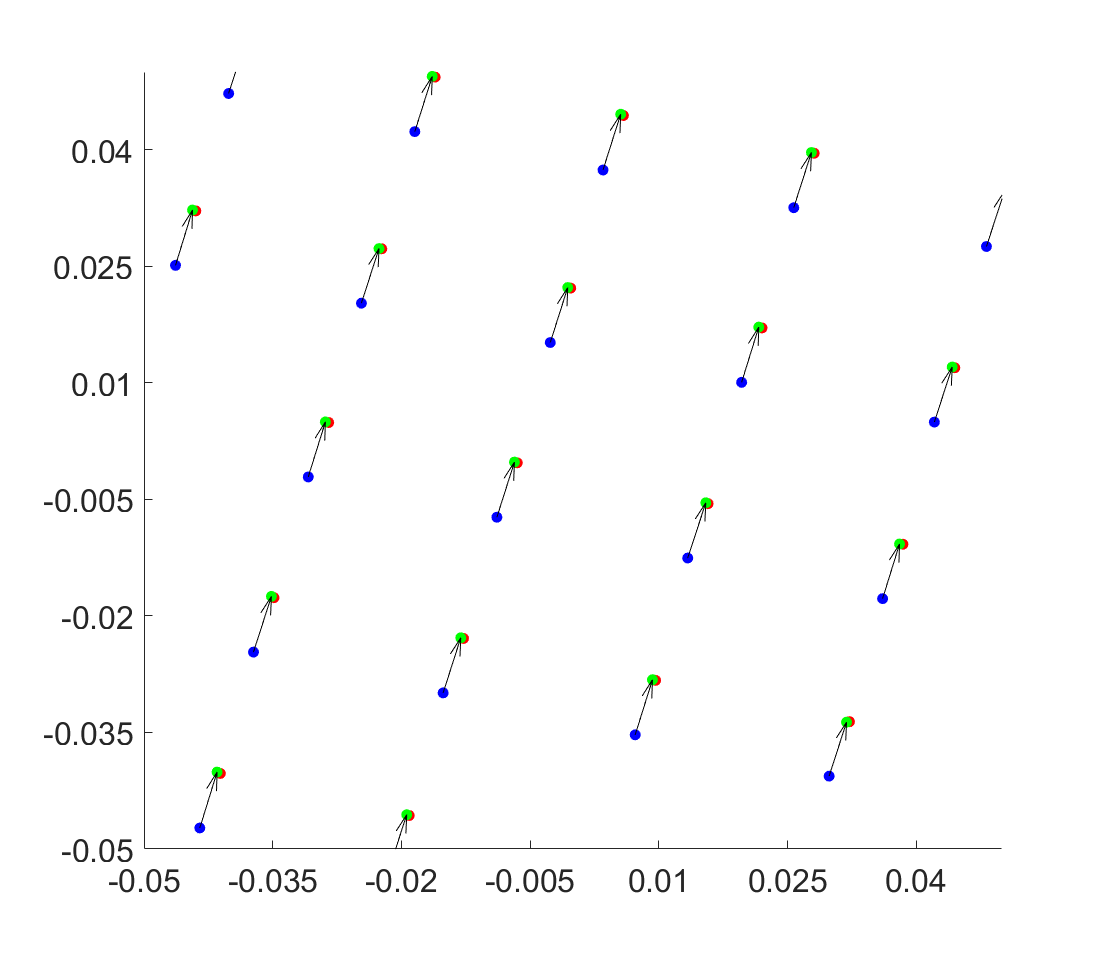} &
\includegraphics[width=0.35\textwidth]{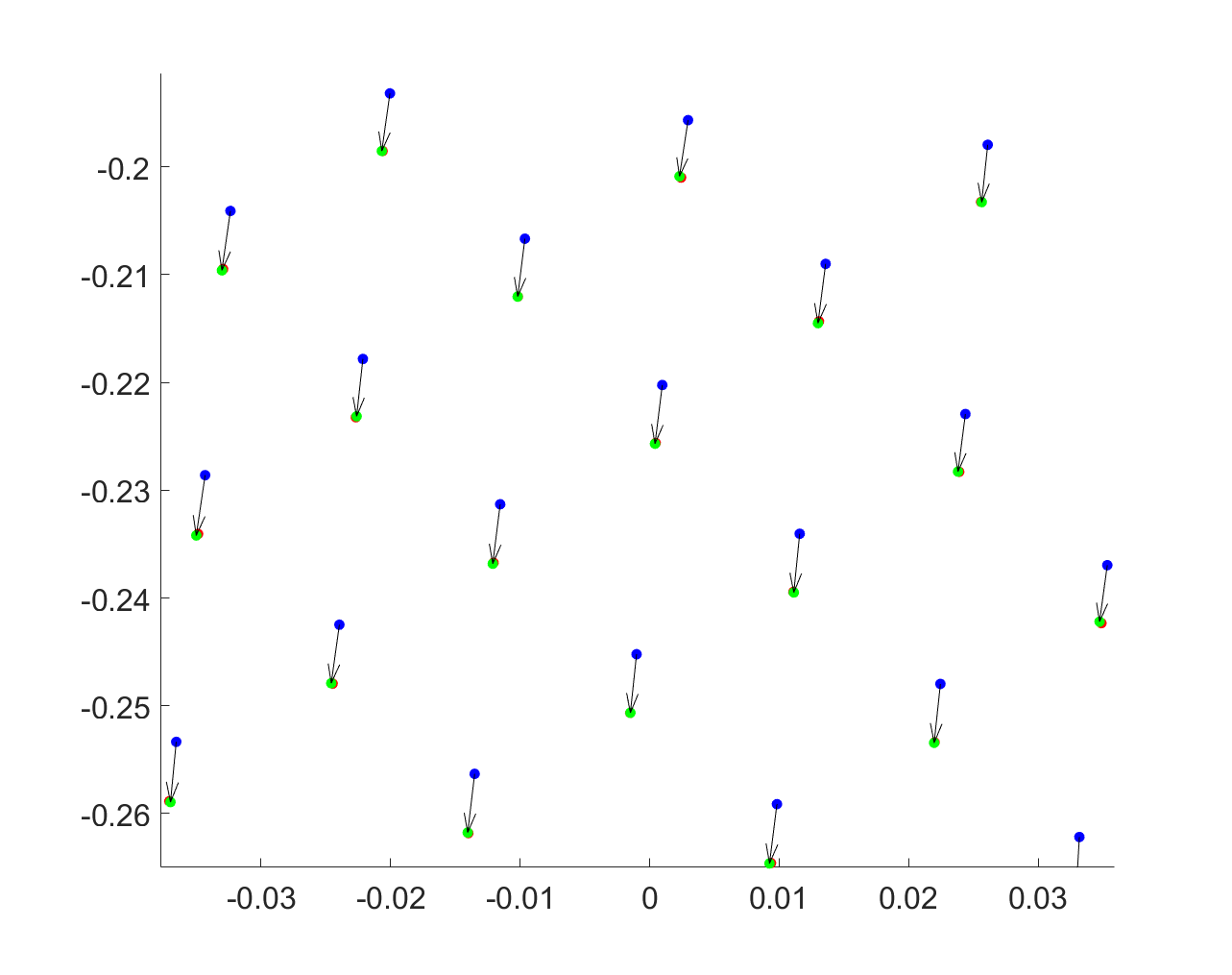} &
\includegraphics[width=0.33\textwidth]{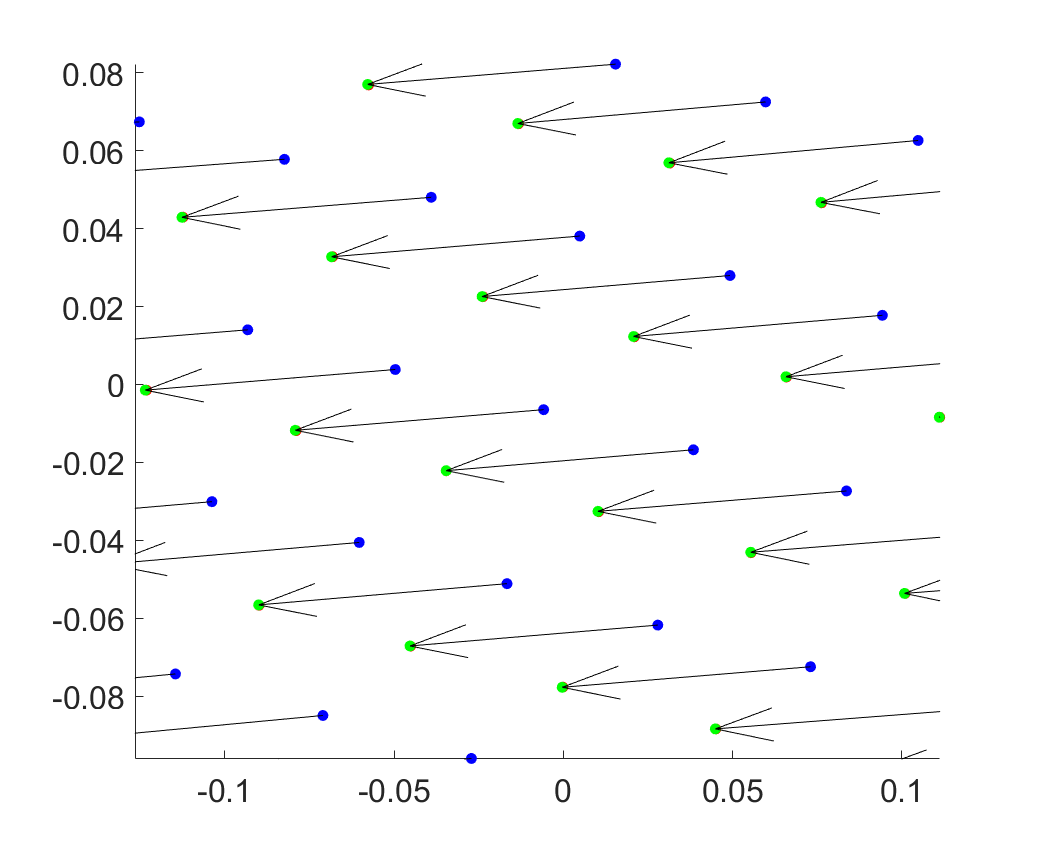} \\
\end{tabular}
\end{tabular}
\caption{Deviation of estimated normalized image coordinates. Arrows point from normalized image coordinates determined by Schöps \cite{Schops2020} to the ground truth.}
\label{fig6:unprojection_error}
\end{figure*}

\subsubsection{Performance Analysis}
We evaluate performance across three key aspects: extrinsic parameter accuracy, intrinsic parameter accuracy and two-view pose estimation accuracy. Note that intrinsic parameter accuracy is quantified by the deviation of the estimated normalized image coordinates from the ground truth.

50 independent calibration trials are carried out for each of three lens types: linear, wide-angle, and fisheye. In each trial, 30 images of the calibration pattern at various poses are generated. We then add Gaussian noise ($\sigma=0.2$ pixels) to the ideal projected image points. All spatial units are measured in millimeters. 

To assess the impact of the calibration on downstream tasks, we evaluate the accuracy of two-view relative pose estimation using 100 Monte Carlo synthetic scenes. Each scene consists of 300 random 3D points ($X, Y \in[-500,500]$, $Z \in[0,500]$). These points are projected onto a pair of images, with added Gaussian noise of standard deviation from 0.5 to 5.0 pixels. The left view is fixed at the world origin, while the right view is randomly positioned in a thin box defined by $X, Y \in[-300,300], Z \in[-100,100]$. The Euler angles of the right view are subject to distribution $N([0 \; 0 \; 0]^{T}, \operatorname{diag}[10^{2} \; 10^{2} \; 20^{2}])$. To ensure a robust evaluation that accounts for the inherent variability in calibration outcomes \cite{Hagemann2022}, the two-view estimation for each synthetic scene is performed using 10 distinct sets of camera intrinsic parameters, each obtained from an independent calibration procedure.

\textbf{Pose Ambiguity.}
To analyze the impact of pose ambiguity in generic calibration, we evaluate the performance of our method and Schöps's before ("Unaligned") and after ("Aligned") a corrective step. Specifically, we introduce an "alignment" adjustment, a corrective rotation using the ground truth, to rectify the ambiguous extrinsic parameters. The difference between the two approaches in the "Unaligned" scenario lies in their calibration strategies for handling the ambiguity matrix. Schöps (Unaligned) approximates by selecting $R_0$ via Eq. \eqref{eq: R0 in open-source toolbox}, whereas our method (Unaligned) determines $\Lambda$ using the method described in Section \ref{sec4}.

\begin{table*}[!t]
\renewcommand{\arraystretch}{1.3}
\caption{Errors in extrinsic parameters: mean / standard deviation.}
\label{table2:extrinsic_params}
\centering
\setlength{\tabcolsep}{0.3pt}
\begin{tabular}{cccccccc}

\toprule
\multicolumn{2}{c}{} & \multicolumn{2}{c}{Linear lens} & \multicolumn{2}{c}{Wide-angle lens} & \multicolumn{2}{c}{Fisheye lens} \\
\cmidrule(lr){3-4} \cmidrule(lr){5-6} \cmidrule(l){7-8}
\multicolumn{2}{c}{} & $\varepsilon_{\text{rot}}$ ($\cdot10^{-3}$ rad) & $\varepsilon_{\text{pos}}$ (mm) & $\varepsilon_{\text{rot}}$ ($\cdot10^{-3}$ rad) & $\varepsilon_{\text{pos}}$ (mm) & $\varepsilon_{\text{rot}}$ ($\cdot10^{-3}$ rad) & $\varepsilon_{\text{pos}}$ (mm) \\
\midrule
\multirow{2}{*}{Schöps \cite{Schops2020}}
 & Unaligned & 8.66 / 2.20 & 5.40 / 1.88 & 5.74 / 1.21 & 2.60 / 0.95 & 73.7 / 1.80 & 35.33 / 8.11 \\
& Aligned & 1.58 / 1.61 & 1.84 / 2.65 & 1.09 / 0.98 & 1.17 / 1.01 & 1.88 / 1.54 & 1.66 / 1.46 \\
\cmidrule(l){2-8}
\multirow{2}{*}{Proposed}
 & Unaligned & 0.35 / 0.39 & 0.23 / 0.31 & 0.18 / 0.16 & 0.08 / 0.08 & 0.44 / 0.38 & 0.25 / 0.17 \\
& Aligned & $\bm{0.13}$ / $\bm{0.12}$ & $\bm{0.05}$ / $\bm{0.04}$ & $\bm{0.11}$ / $\bm{0.09}$ & $\bm{0.04}$ / $\bm{0.03}$ & $\bm{0.20}$ / $\bm{0.13}$ & $\bm{0.13}$ / $\bm{0.06}$ \\
\bottomrule
\end{tabular}
\end{table*}

\noindent\textbullet~\textit{Impact on intrinsic and extrinsic parameters.} Fig. \ref{fig6:unprojection_error} presents the accuracy comparison between Proposed (M4) and Schöps (M4) \cite{Schops2020} for undistorting image points to their normalized image coordinates. The proposed method yields the normalized image coordinates closely matching the ground truth, whereas Sch\"{o}ps (M4) exhibits significant deviation, especially in fisheye scenarios. This confirms the existence of the pose ambiguity in Sch\"{o}ps' approach. Table \ref{table2:extrinsic_params} further quantifies the impact of ambiguity on extrinsic parameter accuracy across various distortion lenses. The results reveal that Sch\"{o}ps (M4) exhibits the largest mean errors due to the pose ambiguity. After applying an alignment adjustment, the pose errors of Sch\"{o}ps (M4) decrease significantly, by more than an order of magnitude particularly in fisheye camera scenarios.

\noindent\textbullet~\textit{Negative impact on two-view estimation.} As shown in Fig. \ref{fig7:two_view_geometry}, the alignment adjustment significantly improves Sch\"{o}ps' results. Especially in high-distortion fisheye lens, two-view relative poses of Sch\"{o}ps (Unaligned) deviate substantially from the ground truth. In contrast, the pose errors of the proposed method (Unaligned) remain stable across various lens scenarios, together with the aligned version.

\noindent\textbullet~\textit{Reliability of the proposed solution for pose ambiguity.} Table \ref{table2:extrinsic_params} shows that the extrinsic parameter errors of the proposed method (Aligned) are consistently low across all lens types, while its unaligned errors are only marginally larger. These results validate the effectiveness and stability of both our global optimization strategy and ambiguity solver.

\begin{figure*}[t]
\centering
\scriptsize
\begin{tabular}{@{}c@{\hspace{0.5em}}c@{\hspace{0.5em}}c@{\hspace{0.5em}}c@{}}
\raisebox{-0.2ex}{\includegraphics[height=2.2ex]{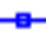}} Schöps (Unaligned) &
\raisebox{-0.2ex}{\includegraphics[height=2.2ex]{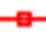}} Proposed (Unaligned) &
\raisebox{-0.2ex}{\includegraphics[height=2.2ex]{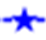}} Schöps (Aligned) &
\raisebox{-0.2ex}{\includegraphics[height=2.2ex]{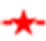}} Proposed (Aligned) \\
\end{tabular}

\vspace{1em}
\begin{tabular}{@{}c@{\hspace{-0.08in}}c@{\hspace{-0.05in}}c@{}}
\scriptsize Linear lens & \scriptsize Wide-angle lens & \scriptsize Fisheye lens \\[-0.1em]
\includegraphics[width=0.3\textwidth, height=3.6cm]{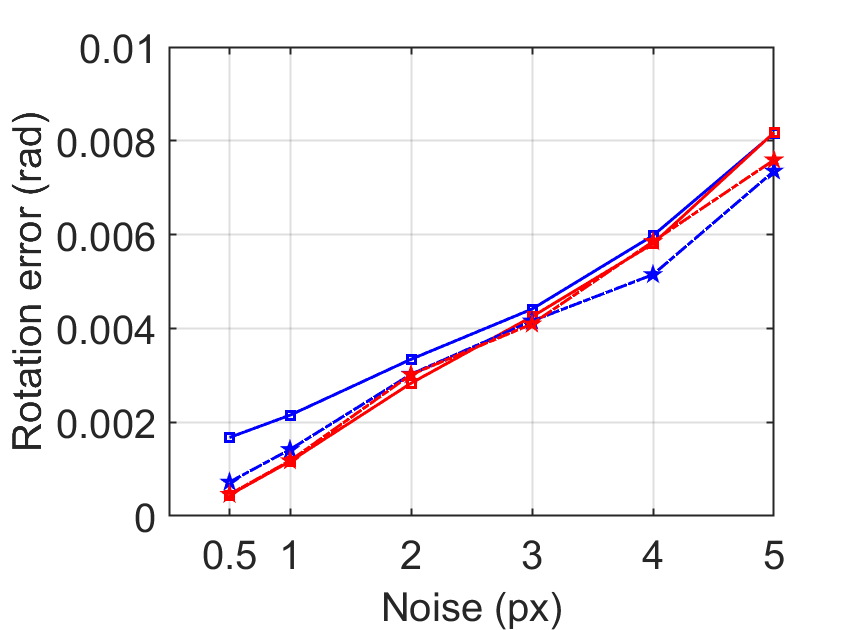} &
\includegraphics[width=0.3\textwidth, height=3.6cm]{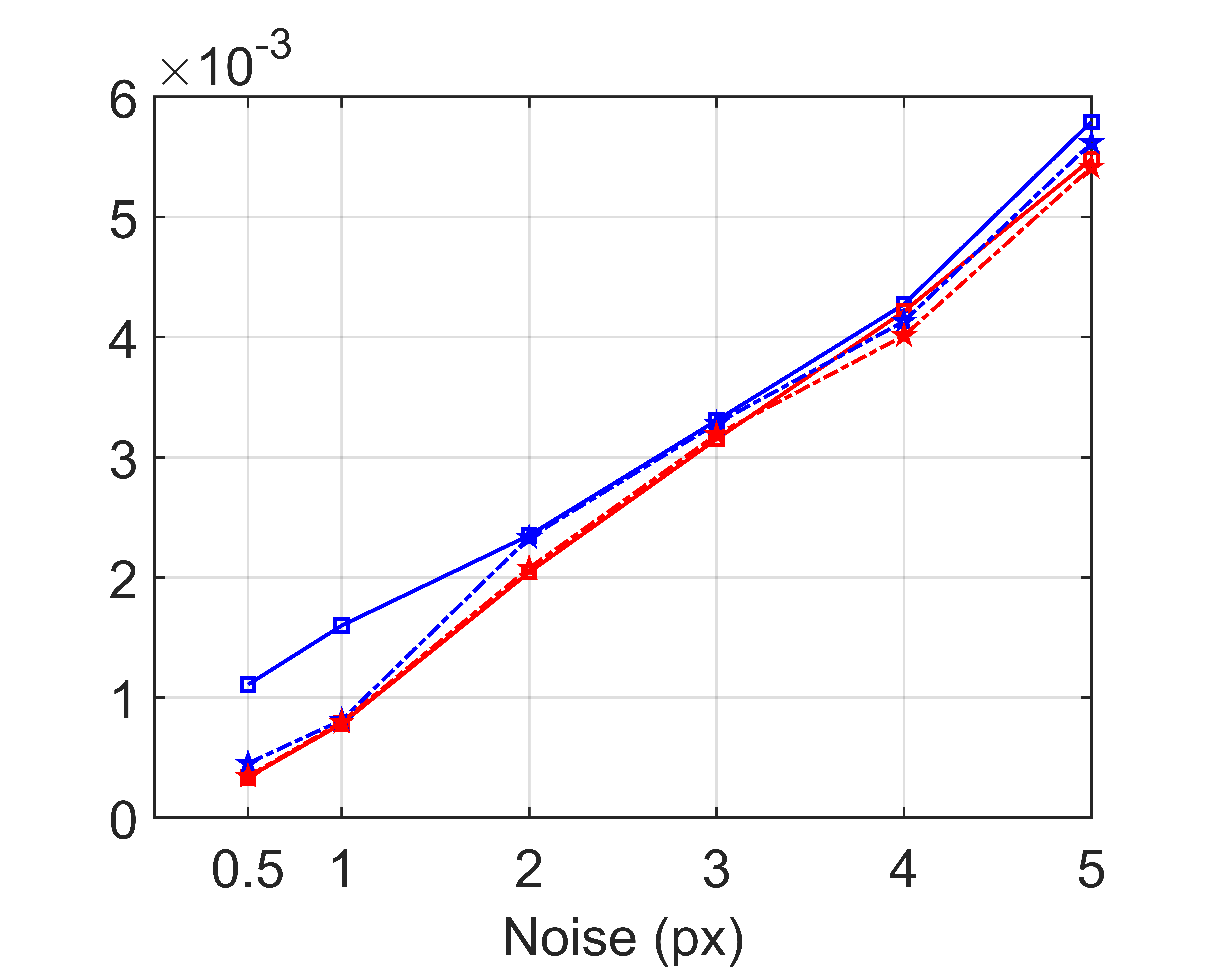} &
\includegraphics[width=0.3\textwidth, height=3.6cm]{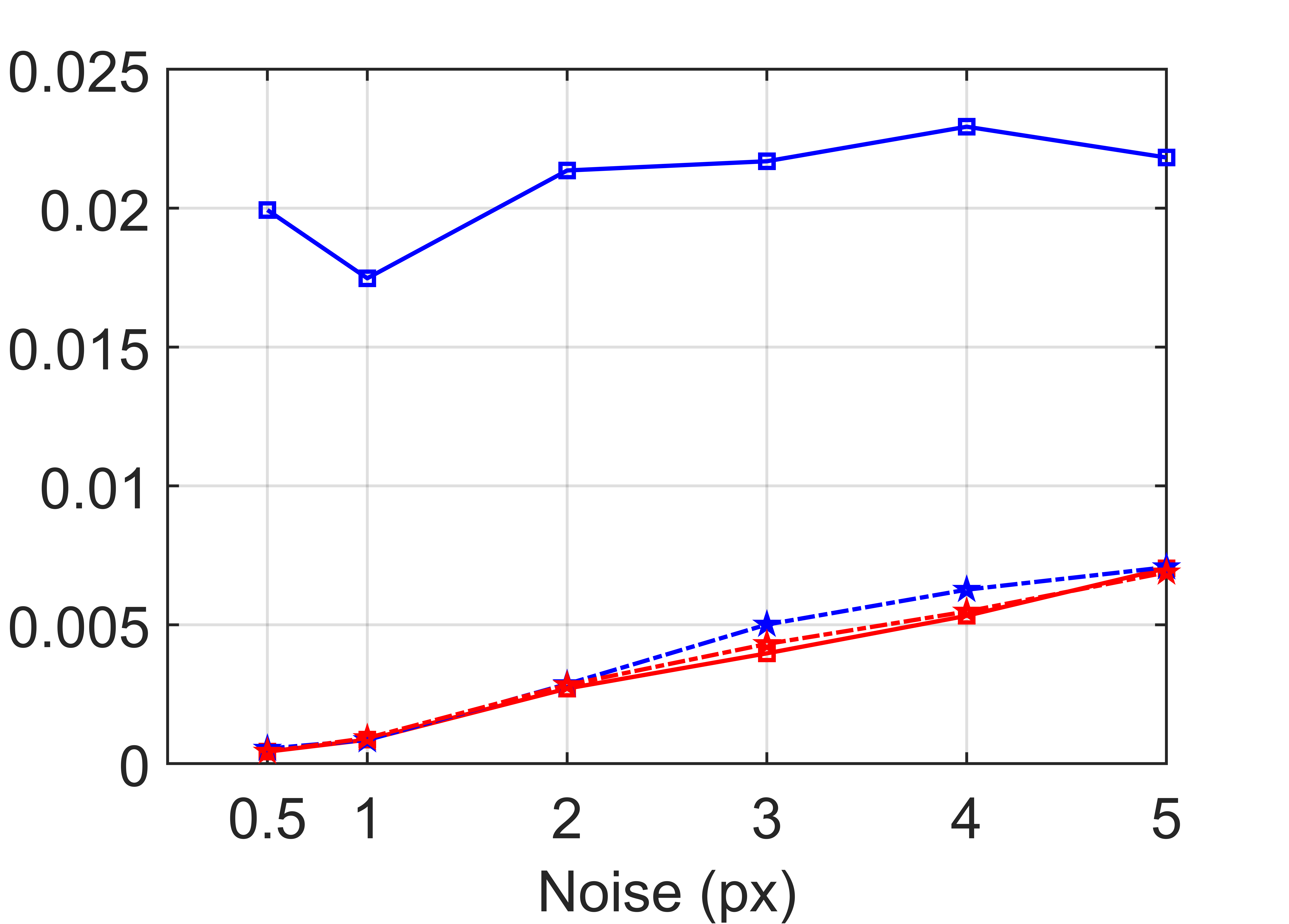} \\[0.2em]
\includegraphics[width=0.3\textwidth, height=3.6cm]{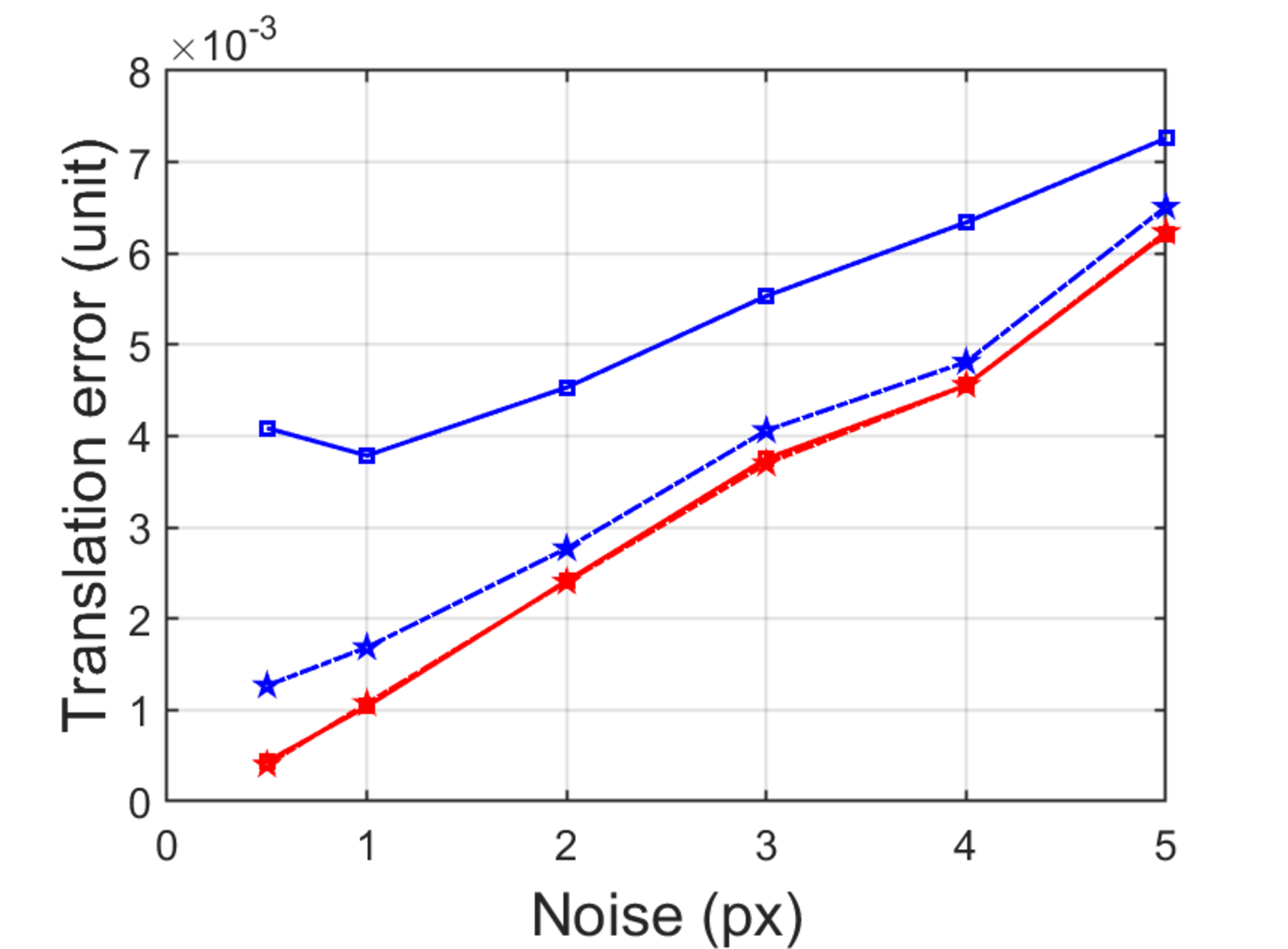} &
\includegraphics[width=0.3\textwidth, height=3.6cm]{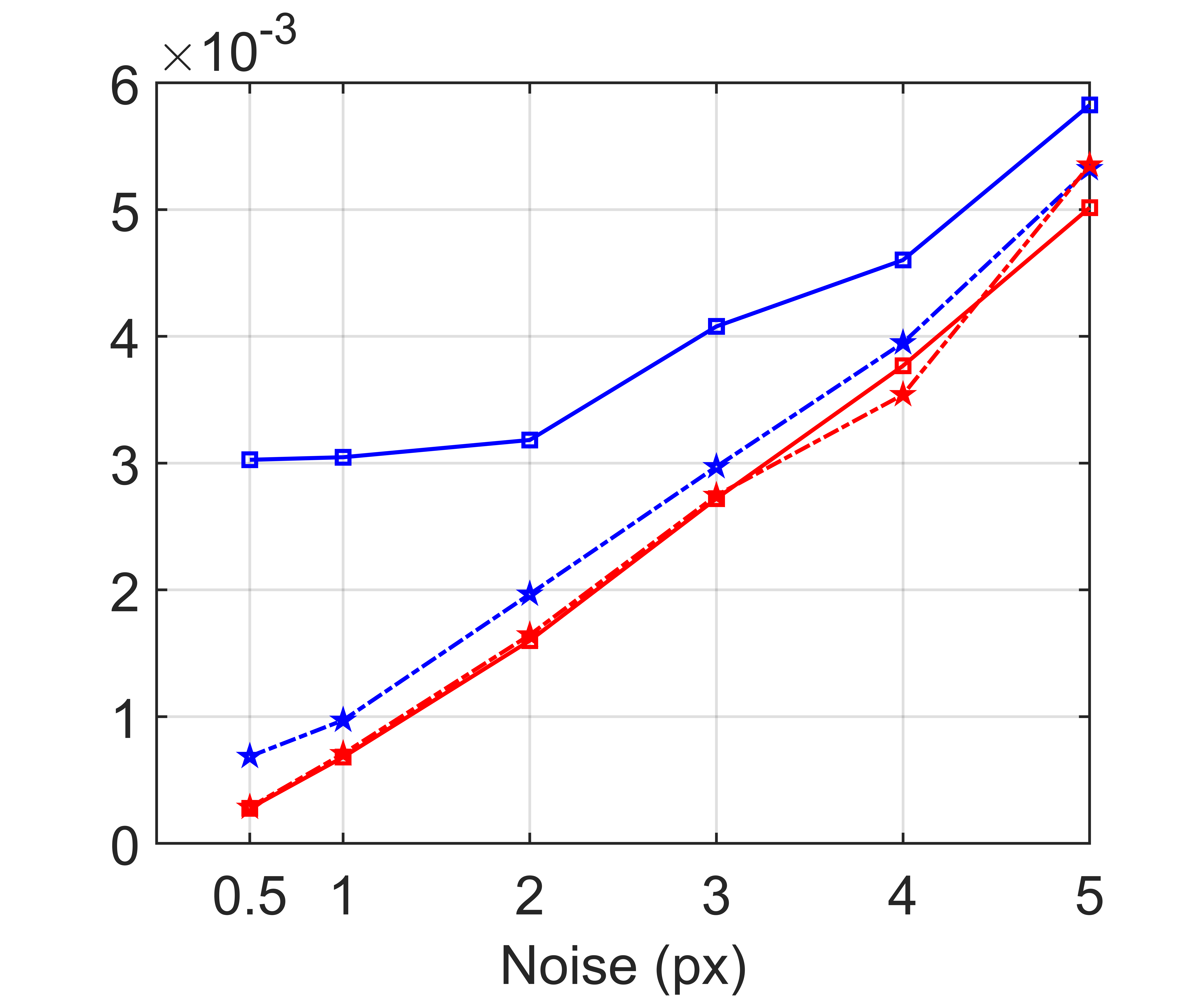} &
\includegraphics[width=0.3\textwidth, height=3.6cm]{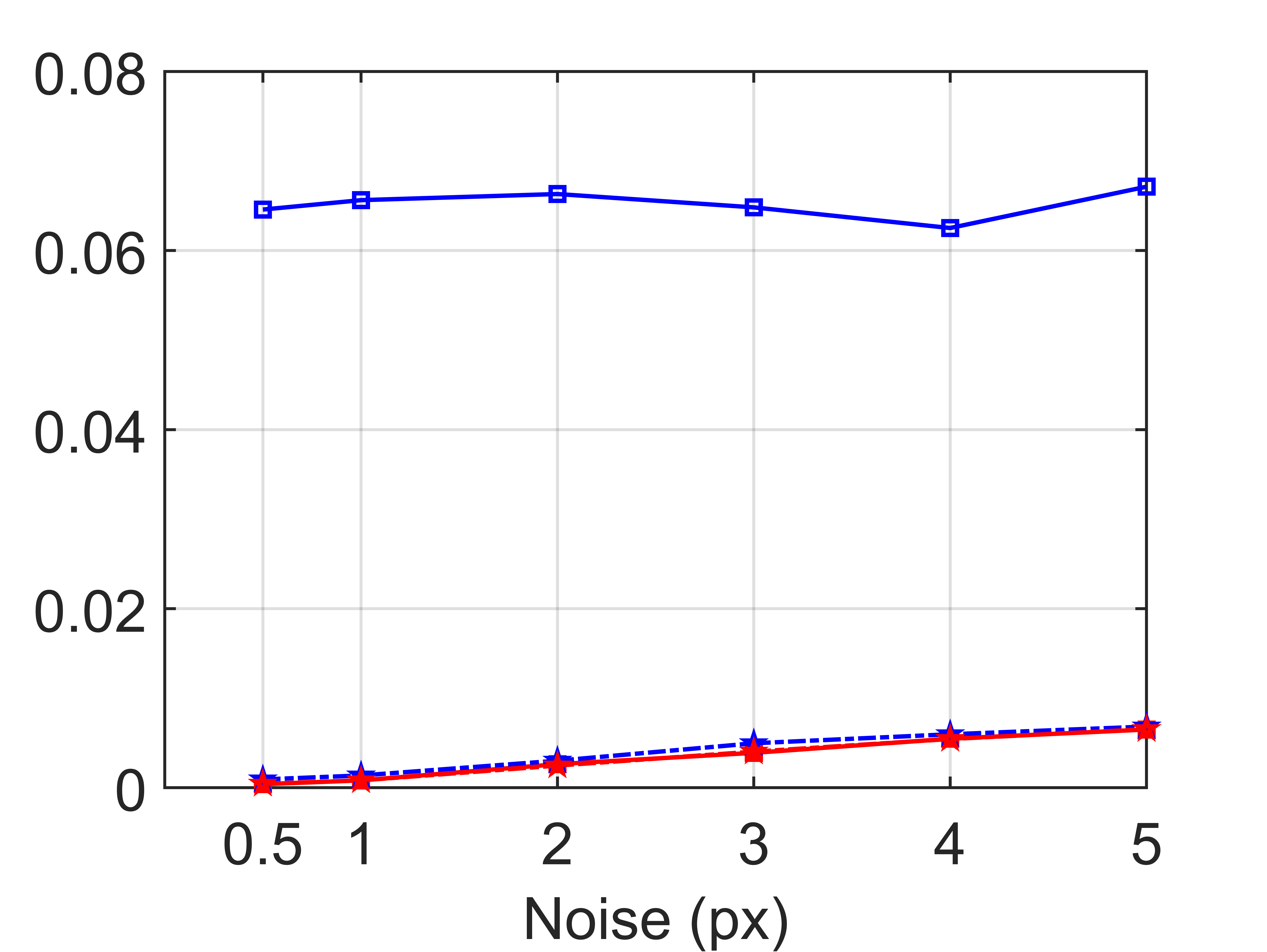}
\end{tabular}
\caption{Two-view pose estimation accuracy: comparison with Schöps' approach, with and without ambiguity correction. Translation error (bottom row) is normalized to unit length.}
\label{fig7:two_view_geometry}
\vspace{-0.6em} 
\end{figure*}

\begin{figure*}[!t]
\centering
\begin{tabular}{@{\hspace{-0.2em}}l@{\hspace{0.6em}}l@{\hspace{0.6em}}l@{}}
  \raisebox{-0.15ex}{\includegraphics[height=2ex]{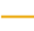}}\;\scriptsize M1: Zhang &
  \raisebox{-0.15ex}{\includegraphics[height=2ex]{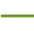}}\;\scriptsize M2: Zhang &
  \raisebox{-0.15ex}{\includegraphics[height=2ex]{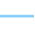}}\;\scriptsize M3: Scaramuzza \\[0.3em]
  \raisebox{-0.15ex}{\includegraphics[height=2ex]{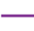}}\;\scriptsize M4: Sch\"{o}ps (Aligned) &
  \raisebox{-0.15ex}{\includegraphics[height=2ex]{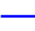}}\;\scriptsize Proposed (Aligned) &
  \raisebox{-0.15ex}{\includegraphics[height=2ex]{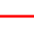}}\;\scriptsize Proposed (Unaligned)\\ \vspace{0.03em}
\end{tabular}
\vspace{0.5em}
\begin{tabular}{@{}c@{\hspace{0in}}c@{\hspace{0in}}c@{}}
  \scriptsize Linear lens & \scriptsize Wide-angle lens & \scriptsize Fisheye lens\\[-0.2em]
  \includegraphics[width=0.3\textwidth, height=3.6cm]{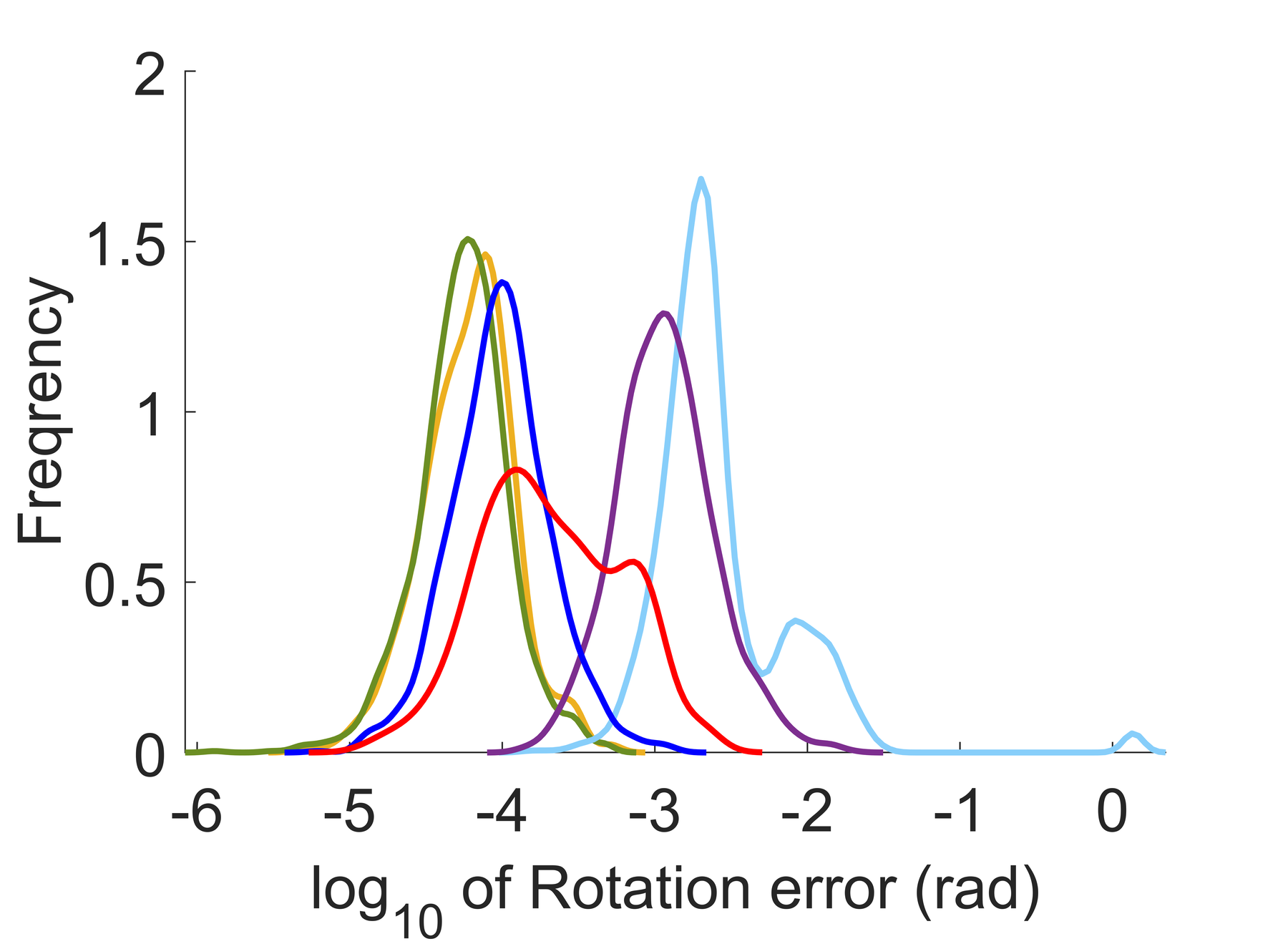} &
  \includegraphics[width=0.3\textwidth, height=3.6cm]{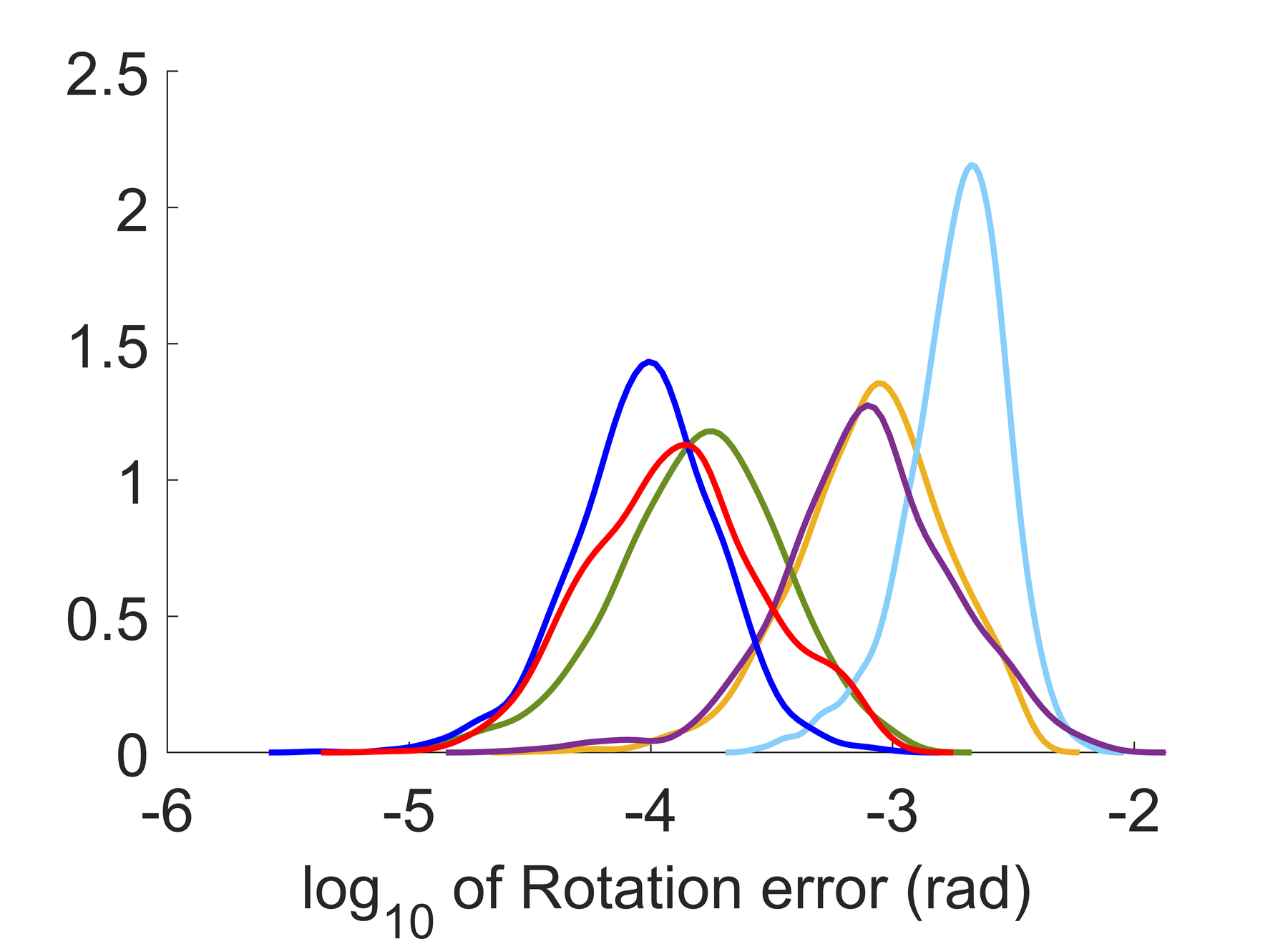} &
  \includegraphics[width=0.3\textwidth, height=3.6cm]{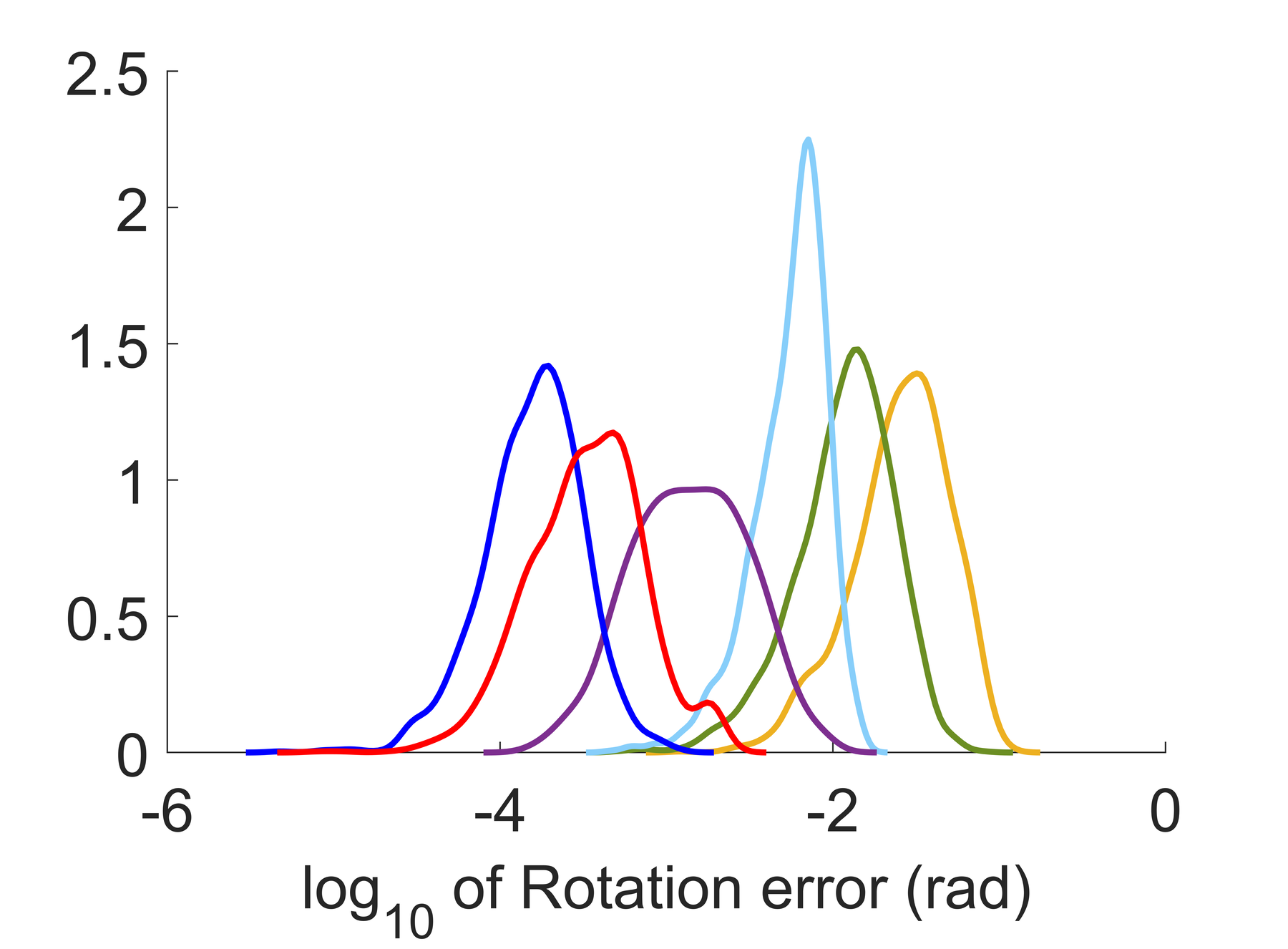} \\[0.15em]
  \includegraphics[width=0.3\textwidth, height=3.6cm]{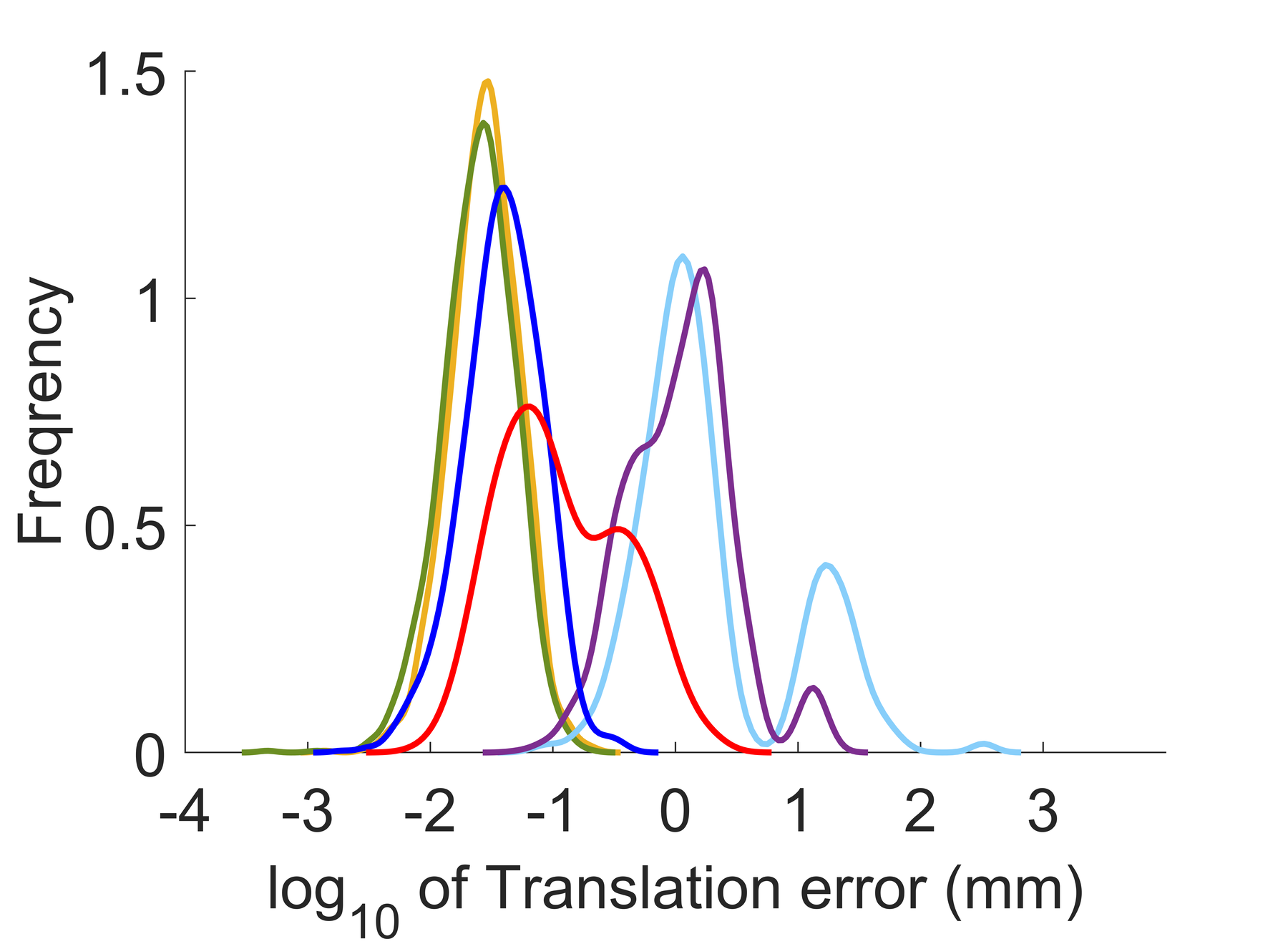} &
  \includegraphics[width=0.3\textwidth, height=3.6cm]{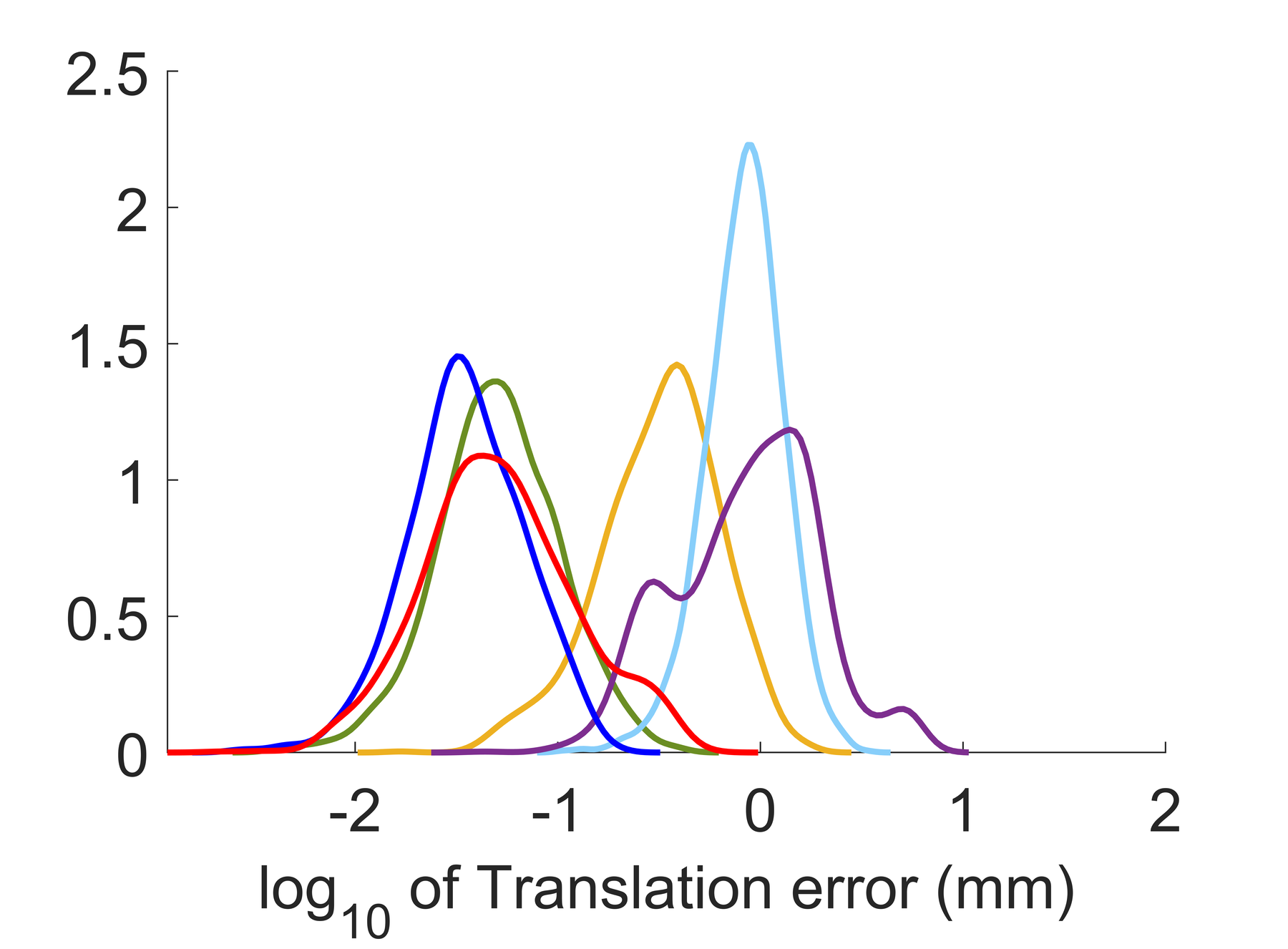} &
  \includegraphics[width=0.3\textwidth, height=3.6cm]{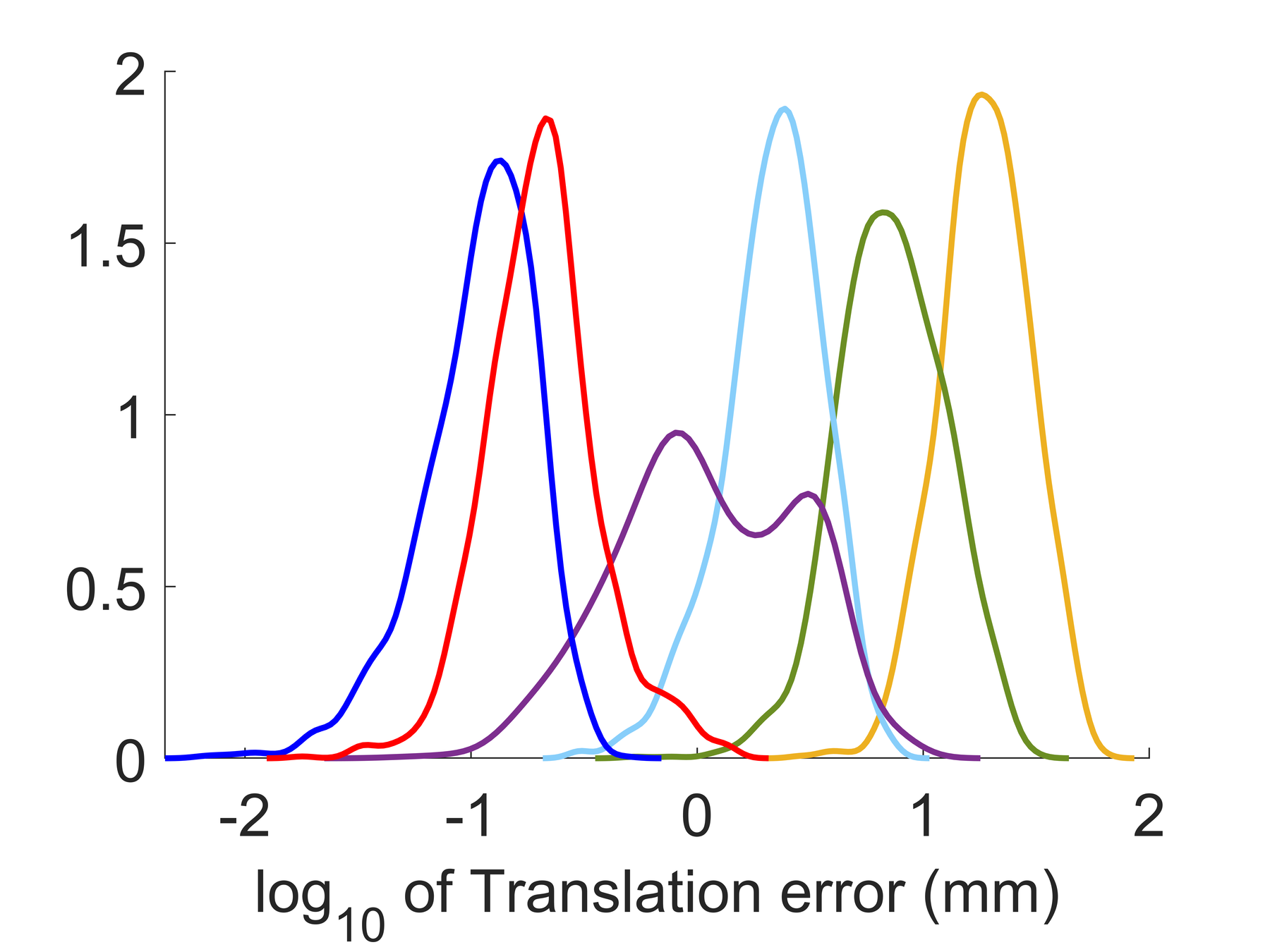}
\end{tabular}
\caption{Probability distribution of extrinsic parameter errors.}
\label{fig8:extrinsic_probability_distribution}
\end{figure*}

\begin{figure*}[!t]
\centering
\begin{tabular}{@{}c@{\hspace{0in}}c@{\hspace{0in}}c@{}}
\setlength{\tabcolsep}{2pt}
\small Linear lens & \small Wide-angle lens & \small Fisheye lens \\[-0.15em]
\includegraphics[width=0.4\textwidth, height=3cm]{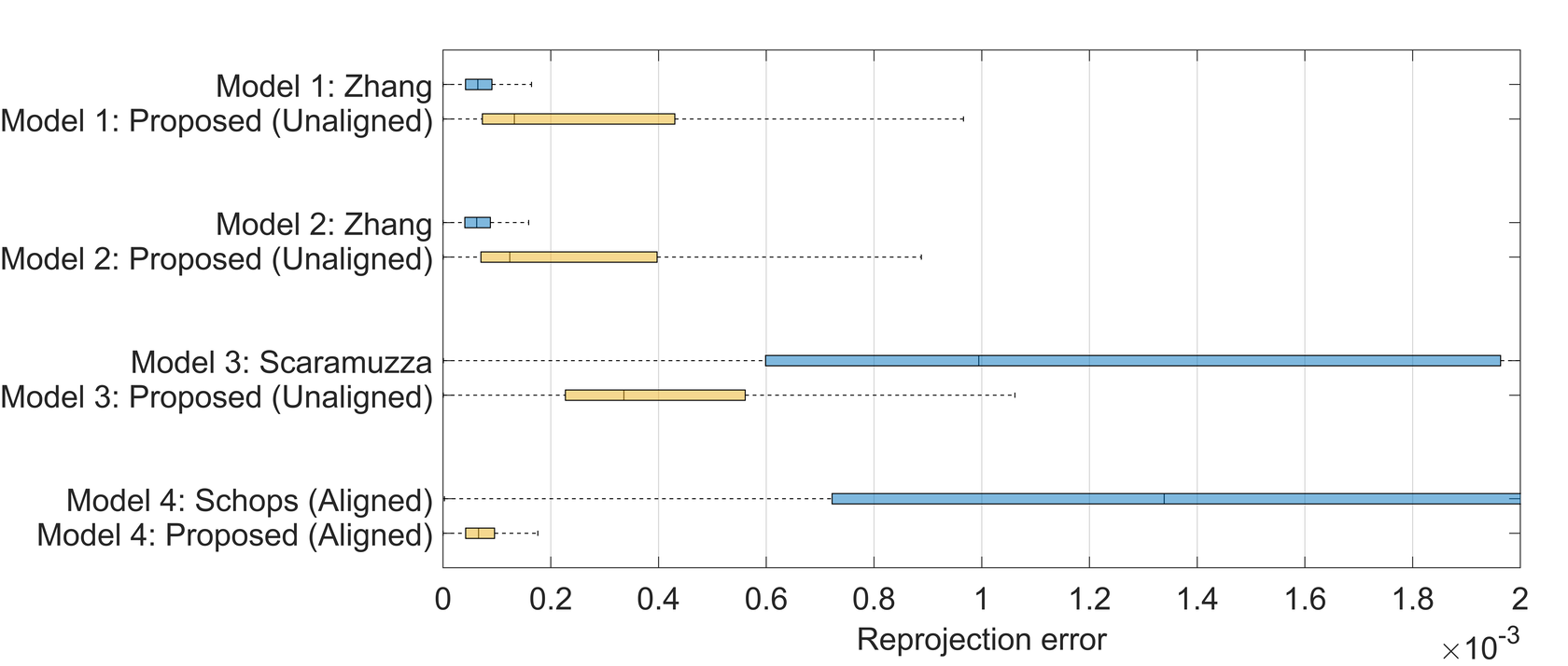} &
\includegraphics[width=0.3\textwidth, height=3cm]{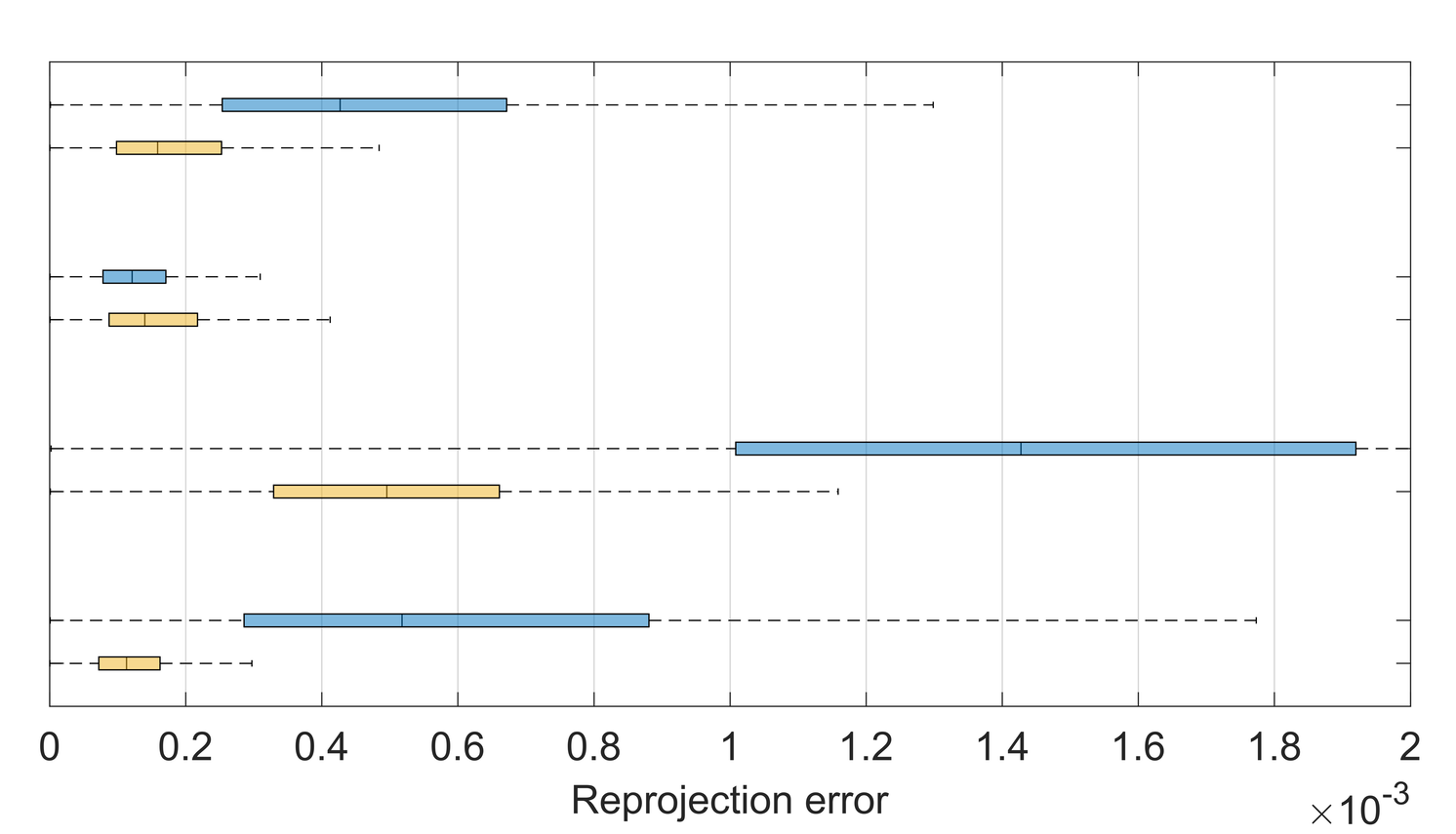} &
\includegraphics[width=0.3\textwidth, height=3cm]{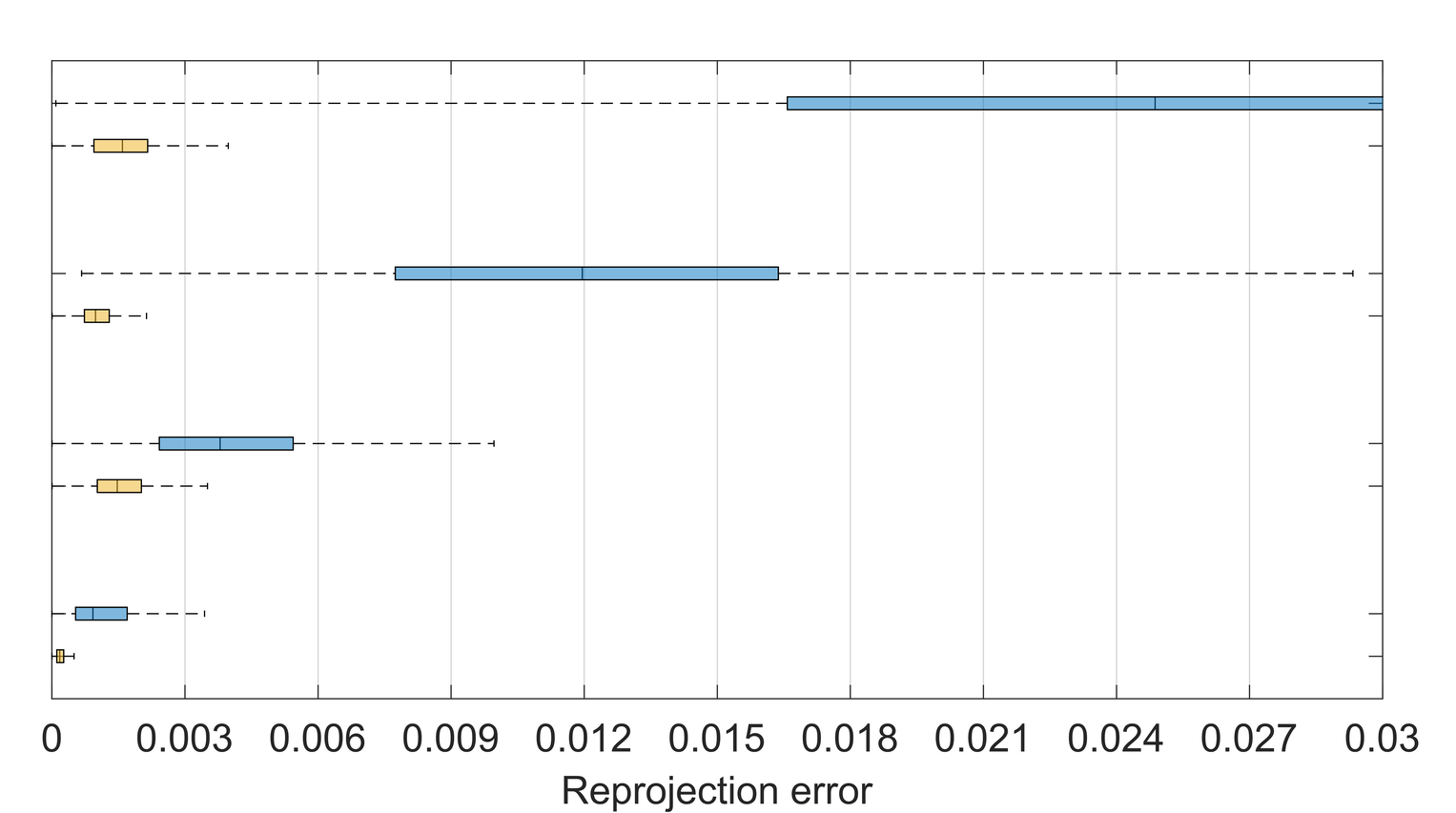} \\
\end{tabular}
\caption{Undistortion error distribution on the normalized image plane using calibrated intrinsic parameters.}
\label{fig9:unprojection_error}
\end{figure*}

\begin{table*}[!t]
\centering
\caption{Resampling-based uncertainty (standard deviation)}
\label{tab3:uncertainty}
\small
\renewcommand{\arraystretch}{1.6}

\begin{tabular*}{\textwidth}{@{\extracolsep{\fill}}l*{8}{c}}
\toprule
\multirow{2}{*}[-0.4ex]{\textbf{Model 1}} & \multicolumn{4}{c}{\textbf{Proposed}} & \multicolumn{4}{c}{\textbf{Zhang}} \\
\cmidrule(lr){2-5}\cmidrule(l){6-9}
 & $f_x$ & $f_y$ & $c_x$ & $c_y$ & $f_x$ & $f_y$ & $c_x$ & $c_y$\\
\midrule
SonyA6400 & $\bm{3.32}$ & $\bm{2.81}$ & 8.03 & 14.24 & 5.14 & 5.42 & $\bm{3.19}$ & $\bm{3.08}$\\
GoPro (Linear) & $\bm{0.84}$ & $\bm{0.93}$ & 0.73 & 1.94 & 1.35 & 1.38 & $\bm{0.72}$ & $\bm{0.79}$\\
GoPro (Wide) & $\bm{4.65}$ & $\bm{4.75}$ & $\bm{3.92}$ & $\bm{1.98}$ & 20.31 & 22.78 & 42.11 & 23.30\\
Insta360 & $\bm{10.32}$ & $\bm{9.69}$ & $\bm{0.56}$ & $\bm{0.74}$ & 26.48 & 27.61 & 33.12 & 29.11\\
\midrule[\heavyrulewidth]

\multirow{2}{*}[-0.4ex]{\textbf{Model 2}} & \multicolumn{4}{c}{\textbf{Proposed}} & \multicolumn{4}{c}{\textbf{Zhang}} \\
\cmidrule(lr){2-5}\cmidrule(l){6-9}
 & $f_x$ & $f_y$ & $c_x$ & $c_y$ & $f_x$ & $f_y$ & $c_x$ & $c_y$\\
\midrule
SonyA6400 & $\bm{3.43}$ & $\bm{2.91}$ & 8.03 & 14.24 & 4.77 & 5.13 & $\bm{3.12}$ & $\bm{3.00}$\\
GoPro (Linear) & $\bm{0.91}$ & $\bm{0.96}$ & 0.75 & 1.94 & 1.33 & 1.38 & $\bm{0.71}$ & $\bm{0.79}$\\
GoPro (Wide) & $\bm{3.82}$ & $\bm{3.94}$ & $\bm{3.92}$ & $\bm{1.98}$ & 8.34 & 9.13 & 12.92 & 5.32\\
Insta360 & $\bm{4.68}$ & $\bm{4.65}$ & $\bm{0.56}$ & $\bm{0.74}$ & 23.6 & 24.51 & 49.19 & 20.02\\
\midrule[\heavyrulewidth]

\multirow{2}{*}[-0.4ex]{\textbf{Model 3}} & \multicolumn{4}{c}{\textbf{Proposed}} & \multicolumn{4}{c}{\textbf{Scaramuzza}} \\
\cmidrule(lr){2-5}\cmidrule(l){6-9}
 & $a_0$ & $a_2$ & $c_x$ & $c_y$ & $a_0$ & $a_2$ & $c_x$ & $c_y$\\
\midrule
SonyA6400 & $\bm{3.18}$ & $\bm{3.51 \cdot 10^{-6}}$ & $\bm{8.00}$ & $\bm{14.33}$ & $4.11 \cdot 10^{4}$ & $2.71 \cdot 10^{-5}$ & 147.18 & 165.57\\
GoPro (Linear) & $\bm{1.36}$ & $\bm{5.00 \cdot 10^{-6}}$ & $\bm{0.76}$ & $\bm{1.95}$ & $1.06 \cdot 10^{3}$ & $1.61 \cdot 10^{-4}$ & 19.32 & 21.17\\
GoPro (Wide) & 3.86 & $\bm{3.36 \cdot 10^{-6}}$ & 3.92 & $\bm{1.97}$ & $\bm{3.70}$ & $3.92 \cdot 10^{-6}$ & $\bm{3.50}$ & 2.04\\
Insta360 & 1.14 & $6.71 \cdot 10^{-6}$ & $\bm{0.57}$ & $\bm{0.75}$ & $\bm{0.76}$ & $\bm{1.47 \cdot 10^{-6}}$ & 0.64 & $\bm{0.76}$\\
\bottomrule
\end{tabular*}

\begin{flushleft}
\small
\noindent Best parameter results are highlighted in bold.
\end{flushleft}
\end{table*}

\begin{figure*}[!t]
\centering
\setlength{\tabcolsep}{0pt}
\footnotesize
\begin{tabular}{@{}l@{\hspace{0.4em}}l@{\hspace{0.4em}}l@{\hspace{0.4em}}l@{}}
  \raisebox{-0.2ex}{\includegraphics[height=1.8ex]{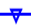}}\;\scriptsize M1: Zhang &
  \raisebox{-0.2ex}{\includegraphics[height=1.8ex]{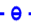}}\;\scriptsize M2: Zhang &
  \raisebox{-0.2ex}{\includegraphics[height=1.8ex]{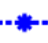}}\;\scriptsize M3: Scaramuzza &
  \raisebox{-0.2ex}{\includegraphics[height=1.8ex]{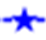}}\;\scriptsize M4: Sch\"{o}ps (Aligned) \\[0.4em]
  \raisebox{-0.2ex}{\includegraphics[height=1.8ex]{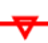}}\;\scriptsize M1: Proposed (Unaligned) &
  \raisebox{-0.2ex}{\includegraphics[height=1.6ex]{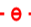}}\;\scriptsize M2: Proposed (Unaligned) &
  \raisebox{-0.2ex}{\includegraphics[height=1.8ex]{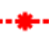}}\;\scriptsize M3: Proposed (Unaligned) &
  \raisebox{-0.2ex}{\includegraphics[height=1.8ex]{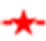}}\;\scriptsize M4: Proposed (Aligned) \vspace{1em}
\end{tabular}

\begin{tabular}{@{\hspace{-0.1in}}c@{\hspace{-0.08in}}c@{\hspace{-0.05in}}c@{}}
  \scriptsize Linear lens & \scriptsize Wide-angle lens & \scriptsize Fisheye lens\\[-0.2em]
  \includegraphics[width=0.3\textwidth, height=3.6cm]{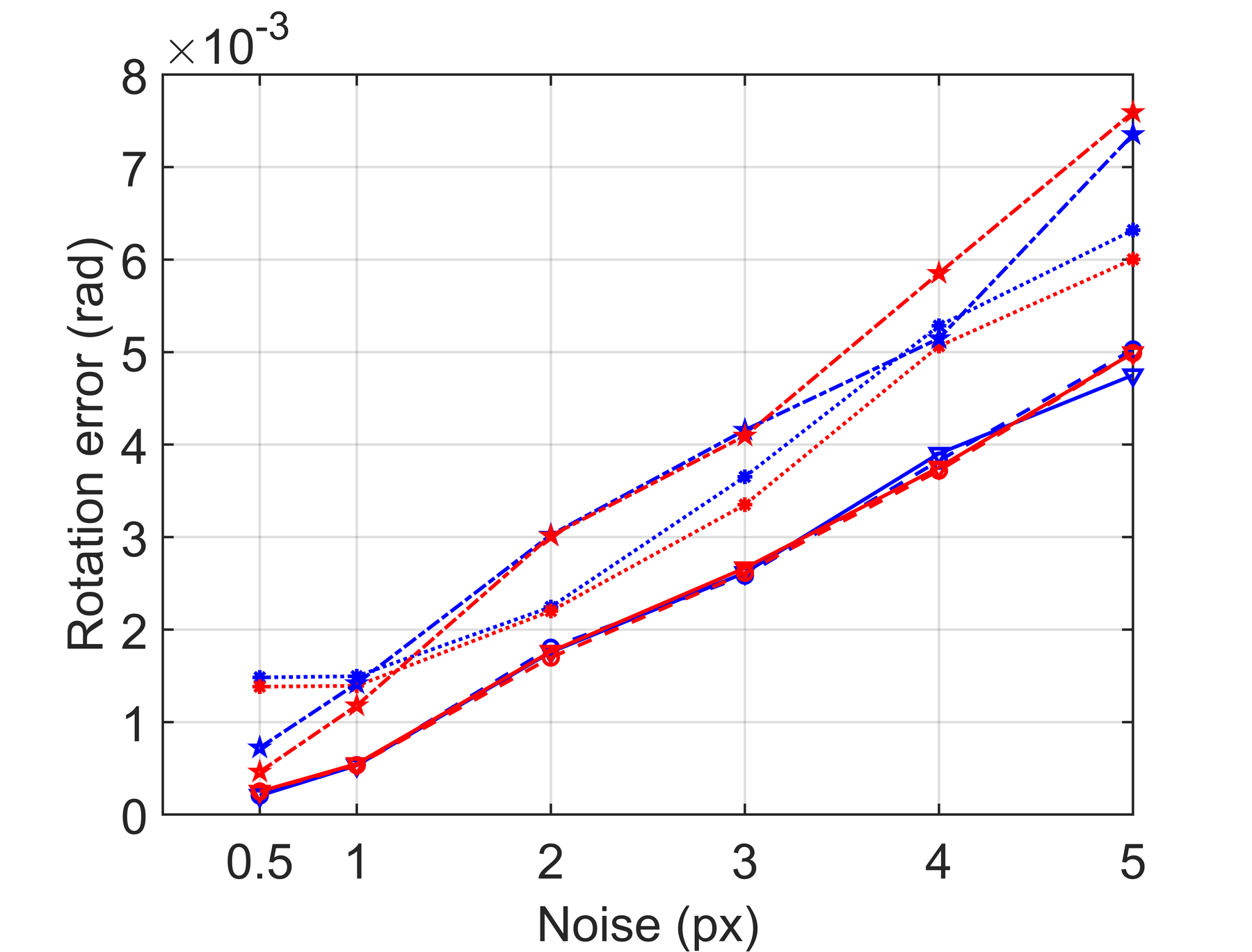} &
  \includegraphics[width=0.3\textwidth, height=3.6cm]{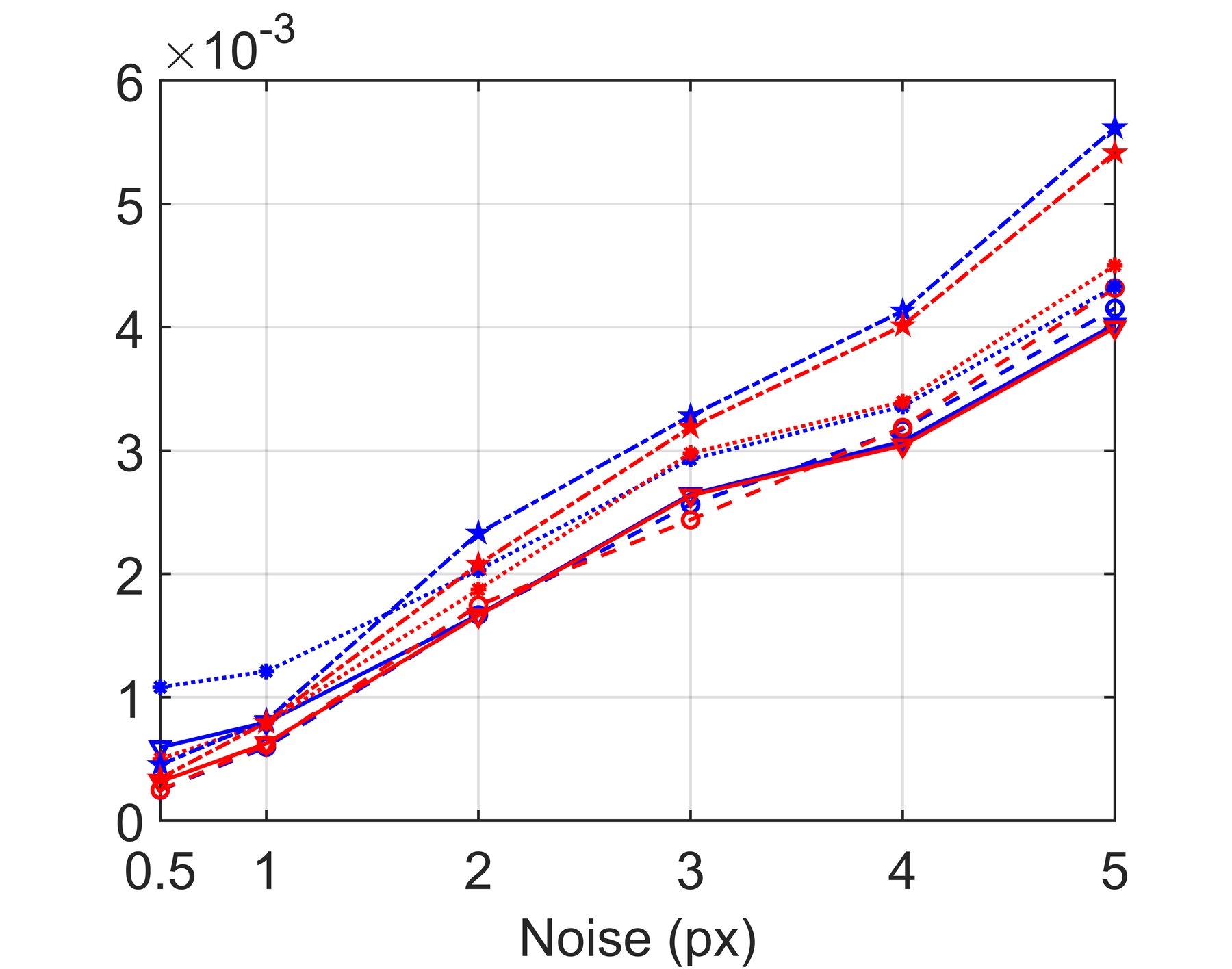} &
  \includegraphics[width=0.3\textwidth, height=3.6cm]{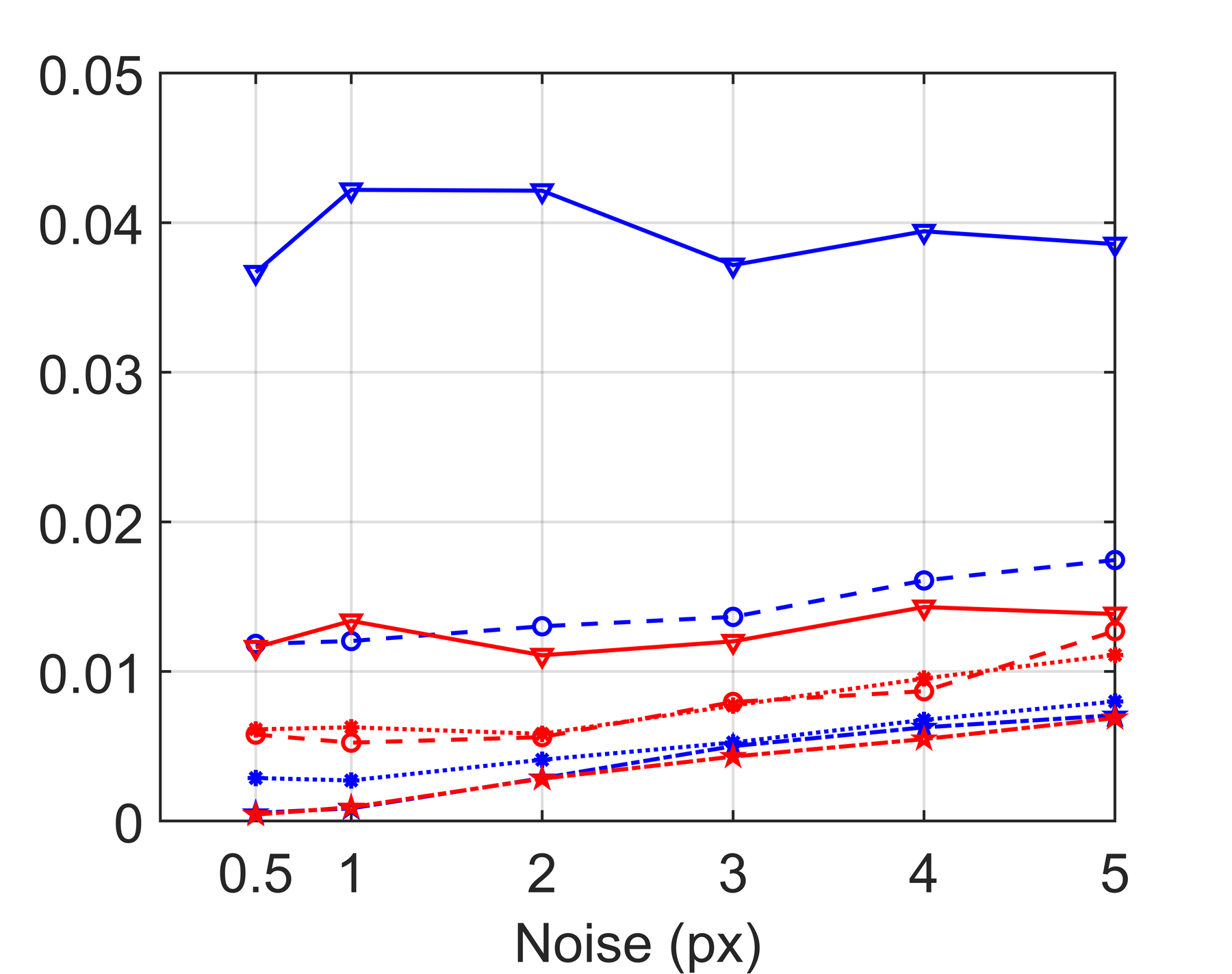} \\[0.15em]
  \includegraphics[width=0.3\textwidth, height=3.6cm]{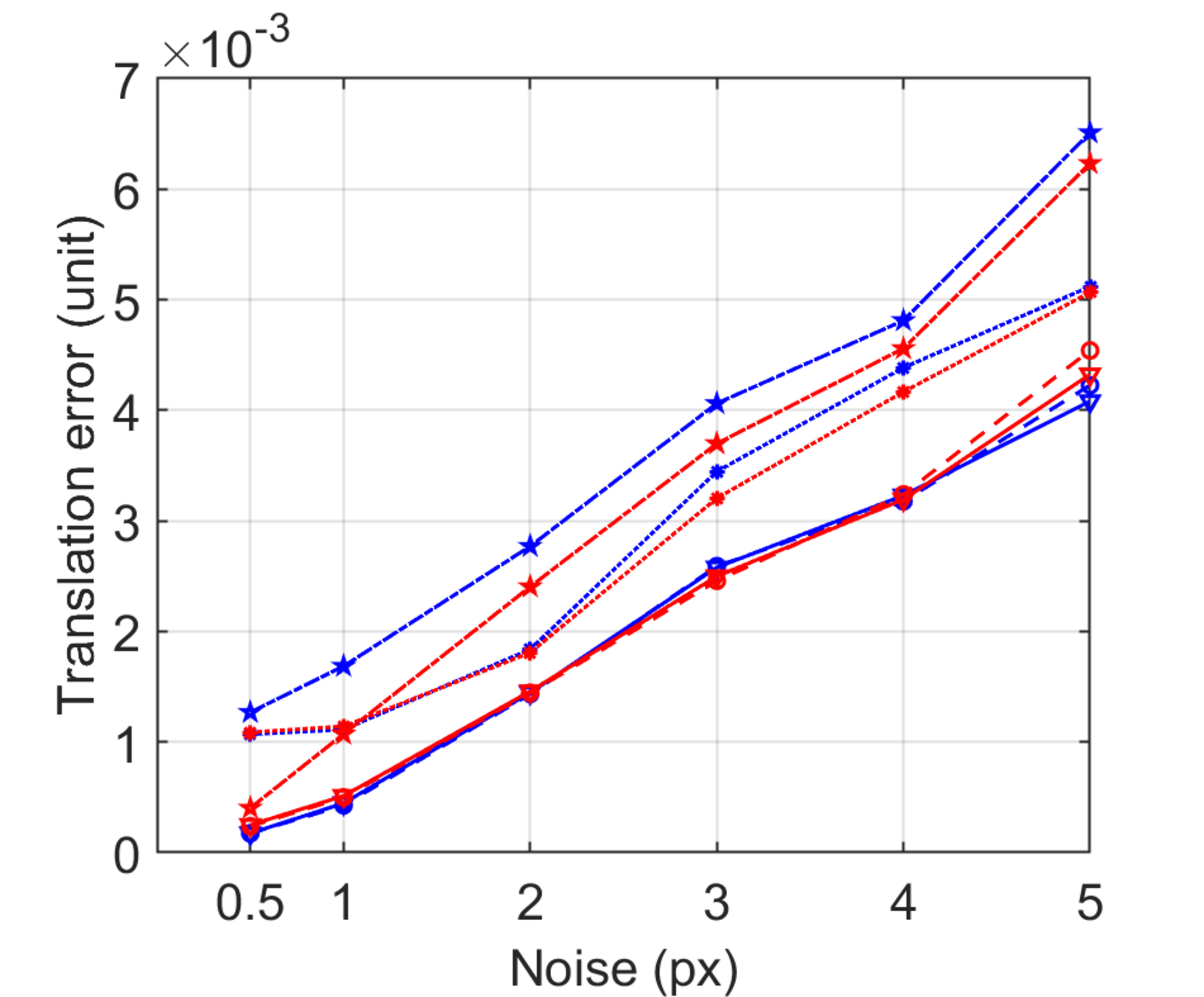} &
  \includegraphics[width=0.3\textwidth, height=3.6cm]{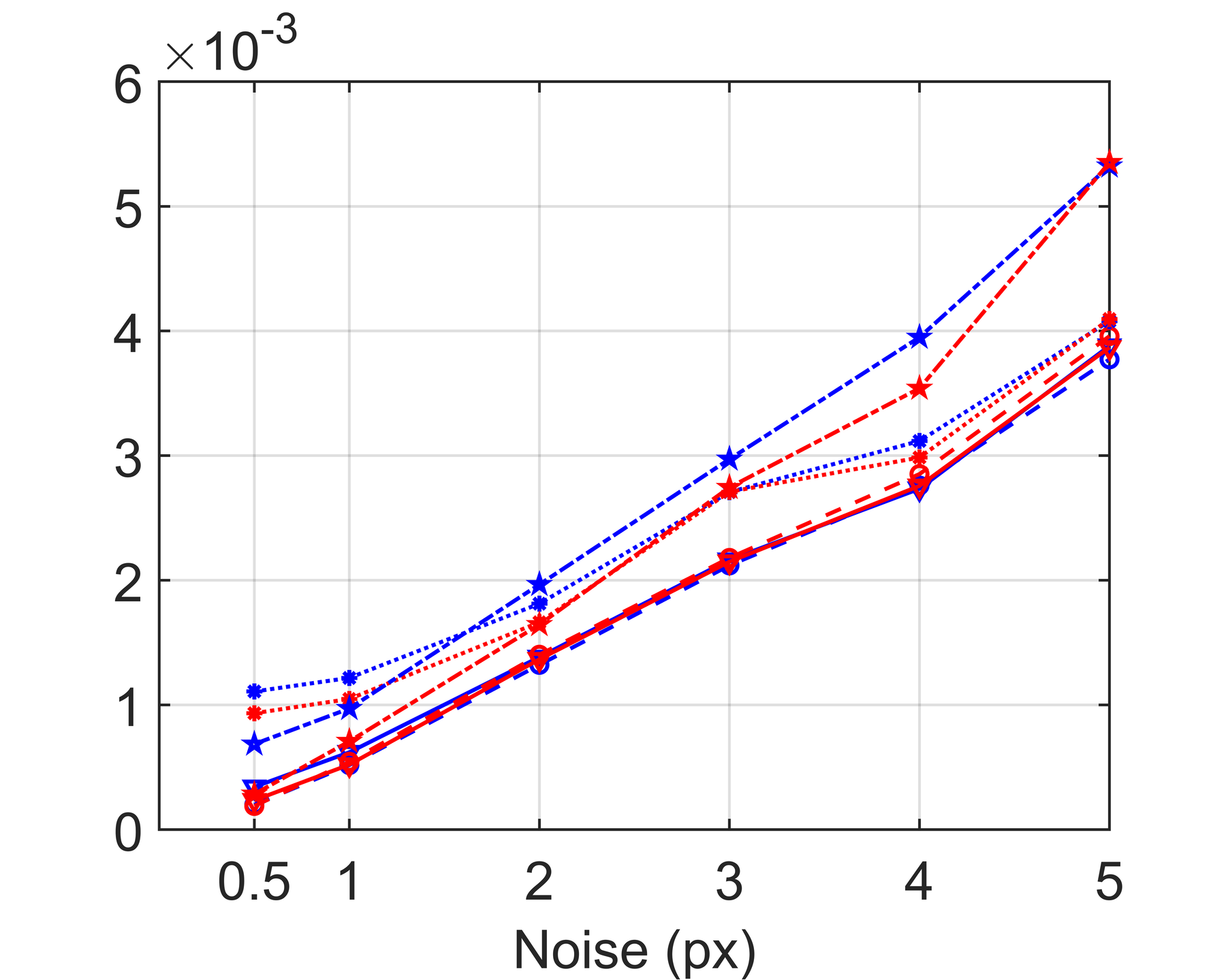} &
  \includegraphics[width=0.3\textwidth, height=3.6cm]{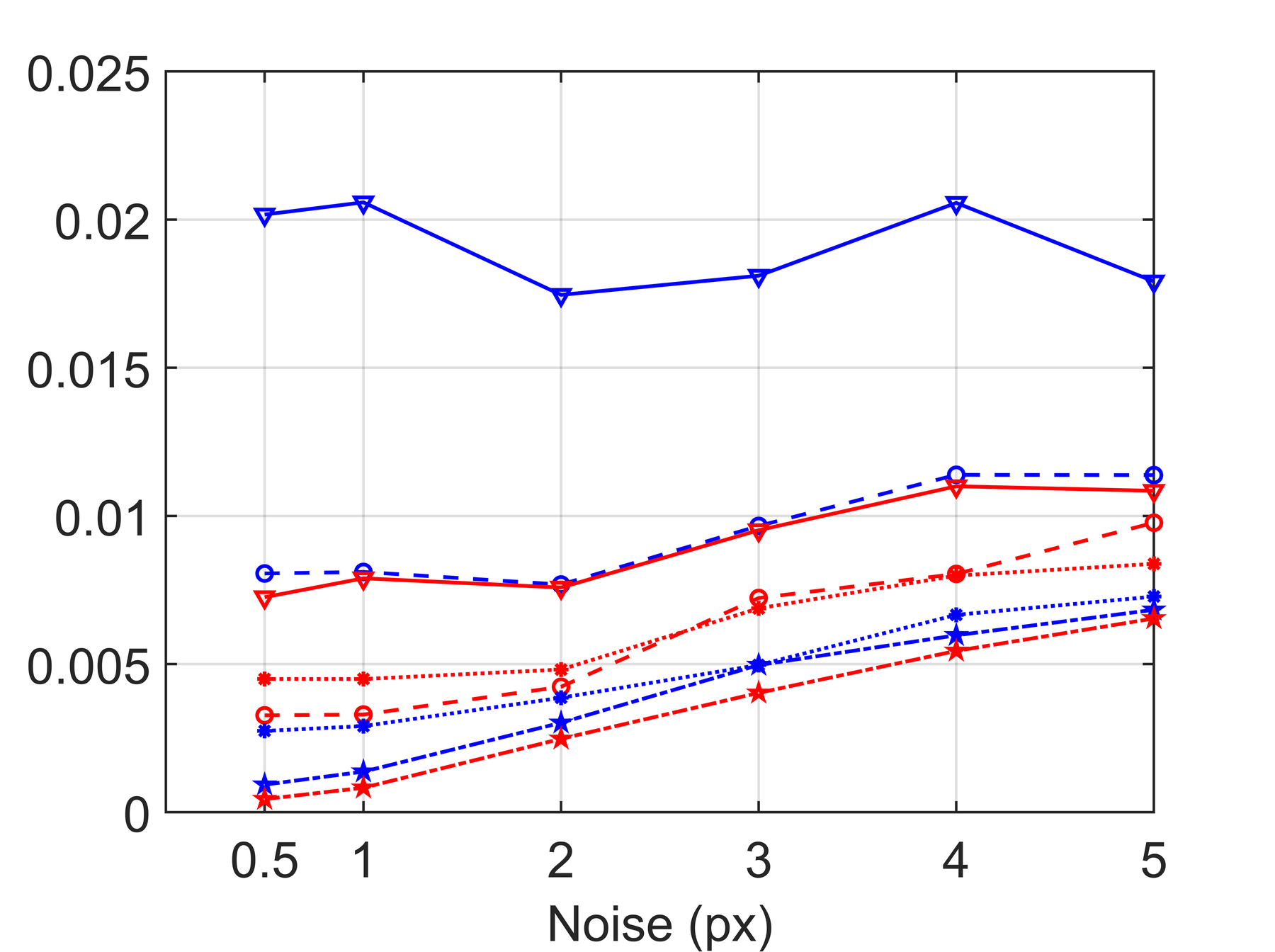}
\end{tabular}
\caption{Two-view pose estimation accuracy. Relative translation errors are defined as Euclidean distances between estimated and ground-truth coordinates after normalizing the position vectors to unit length.}
\label{fig10:accuracy_evaluation}
\end{figure*}


\begin{figure*}[!t]
\centering
\scriptsize
\setlength{\tabcolsep}{0pt}
\begin{tabular}{c}
\begin{tabular}{@{}l@{\hspace{0in}}l@{\hspace{0in}}l@{\hspace{0in}}l@{}}
  \raisebox{-0.2ex}{\includegraphics[height=1.8ex]{fig8_a}}\;\scriptsize M1: Zhang &
  \raisebox{-0.2ex}{\includegraphics[height=1.8ex]{fig8_c}}\;\scriptsize M2: Zhang &
  \raisebox{-0.2ex}{\includegraphics[height=1.8ex]{fig8_e}}\;\scriptsize M3: Scaramuzza &
  \raisebox{-0.2ex}{\includegraphics[height=1.8ex]{fig7_a.png}}\;\scriptsize M4: Sch\"{o}ps (Unaligned) \\[0em]
  \raisebox{-0.2ex}{\includegraphics[height=1.6ex]{fig8_b}}\;\scriptsize M1: Proposed (Unaligned) &
  \raisebox{-0.2ex}{\includegraphics[height=1.8ex]{fig8_d}}\;\scriptsize M2: Proposed (Unaligned) &
  \raisebox{-0.2ex}{\includegraphics[height=1.8ex]{fig8_f}}\;\scriptsize M3: Proposed (Unaligned) &
  \raisebox{-0.2ex}{\includegraphics[height=1.8ex]{fig7_b.png}}\;\scriptsize M4: Proposed (Unaligned) \vspace{1em}
\end{tabular} \\
\vspace{0.3em}

\begin{tabular}{@{\hspace{-0.3in}}c@{\hspace{-0.15in}}c@{\hspace{-0.15in}}c@{\hspace{-0.15in}}c@{}} 
  \scriptsize Sony A6400 & \scriptsize GoPro (Linear) & \scriptsize GoPro (Wide) & \scriptsize Insta360 \\[-0.15em]
  \hspace*{-0.1in} \includegraphics[width=0.26\textwidth]{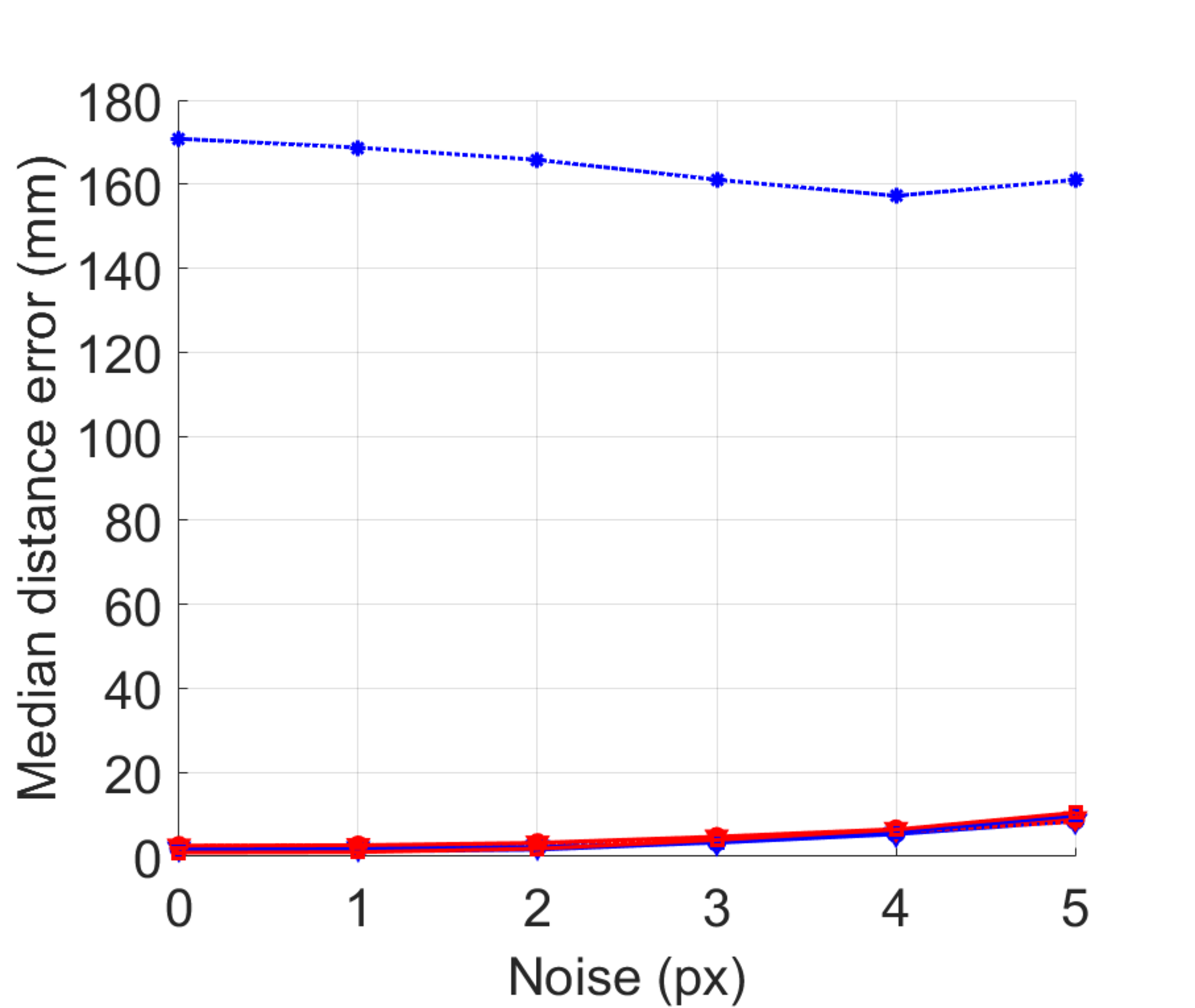} &
  \includegraphics[width=0.26\textwidth]{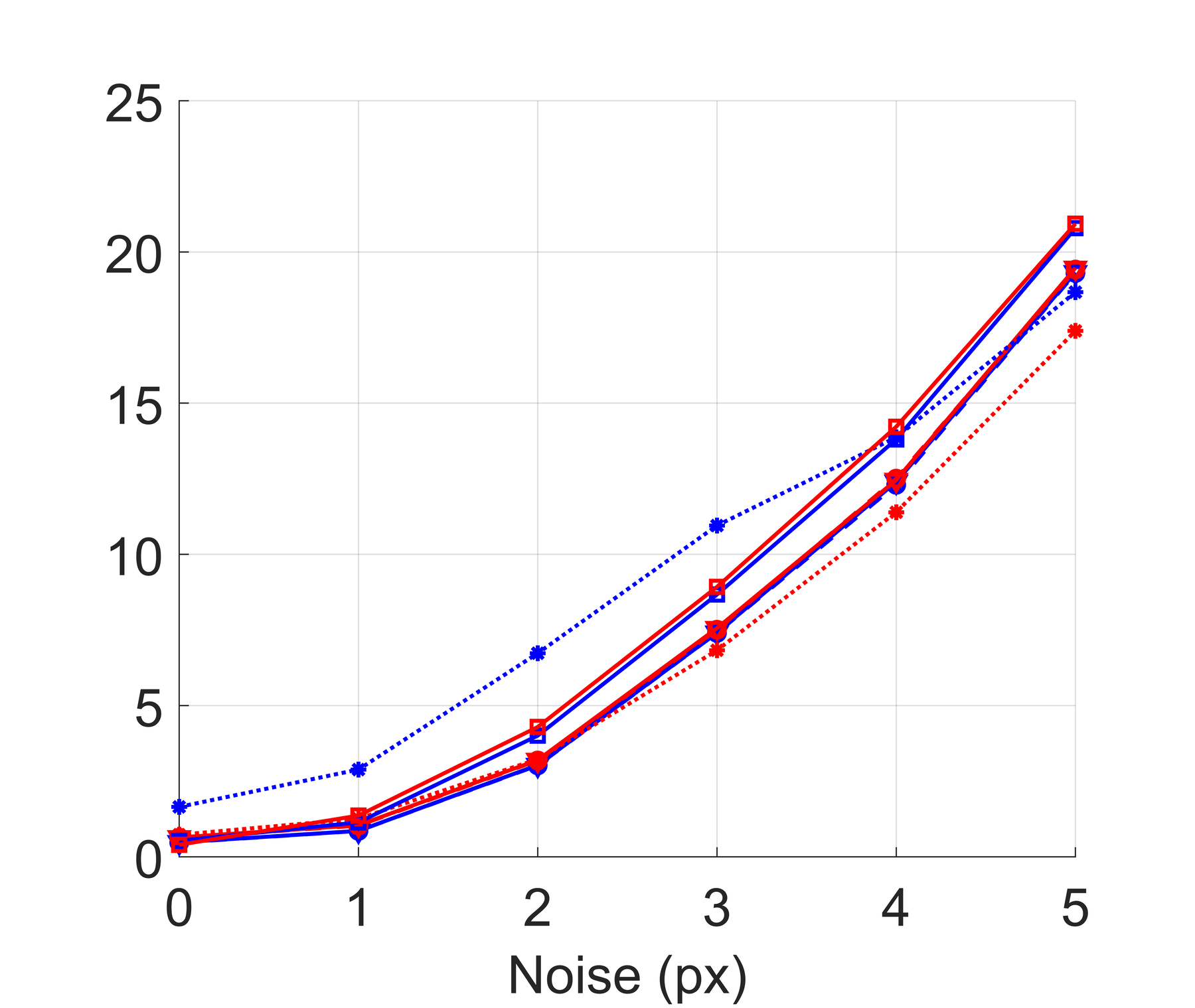} &
  \includegraphics[width=0.26\textwidth]{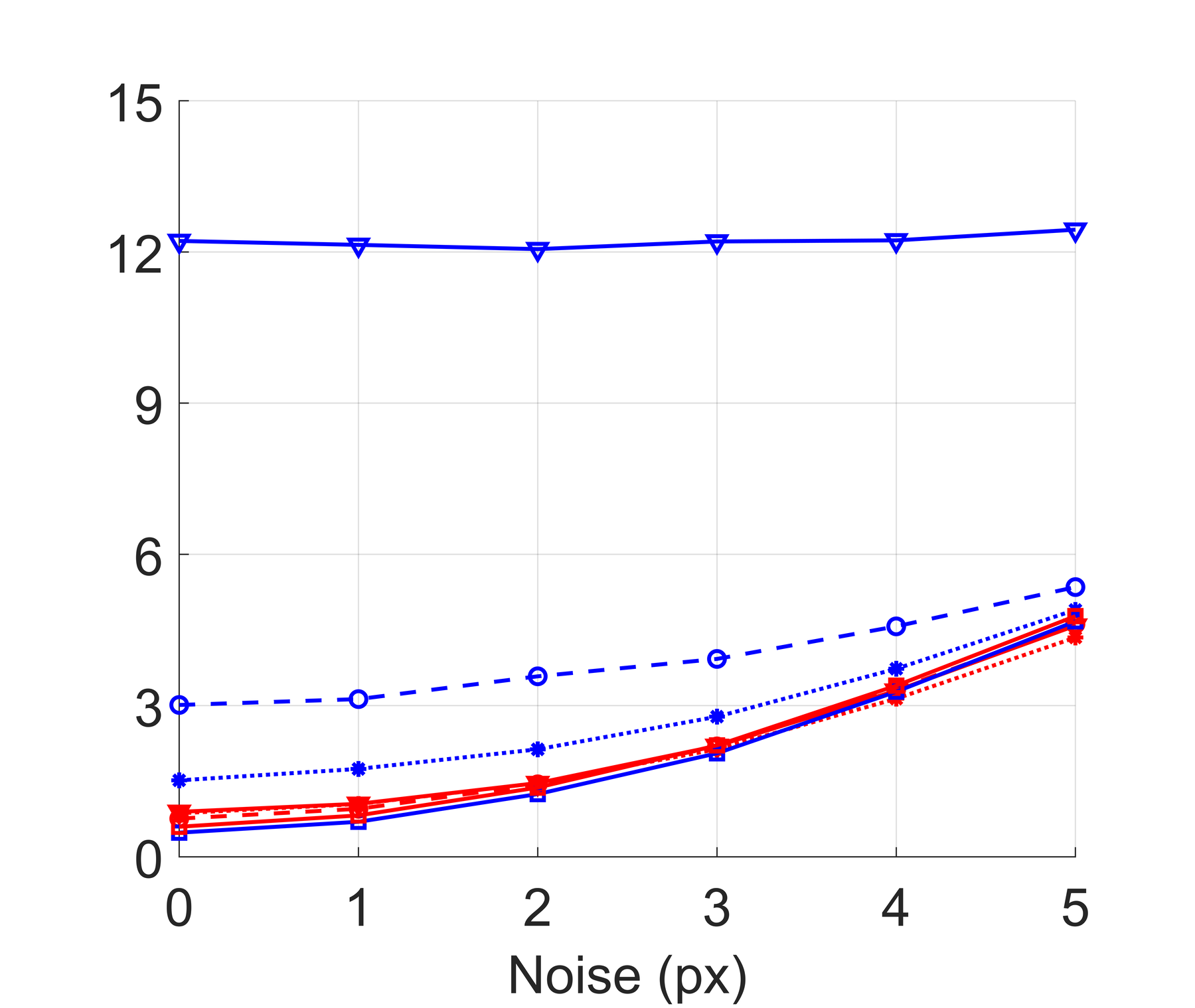} &
  \includegraphics[width=0.26\textwidth]{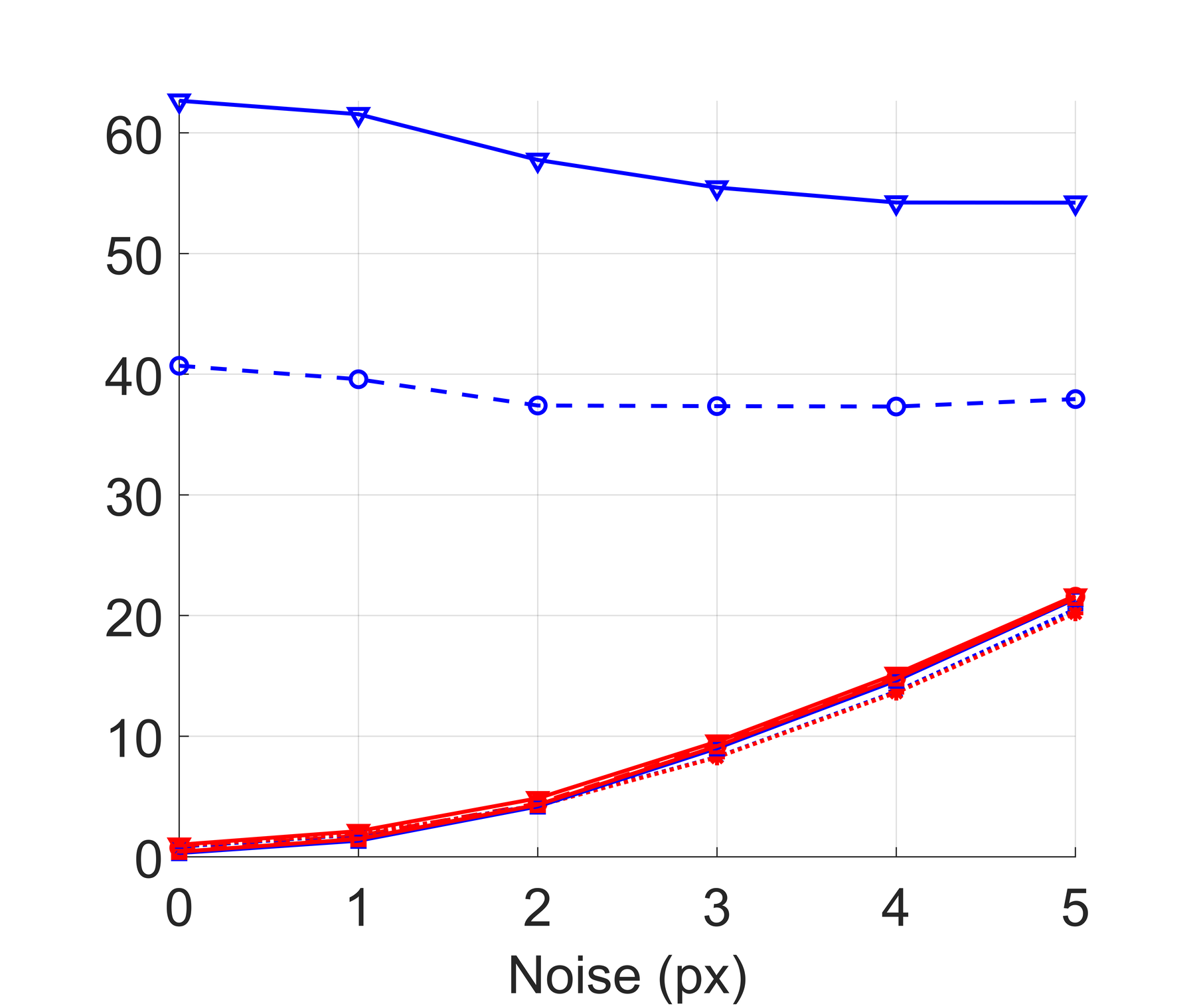}
\end{tabular}
\end{tabular}
\caption{Two-view 3D reconstruction accuracy under varying lenses, calibration approaches and noise levels.}
\label{fig11:reconstruction_accuracy}
\end{figure*}

\begin{figure*}[!t]
\centering
\setlength{\tabcolsep}{0.5pt}  %
\scriptsize
\begin{tabular}{c@{\hspace{4mm}}c@{\hspace{2mm}}c@{\hspace{2mm}}c}
\begin{tabular}[t]{c}
\textbf{Sample images}\\[0.1cm]
\includegraphics[width=3cm]{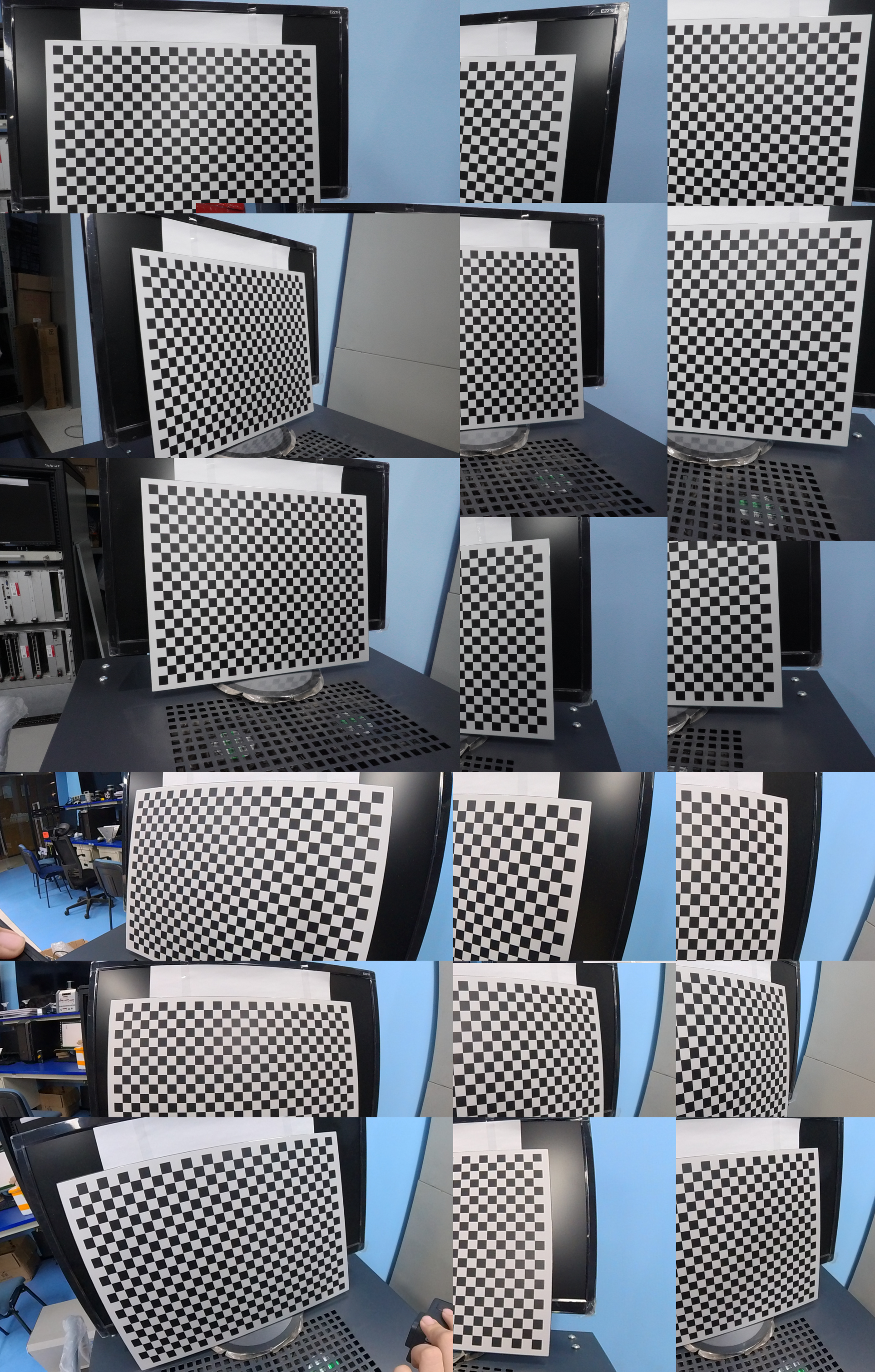}\\ 
\end{tabular}
&
\begin{tabular}[t]{@{}l@{}}
\multicolumn{1}{c}{\hspace{-0.3in} \textbf{Insta360}}\\[0.1cm]
{\color{blue}$\bullet$} M1: Zhang\\
{\color{red}$\bullet$} M1: Proposed (Unaligned)\\[0.15cm]
\includegraphics[width=2.8cm]{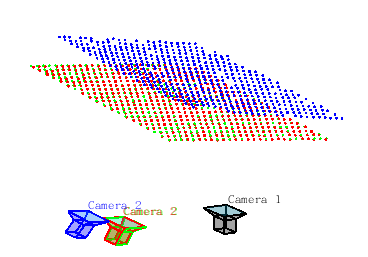}\\[0.3cm]
\includegraphics[width=2.8cm]{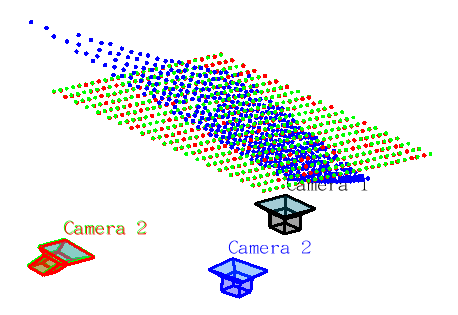}
\end{tabular}
&
\begin{tabular}[t]{@{}l@{}}
\multicolumn{1}{c}{\hspace{-0.3in} \textbf{Insta360}}\\[0.1cm]
{\color{blue}$\bullet$} M2: Zhang\\
{\color{red}$\bullet$} M2: Proposed (Unaligned)\\[0.15cm]
\includegraphics[width=2.8cm]{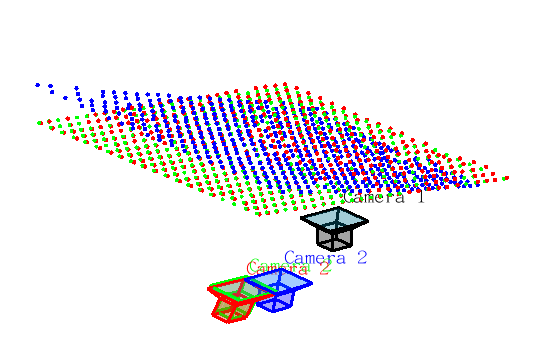}\\[0.3cm]
\includegraphics[width=2.6cm]{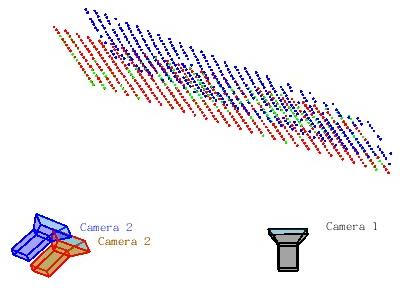}
\end{tabular}
&
\begin{tabular}[t]{@{}l@{}}
\multicolumn{1}{c}{\hspace{-0.3in} \textbf{Sony A6400}}\\[0.1cm]
{\color{blue}$\bullet$} M3: Scaramuzza\\
{\color{red}$\bullet$} M3: Proposed (Unaligned)\\[0.15cm]
\includegraphics[width=2.8cm]{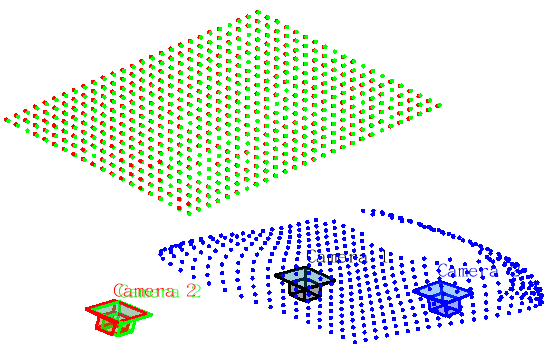}\\[0.45cm]
\includegraphics[width=3.0cm]{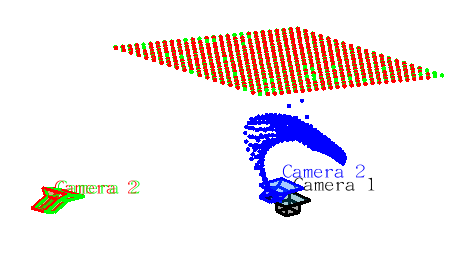}
\end{tabular}
\end{tabular}

\vspace{0.2cm}
\scriptsize
\begin{tabular}{@{}ll@{}}
{\color{black}$\bullet$} Left-view camera pose \;\; & {\color{green}$\bullet$} Reference right-view camera pose and 3D points
\end{tabular}
\caption{Examples of 3D reconstruction results for Insta360 cameras (Models 1 and 2) and Sony A6400 (Model 3). The proposed method shows improved reconstruction quality compared to traditional calibration methods.}
\label{fig12:reconstruction_results}
\end{figure*}

\textbf{Extrinsic Parameter Accuracy.} Fig. \ref{fig8:extrinsic_probability_distribution} compares the extrinsic parameter accuracy of our method against four open-source calibration toolboxes across different lens types. Our approach consistently outperforms the Sch\"{o}ps (M4) and Scaramuzza (M3). While slightly inferior to Zhang (M1) and Zhang (M2) in linear lens scenarios, the peak difference in rotational errors is below $10^{-3}$ rad. As lens distortion increases, Zhang (M1) and Zhang (M2) exhibit a significant degradation in extrinsic accuracy. For instance, in fisheye lens scenarios, their rotation error distribution peaks exceed $10^{-2}$ rad, and the translational error distribution peak for Zhang (M1) even exceeds 10 mm. In contrast, Proposed (M1) and Proposed (M2) maintain rotation error distribution peak below $10^{-3}$ rad and translational error peak below 1 mm. These observations show that the proposed approach offers superior stability and robustness when handling various distortion lenses.

\textbf{Intrinsic Parameter Accuracy.} Fig. \ref{fig9:unprojection_error} compares the intrinsic parameter accuracy of our approach against the baseline toolboxes for Models 1-4 across different lenses. Proposed (M4) consistently ranks the best one or among the best ones across different lens scenarios. Furthermore, the proposed approach exhibits robust performance across various lens distortions. While Proposed (M1/M2) slightly underperforms Zhang (M1/M2) in low-distortion scenarios, it significantly surpasses Zhang (M1/M2) as distortion increases. Moreover, our method effectively reduces systematic errors arising from model-lens mismatches. For instance, in the linear and wide-angle lens scenarios, Proposed (M3) significantly outperforms Scaramuzza (M3).

\textbf{Two-view Pose Estimation Accuracy.} Fig. \ref{fig10:accuracy_evaluation} evaluates the pose estimation accuracy using the calibrated models for two-view geometry. In low-distortion scenarios, including linear and wide-angle lenses, Proposed (M1/M2) achieves performance comparable to Zhang (M1/M2). However, the advantages of our approach become prominent as lens distortion increases. For fisheye lenses, our method significantly outperforms Zhang (M1/M2). This indicates that our approach exhibits higher robustness and reliability when handling various distortion lens scenarios. Additionally, Fig. \ref{fig10:accuracy_evaluation} further reveals the notable advantage of the B-spline surface model (M4) for high-distortion lenses, which is consistent with the conclusion reported in~\cite{Dunne2007comparison}.

\subsection{Real Test}
\textbf{Resampling-based Uncertainty.} We employ a bootstrap resampling strategy~\cite{Davison1997} to quantify the stability of the estimated parameters for the parametric models. For each dataset, we perform 50 calibration trials on random subsets of 30 images and report the standard deviation of the estimated intrinsic parameters in Table \ref{tab3:uncertainty}. The results reveal that our proposed pipeline yields consistently more stable parameter estimates, particularly in challenging scenarios. For instance, Proposed (M1/M2) exhibits significantly lower standard deviations than Zhang (M1/M2) on GoPro (Wide) and Insta360 cameras. While Scaramuzza (M3) tends to diverge on Sony A6400, our Proposed (M3) remains robust. Moreover, the proposed method achieves comparable stability with Scaramuzza (M3) on the Insta360. These observations demonstrate that our method effectively mitigates instabilities arising from model-lens mismatches. Notably, Table \ref{tab3:uncertainty} reports only a subset of Model 3's parameters: i.e., $a_0$, $a_2$, $c_x$ and $c_y$ (refer to \cite{Scaramuzza2006}), as the standard deviations for others ($a_3, a_4$) are negligible (below $10^{-7}$).

\textbf{Reconstruction Accuracy.} To evaluate the practical applicability of the calibration results, we conduct a two-view reconstruction experiment. The evaluation datasets are created by imaging a stationary calibration pattern from multiple camera poses, using the same cameras as in the calibration process. For each camera model (M1–M3), we evaluate 1,000 view pairs to mitigate the influence of outlier data. Additionally, Gaussian noise with a standard deviation ranging from 0 to 5 pixels is artificially added to the original images to evaluate the reconstruction robustness under varying noise contamination. Since no absolute ground truth is available in real-world settings, we establish a reliable reference by using the calibration toolboxes that prove most accurate in simulation: Zhang (M2) for GoPro (Linear) and Sony A6400; Scaramuzza (M3) for GoPro (Wide) and Insta360 cameras. Accuracy is then measured by the 3D Euclidean distance between reconstructed points (with registration through point-cloud registration) and their known reference positions on the pattern.

The quantitative results in Fig. \ref{fig11:reconstruction_accuracy} demonstrate the superior stability of our proposed method. While traditional parametric calibrations suffer from significant performance degradation for mismatched lenses, our approach maintains high accuracy across all scenarios. For instance, when applying Model 3 to the Sony A6400 camera, Scaramuzza (M3) results in substantially higher reconstruction errors, whereas Proposed (M3) achieves accuracy comparable to the reference values. Moreover, for GoPro (Wide) and Insta360 cameras, Proposed (M1/M2) significantly reduces the reconstruction errors by approximately an order of magnitude compared to Zhang (M1/M2). 

These quantitative findings are visually corroborated by the representative reconstructions in Fig. \ref{fig12:reconstruction_results}. For the Sony A6400 camera with Model 3, reconstruction from Scaramuzza (M3) is visibly warped, whereas Proposed (M3) correctly preserves the board's planar integrity. The examples for the Insta360 camera further highlight that the reconstructions from Proposed (M1/M2) closely match the reference and outperform Zhang (M1/M2). These results validate the reliability of our hybrid calibration framework across diverse camera models and complex lens scenarios.

\section{Conclusion}\label{sec7}
This paper identifies a pose ambiguity in pose solutions by the generic calibration and analyzes its adverse influence on pose estimation accuracy. An efficient linear solver and a subsequent nonlinear optimization are proposed to handle the ambiguity issue. Additionally, to simplify the complex calibration workflow of generic calibration approach and enhance the stability and reliability of parametric calibration, we develop a novel global optimization hybrid calibration framework. The framework first determines and fixes the extrinsic parameters without  relying on any specific parametric model before optimizing the intrinsics.

The proposed approach is comprehensively evaluated against the existing generic and parametric calibration methods, across diverse lens types in both simulated and real-world tests. Results demonstrate that generic calibrations are negatively impacted by the inherent pose ambiguity problem, while parametric calibrations exhibit high dependency on specific lens types. The proposed approach is capable of efficiently and accurately estimating extrinsic parameters, resolving pose ambiguity, and mitigating risks related to overfitting and numerical instability during calibration. The approach has great potential for complex and higher-dimensional camera models with enhanced flexibility in model selection. Future work will explore its extension to non-central camera models.

\bmhead{Acknowledgements}

This work was supported in part by the National Key R\&D Program (2022YFB3903802), the National Natural Science Foundation of China (62273228), the Shanghai Jiao Tong University Scientific and Technological Innovation Funds, and the China Postdoctoral Science Foundation and Fellowship Program of CPSF (2024278).

\section*{Declarations}

\noindent \textbf{Data Availability Statement}: The data that support the findings of this study are openly available in Zenodo at \url{https://doi.org/10.5281/zenodo.16728769}.

\bibliography{MyEndNoteLibrary}

\end{document}